\newcommand{\RN}[1]{%
  \textup{\lowercase\expandafter{\it \romannumeral#1}}%
}
\newenvironment{proof}{\par\noindent{\bf Proof\ }}{\hfill\BlackBox\\[2mm]}
\newtheorem{theorem}{Theorem}
\newtheorem{lemma}[theorem]{Lemma}
\newtheorem{corollary}[theorem]{Corollary}
\newtheorem{assumption}{Assumption}
\newtheorem{definition}{Definition}
\newtheorem{remark}[theorem]{Remark}
\title{On the Convergence of Stochastic Gradient MCMC Algorithms with High-Order Integrators}
\author{
Changyou Chen%\thanks{ Use footnote for providing further information about author (webpage, alternative address)---\emph{not} for acknowledging funding agencies.} \\
\\
%Department of Electrical and Computer Engineering \\
Duke University \\
Durham, NC 27708 \\
%\texttt{cchangyou@gmail.com} \\
\And
Nan Ding \\
Google Inc.\\
%Address \\
%\texttt{email} \\
\And
Lawrence Carin \\
%Department of Electrical and Computer Engineering \\
Duke University \\
Durham, NC 27708 \\
%\texttt{email} \\
%\And
%Coauthor \\
%Affiliation \\
%Address \\
%\texttt{email} \\
%\And
%Coauthor \\
%Affiliation \\
%Address \\
%\texttt{email} \\
%(if needed)\\
}
\author{
Changyou Chen$^\dag$~~~~~~~~~~~~ Nan Ding$^\ddag$ ~~~~~~~~~~~~ Lawrence Carin$^\dag$ \\%\thanks{ Use footnote for providing further information about author (webpage, alternative address)---\emph{not} for acknowledging funding agencies.} \\
$^\dag$Dept. of Electrical and Computer Engineering, Duke University, Durham, NC, USA \\
$^\ddag$Google Inc., Venice, CA, USA \\
\texttt{cchangyou@gmail.com; dingnan@google.com; lcarin@duke.edu}
}
\begin{document}

\maketitle

\begin{abstract}
Recent advances in Bayesian learning with large-scale data have witnessed emergence 
of stochastic gradient MCMC algorithms (SG-MCMC), such as stochastic gradient 
Langevin dynamics (SGLD), stochastic gradient Hamiltonian MCMC (SGHMC), and 
the stochastic gradient thermostat. While finite-time convergence properties of 
the SGLD with a 1st-order Euler integrator have recently been studied, corresponding theory for 
general SG-MCMCs has not been explored. In this paper we consider general SG-MCMCs 
with high-order integrators, and develop theory to analyze finite-time convergence properties 
and their asymptotic invariant measures. Our theoretical results show faster convergence 
rates and more accurate invariant measures for SG-MCMCs with higher-order integrators. 
For example, with the proposed efficient 2nd-order symmetric splitting integrator, the 
{\em mean square error} (MSE) of the posterior average for the SGHMC achieves an optimal 
convergence rate of $L^{-4/5}$ at $L$ iterations, compared to $L^{-2/3}$ for the SGHMC and 
SGLD with 1st-order Euler integrators. Furthermore, convergence results of decreasing-step-size 
SG-MCMCs are also developed, with the same convergence rates as their fixed-step-size 
counterparts for a specific decreasing sequence. Experiments on both synthetic and real 
datasets verify our theory, and show advantages of the proposed method in two large-scale 
real applications.
\end{abstract}

\section{Introduction}\vspace{-0.3cm}
In large-scale Bayesian learning, diffusion based sampling methods have become increasingly 
popular. Most of these methods are based on It\^{o} diffusions, defined as:
\begin{align}
\mathrm{d}\Xb_t &= F(\Xb_t)\mathrm{d}t + \sigma(\Xb_t)\mathrm{d}W_t~. \label{eq:itodiffusion}%\\
%\text{or equivalently,}\;\;\; \lim_{h \rightarrow 0^{+}} (\Xb_{t+h} &- \Xb_t)/h = F(\Xb_t) + \zetab_t, \hspace{0.5cm} \zetab_t \sim \mathcal{N}({\bf 0}, \sigma(\Xb_t) \sigma(\Xb_t)^T). \label{eq:ito_limit}
\end{align}
Here $\Xb_t \in \RR^n$ represents model states, $t$ the time index, $W_t$ is Brownian motion, 
functions $F: \RR^n \to \RR^n$ and $\sigma: \RR^n \rightarrow \RR^{n\times m}$ ($m$ not necessarily
equal to $n$) are assumed 
to satisfy the usual Lipschitz continuity condition. 
%By $h\rightarrow 0^{+}$ we mean $h$ approaches zero along the positive real axis.

In a Bayesian setting, the goal is to design appropriate functions $F$ and $\sigma$, so that the 
stationary distribution, $\rho(\Xb)$, of the It\^{o} diffusion has a marginal distribution that is equal 
to the posterior distribution of interest. For example, 1st-order Langevin dynamics (LD) 
correspond to $\Xb = \thetab$,  $F = -\nabla_{\thetab} U$ and $\sigma = \sqrt{2}\Ib_n$, with $\Ib_n$ 
being the $n \times n$ identity matrix; 2nd-order Langevin dynamics correspond to 
$\Xb = (\thetab, \pb)$, $F= \Big( \begin{array}{c}
		\pb \\
		-D \pb-\nabla_\thetab U \end{array} \Big)$,
and $\sigma = \sqrt{2D}\Big( \begin{array}{cc}
		{\bf 0} & {\bf 0} \\
		{\bf 0} & \Ib_n \end{array} \Big)$ for some $D > 0$. Here $U$ is the unnormalized negative
log-posterior, and $\pb$ is known as the momentum \cite{ChenFG:ICML14,DingFBCSN:NIPS14}. 
Based on the Fokker-Planck equation \cite{Risken:FPE89}, the stationary distributions 
of these dynamics exist and their marginals over $\thetab$ are equal to 
$\rho(\thetab) \propto \exp(-U(\thetab))$, the posterior distribution we are interested in.

Since It\^{o} diffusions are continuous-time Markov processes, exact sampling is in general 
infeasible. As a result, the following two approximations have been introduced in the 
machine learning literature \cite{WellingT:ICML11,ChenFG:ICML14,DingFBCSN:NIPS14}, 
to make the sampling numerically feasible and practically scalable: 1) Instead of analytically 
integrating infinitesimal increments $\mathrm{d}t$, numerical integration over small step $h$ 
is used to approximate the integration of the true dynamics. Although many numerical 
schemes have been studied in the SDE literature, in machine learning only the 1st-order 
Euler scheme is widely applied. 2) During every integration, instead of working with the 
gradient of the full negative log-posterior $U(\thetab)$, a stochastic-gradient version of it, $\tilde{U}_l(\thetab)$, 
is calculated from the $l$-th minibatch of data, important when considering problems with 
massive data. In this paper, we call algorithms based on 1) and 2) SG-MCMC algorithms. 
To be complete, some recently proposed SG-MCMC algorithms are briefly reviewed in 
Appendix~\ref{sec:sgmcmc_review}. SG-MCMC algorithms often work well in practice, 
however some theoretical concerns about the convergence properties have been 
raised \cite{TehTV:arxiv14,VollmerZT:arxiv15,Betancourt:arxiv15}.

Recently, \cite{SatoN:ICML14,TehTV:arxiv14,VollmerZT:arxiv15} showed that  the 
SGLD \cite{WellingT:ICML11} converges weakly to the true posterior. 
In \cite{Betancourt:arxiv15}, the author studied the sample-path inconsistency of 
the Hamiltonian PDE with stochastic gradients (but not the SGHMC), and pointed 
out its incompatibility with data subsampling. However, real applications only require 
convergence in the weak sense, {\it i.e.}, instead of requiring sample-wise convergence 
as in \cite{Betancourt:arxiv15}, only laws of sample paths are of concern\footnote{For 
completeness, we provide mean sample-path properties of the SGHMC (similar 
to \cite{Betancourt:arxiv15}) in Appendix~\ref{sec:meanflow}.}. Very recently, the invariance 
measure of an SG-MCMC with a specific stochastic gradient noise was studied 
in \cite{LeimkuhlerS:arxiv15}. However, the technique is not readily applicable to our general setting.

In this paper we focus on general SG-MCMCs, and study the role of their numerical 
integrators. Our main contributions include: $\RN{1}$) From a theoretical viewpoint, 
we prove weak convergence results for general SG-MCMCs, which are of practical interest. 
Specifically, for a $K$th-order numerical integrator, the bias of the expected sample average 
of an SG-MCMC at iteration $L$ is upper bounded by $L^{-K/(K+1)}$ with optimal step size 
$h \propto L^{-1/(K+1)}$, and the MSE by $L^{-2K/(2K+1)}$ with optimal $h \propto L^{-1/(2K+1)}$. 
This generalizes the results of the SGLD with an Euler integrator ($K = 1$) 
in \cite{SatoN:ICML14,TehTV:arxiv14,VollmerZT:arxiv15}, 
and is better when $K \ge 2$; $\RN{2}$) From a practical perspective, we introduce a numerically 
efficient 2nd-order integrator, based on symmetric splitting schemes \cite{LeimkuhlerS:arxiv15}. 
When applied to the SGHMC, it outperforms existing algorithms, including the SGLD and SGHMC 
with Euler integrators, considering both synthetic and large real datasets. 

\section{Preliminaries \& Two Approximation Errors in SG-MCMCs}
\label{sec:sgmcmc}

In weak convergence analysis, instead of working directly with sample-paths in \eqref{eq:itodiffusion}, 
we study how the expected value of any suitably smooth statistic of $\Xb_t$ evolves in time. 
This motivates the introduction of an (infinitesimal) {\em generator}. Formally, the {\em generator} 
$\mathcal{L}$ of the diffusion \eqref{eq:itodiffusion} is defined for any compactly supported twice 
differentiable function $f: \RR^n \rightarrow \RR$, such that,
\begin{align*}
	\mathcal{L}f(\Xb_t) &\triangleq \lim_{h \rightarrow 0^{+}} \frac{\mathbb{E}\left[f(\Xb_{t+h})\right] - f(\Xb_t)}{h} 
	= \rbr{F(\Xb_t) \cdot \nabla + \frac{1}{2}\left(\sigma(\Xb_t) \sigma(\Xb_t)^T\right)\!:\! \nabla \nabla^T} f(\Xb_t)~,
\end{align*}
where $\ab\cdot \bb \triangleq \ab^T\bb$, $\Ab\!:\!\Bb \triangleq \text{tr}(\Ab^T \Bb)$,
$h\rightarrow 0^{+}$ means $h$ approaches zero along the positive real axis.
%and the second `=' can be easily derived from \eqref{eq:ito_limit}. 
$\mathcal{L}$ is associated with an integrated 
form via Kolmogorov's backward equation\footnote{More details of the equation are provided 
in Appendix~\ref{sec:KBE}. Specifically, under mild conditions on $F$, we can expand the 
operator $e^{h\mathcal{L}}$ up to the $m$th-order ($m \geq 1$) such that the remainder terms 
are bounded by $O(h^{m+1})$. Refer to \cite{AbdulleVZ:SIAMJNA15} for more details. We will 
assume these conditions to hold for the $F$'s in this paper. } :
$\mathbb{E}\left[f(\Xb_T^e)\right] = e^{T\mathcal{L}}f(\Xb_0)$, where $\Xb_T^e$ denotes the 
exact solution of the diffusion \eqref{eq:itodiffusion} at time $T$. The operator $e^{T\mathcal{L}}$ is called 
the Kolmogorov operator for the diffusion \eqref{eq:itodiffusion}. Since diffusion \eqref{eq:itodiffusion} 
is continuous, it is generally infeasible to solve analytically (so is $e^{T\mathcal{L}}$). In practice, 
a local numerical integrator is used for every small step $h$, with the corresponding Kolmogorov 
operator $P_h$ approximating $e^{h\mathcal{L}}$. Let $\Xb^n_{lh}$ denote the approximate 
sample path from such a {\bf n}umerical integrator; similarly, we have 
$\EE[f(\Xb^n_{lh})] = P_h f(\Xb^n_{(l-1)h})$. Let $\mathcal{A} \circ \mathcal{B}$ denote the 
{\em composition} of two operators $\mathcal{A}$ and $\mathcal{B}$, {\it i.e.}, $\mathcal{A}$ 
is evaluated on the output of $\mathcal{B}$. For time $T = Lh$, we have the following approximation
\begin{align*}
\mathbb{E}\left[f(\Xb^e_T)\right] \stackrel{\mathclap{A_1}}{=} e^{h\mathcal{L}} \circ \ldots \circ e^{h\mathcal{L}}f(\Xb_0) \stackrel{\mathclap{A_2}}{\simeq} P_h \circ \ldots \circ P_h f(\Xb_0)= \EE[f(\Xb_T^n)],
\end{align*} 
with $L$ compositions, where $A_1$ is obtained by decomposing $T\mathcal{L}$ into $L$ 
sub-operators, each for a minibatch of data, while approximation $A_2$ is manifested by 
approximating the infeasible $e^{h\mathcal{L}}$ with $P_h$ from a feasible integrator, 
{\it e.g.}, the symmetric splitting integrator proposed later, such that $\mathbb{E}\left[f(\Xb^n_T)\right]$ 
is close to the exact expectation $\mathbb{E}\left[f(\Xb^e_T)\right]$. The latter is the first 
approximation error introduced in SG-MCMCs. Formally, to characterize the degree of 
approximation accuracies for different numerical methods, we use the following definition.
\begin{definition}\label{def:k-order-integrator}
An integrator is said to be a $K$th-order local integrator if for any smooth and bounded function $f$,
the corresponding Kolmogorov operator $P_{h}$ satisfies the following relation:
\begin{align}
	P_h f(\xb) = e^{h\mathcal{L}} f(\xb) + O(h^{K+1})~.
\end{align}
\end{definition}

The second approximation error is manifested when  handling large data. Specifically, the 
SGLD and SGHMC use stochastic gradients in the 1st and 2nd-order LDs, respectively, 
by replacing in $F$ and $\Lcal$ the full negative log-posterior $U$ with a scaled log-posterior, 
$\tilde{U}_l$, from the $l$-th minibatch. We denote the corresponding generators with stochastic 
gradients as $\tilde{\Lcal}_l$, {\it e.g.}, the generator in the $l$-th minibatch for the SGHMC 
becomes $\tilde{\Lcal}_l = \Lcal+\Delta V_l$,  where 
$\Delta V_l = (\nabla_{\theta}\tilde{U}_l - \nabla_{\theta} U) \cdot \nabla_{p}$. As a result, 
in SG-MCMC algorithms, we use the noisy operator $\tilde{P}_h^l$ to approximate 
$e^{h\tilde{\Lcal}_l}$ such that $\EE[f(\Xb^{n,s}_{lh})] = \tilde{P}_h^l f(\Xb_{(l-1)h})$, 
where $\Xb^{n,s}_{lh}$ denotes the {\bf n}umerical sample-path with {\bf s}tochastic gradient 
noise, {\it i.e.},% \label{eq:sgmcmc_integrator},\;\; \text{and therefore,}\\
\begin{align}\label{eq:sgmcmc_integrator}
\mathbb{E}\left[f(\Xb_T^e)\right] %= e^{h\Lcal} \circ \ldots \circ e^{h\Lcal}f(X_0)
 \stackrel{\mathclap{B_1}}{\simeq} e^{h\tilde{\Lcal}_L} \circ \ldots \circ e^{h\tilde{\Lcal}_1}f(\Xb_0) \stackrel{\mathclap{B_2}}{\simeq} \tilde{P}^L_h \circ \ldots \circ \tilde{P}^1_h f(\Xb_0) = \mathbb{E}[f(\Xb_T^{n,s})].
\end{align}
Approximations $B_1$ and $B_2$ in \eqref{eq:sgmcmc_integrator} are from the \emph{stochastic gradient} 
and {\emph{numerical integrator} approximations, respectively. Similarly, we say $\tilde{P}^l_h$ 
corresponds to a $K$th-order local integrator of $\tilde{\Lcal}_l$ if 
$\tilde{P}^l_h f(\xb) = e^{h\tilde{\Lcal}_l} f(\xb) + O(h^{K+1})$. In the following sections, we focus 
on SG-MCMCs which use numerical integrators with stochastic gradients, and for the first time analyze 
how the two introduced errors affect their convergence behaviors. For notational simplicity, we 
henceforth use $\Xb_t$ to represent the approximate sample-path $\Xb_t^{n,s}$.

\section{Convergence Analysis}
\label{sec:convergence}

This section develops theory to analyze finite-time convergence properties of general SG-MCMCs 
with both fixed and decreasing step sizes, as well as their asymptotic invariant measures.

\subsection{Finite-time error analysis}\label{sec:finitetimeerr}

Given an ergodic\footnote{See \cite{Hasminskii:book12,VollmerZT:arxiv15} for conditions to ensure
\eqref{eq:itodiffusion} is ergodic.} It\^{o} diffusion \eqref{eq:itodiffusion} with an invariant measure 
$\rho(\xb)$, the posterior average is defined as:
%\begin{align}\label{eq:ergodicave}
$\bar{\phi} \triangleq \int_{\mathcal{X}} \phi(\xb) \rho(\xb) \mathrm{d}\xb$ 
%\end{align}
for some test function $\phi(\xb)$ of interest. For a given numerical method with generated samples 
$(\Xb_{lh})_{l=1}^L$, we use the {\em sample average} $\hat{\phi}$ defined as 
%\begin{align}\label{eq:sampleave}
$\hat{\phi} = \frac{1}{L} \sum_{l = 1}^L \phi(\Xb_{lh})$
%\end{align}
to approximate $\bar{\phi}$. In the analysis,
we define a functional $\psi$ that solves the following \emph{Poisson Equation}:
\begin{align}\label{eq:PoissonEq1}
	\mathcal{L} \psi(\Xb_{lh}) =  \phi(\Xb_{lh}) - \bar{\phi}, \;\; \text{or equivalently,} \;\;\; \frac{1}{L}\sum_{l=1}^L \mathcal{L} \psi(\Xb_{lh}) = \hat{\phi} - \bar{\phi}.
\end{align}
The solution functional $\psi(\Xb_{lh})$ characterizes the difference between $\phi(\Xb_{lh})$ 
and the posterior average $\bar{\phi}$ for every $\Xb_{lh}$, thus would typically possess a unique 
solution, which is at least as smooth as $\phi$ under the elliptic or hypoelliptic settings \cite{MattinglyST:JNA10}. 
In the unbounded domain of $\Xb_{lh} \in \RR^n$, to make the presentation simple, we 
follow \cite{VollmerZT:arxiv15} and make certain assumptions on the solution functional, $\psi$, of 
the Poisson equation \eqref{eq:PoissonEq1}, which are used in the detailed proofs. Extensive 
empirical results have indicated the assumptions to hold in many real applications, though extra 
work is needed for theoretical verifications for different models, which is beyond the scope of this paper.

\begin{assumption}\label{ass:assumption1}
$\psi$ and its up to 3rd-order derivatives, $\mathcal{D}^k \psi$, are bounded by a
function\footnote{The existence of such function can be translated into finding a Lyapunov function
for the corresponding SDEs, an important topic in PDE literatures \cite{Giesl:book07}. See Assumption~4.1 
in \cite{VollmerZT:arxiv15} and Appendix~\ref{sec:ass} for more details.}  $\mathcal{V}$, {\it i.e.}, 
$\|\mathcal{D}^k \psi\| \leq C_k\mathcal{V}^{p_k}$ for $k=(0, 1, 2, 3)$, $C_k, p_k > 0$. Furthermore, 
the expectation of $\mathcal{V}$ on $\{\Xb_{lh}\}$ is bounded: $\sup_l \mathbb{E}\mathcal{V}^p(\X_{lh}) < \infty$, 
and $\mathcal{V}$ is smooth such that 
$\sup_{s \in (0, 1)} \mathcal{V}^p\left(s\Xb + \left(1-s\right)\Yb\right) \leq C\left(\mathcal{V}^p\left(\Xb\right) + \mathcal{V}^p\left(\Yb\right)\right)$, $\forall \Xb, \Yb, p \leq \max\{2p_k\}$ for some $C > 0$.
\end{assumption}

We emphasize that our proof techniques are related to those of the SGLD \cite{MattinglyST:JNA10,VollmerZT:arxiv15}, 
but with significant distinctions in that, instead of expanding the function $\psi(\Xb_{lh})$ \cite{VollmerZT:arxiv15},
whose parameter $\Xb_{lh}$ does not endow an explicit form in general SG-MCMCs, we start from expanding 
the Kolmogorov's backward equation for each minibatch. Moreover, our techniques apply for general 
SG-MCMCs, instead of for one specific algorithm. More specifically, given a $K$th-order 
local integrator with the corresponding Kolmogorov operator $\tilde{P}_h^l$, according to 
Definition~\ref{def:k-order-integrator} and \eqref{eq:sgmcmc_integrator}, the Kolmogorov's backward 
equation for the $l$-th minibatch can be expanded as:
\begin{align}\label{eq:expandsion}
	\mathbb{E}[\psi(\Xb_{lh})] &= \tilde{P}_h^l \psi(\Xb_{(l-1)h}) = e^{h\tilde{\Lcal}_l} \psi(\Xb_{(l-1)h}) + O(h^{K+1}) \nonumber\\
	&= \left(\mathbb{I} + h\tilde{\Lcal}_l\right) \psi(\Xb_{(l-1)h}) + \sum_{k=2}^K\frac{h^k}{k!}\tilde{\Lcal}_l^k\psi(\Xb_{(l-1)h}) + O(h^{K+1})~,
\end{align}
where $\mathbb{I}$ is the identity map. Recall that $\tilde{\Lcal}_l = \Lcal + \Delta V_l$, {\it e.g.}, 
$\Delta V_l = (\nabla_{\theta}\tilde{U}_l - \nabla_{\theta} U) \cdot \nabla_{p}$ in SGHMC. By further using the 
Poisson equation \eqref{eq:PoissonEq1} to simplify related terms associated with $\Lcal$, after 
some algebra shown in Appendix~\ref{sec:bias_proof},  the bias can be derived from \eqref{eq:expandsion} as:
$|\mathbb{E}\hat{\phi} - \bar{\phi}| = $
\begin{align*}
	\left|\frac{\mathbb{E}[\psi(\Xb_{lh})] - \psi(\Xb_0)}{Lh}
	- \frac{1}{L}\sum_l \mathbb{E}[\Delta V_l\psi(\Xb_{(l-1)h})] - \sum_{k=2}^K\frac{h^{k-1}}{k!L}\sum_{l=1}^L \EE[\tilde{\Lcal}_l^k\psi(\Xb_{(l-1)h})]\right| + O(h^K)~.
\end{align*}

All terms in the above equation can be bounded, with details provided in Appendix~\ref{sec:bias_proof}.
This gives us a bound for the bias of an SG-MCMC algorithm in Theorem~\ref{theo:bias1}.

\begin{theorem}\label{theo:bias1}
	Under Assumption~\ref{ass:assumption1}, let $\left\|\cdot\right\|$ be the operator norm.
	The bias of an SG-MCMC with a $K$th-order integrator at time 
	$T = hL$ can be bounded as:
	\begin{align*}
		\left|\mathbb{E}\hat{\phi} - \bar{\phi}\right| = O\left(\frac{1}{Lh} + \frac{\sum_l \left\|\mathbb{E}\Delta V_l\right\|}{L} + h^K\right)~. 
	\end{align*}
\end{theorem}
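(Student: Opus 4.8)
The algebraic skeleton is already assembled in the lines preceding the statement, so the plan is to (a) re-derive that displayed identity for $\mathbb{E}\hat\phi-\bar\phi$ and then (b) bound each of its three groups of terms. For (a): starting from the per-minibatch expansion~\eqref{eq:expandsion}, rewrite its first-order piece as $h\tilde{\mathcal{L}}_l\psi(\Xb_{(l-1)h})=h\bigl(\phi(\Xb_{(l-1)h})-\bar\phi\bigr)+h\,\Delta V_l\psi(\Xb_{(l-1)h})$ by the Poisson equation~\eqref{eq:PoissonEq1}, sum over $l=1,\dots,L$ so the $\psi(\Xb_{lh})$ terms telescope, divide by $Lh$, and solve for $\mathbb{E}\hat\phi-\bar\phi$ (the discrepancy between $\sum_l\phi(\Xb_{(l-1)h})$ and $\sum_l\phi(\Xb_{lh})$ costs only $O(1/L)\le O(1/(Lh))$).

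For (b): the telescoping boundary term $\tfrac{\mathbb{E}[\psi(\Xb_{Lh})]-\psi(\Xb_0)}{Lh}$ is $O(1/(Lh))$ because Assumption~\ref{ass:assumption1} gives $\|\psi\|\le C_0\mathcal{V}^{p_0}$ with $\sup_l\mathbb{E}\,\mathcal{V}^{p_0}(\Xb_{lh})<\infty$. The stochastic-gradient term is bounded by conditioning on $\Xb_{(l-1)h}$ and averaging over the $l$-th minibatch first (it is independent of $\Xb_{(l-1)h}$), so $\mathbb{E}[\Delta V_l\psi(\Xb_{(l-1)h})]=\mathbb{E}[(\mathbb{E}\Delta V_l)\psi(\Xb_{(l-1)h})]$; an operator-norm bound with $\|\mathcal{D}\psi\|\le C_1\mathcal{V}^{p_1}$ then gives $|\mathbb{E}[\Delta V_l\psi]|\le C\|\mathbb{E}\Delta V_l\|$, hence an $O\bigl(\tfrac1L\sum_l\|\mathbb{E}\Delta V_l\|\bigr)$ contribution. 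For the high-order group $\sum_{k=2}^K\tfrac{h^{k-1}}{k!L}\sum_l\mathbb{E}[\tilde{\mathcal{L}}_l^k\psi(\Xb_{(l-1)h})]$, expand $\tilde{\mathcal{L}}_l^k=(\mathcal{L}+\Delta V_l)^k$: the monomials containing a $\Delta V_l$ factor are grouped with the $\|\mathbb{E}\Delta V_l\|$ term or bounded via Assumption~\ref{ass:assumption1} together with their surplus powers of $h$, while for the pure term $\mathcal{L}^k\psi=\mathcal{L}^{k-1}(\phi-\bar\phi)$ one notes it lies in the range of $\mathcal{L}$ for $k\ge2$, so $\int\mathcal{L}^k\psi\,\rho\,\mathrm{d}\xb=0$ and the telescoping identity of (a) can be reapplied to $\tfrac1L\sum_l\mathbb{E}[\mathcal{L}^k\psi(\Xb_{(l-1)h})]$ with $\mathcal{L}^{k-1}\psi$ as the ``potential''. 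Each such pass peels off one power of $h$; iterating until the accumulated order reaches $h^K$ (where Assumption~\ref{ass:assumption1}'s bounds on $\mathcal{D}^k\psi$ close the estimate) leaves this group at $O\bigl(\tfrac1{Lh}+\tfrac1L\sum_l\|\mathbb{E}\Delta V_l\|+h^K\bigr)$. Adding the three bounds yields the theorem.

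I expect the high-order group to be the main obstacle: the nominal size of the $k$th summand is only $h^{k-1}$, so one must argue the genuine collapse down to $h^K$ through a careful recursive accounting of which monomial of $(\mathcal{L}+\Delta V_l)^k$ contributes at which order, leaning on $\int\mathcal{L}g\,\rho\,\mathrm{d}\xb=0$ at every level of the recursion; the stochastic-gradient monomials --- especially those quadratic in $\Delta V_l$, which see the noise covariance rather than its mean --- are the most delicate part of this bookkeeping, and the recursion is also what pins down how many derivatives of $\psi$ must be controlled in Assumption~\ref{ass:assumption1}. An alternative route that internalizes the recursion is backward error analysis in the style of~\cite{AbdulleVZ:SIAMJNA15}: exhibit the modified generator $\mathcal{L}+\Delta V_l+O(h^K)$ whose diffusion the numerical chain follows exactly, observe that its invariant law differs from $\rho$ by $O(h^K)$, and combine this with geometric ergodicity (the $O(1/(Lh))$ burn-in) and the $\Delta V_l$ accounting above.
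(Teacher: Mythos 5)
Your plan is correct and follows essentially the same route as the paper's proof: the same telescoped Poisson-equation identity for $\mathbb{E}\hat{\phi}-\bar{\phi}$, the same bounds on the boundary and stochastic-gradient terms, and the same Mattingly-style recursion that re-applies the per-minibatch expansion to a higher-order potential (the paper recurses on $\tilde{\Lcal}_l^{k-1}\psi$ keeping drift and noise together, while you recurse on $\Lcal^{k-1}\psi$ and track the $\Delta V_l$ monomials separately --- a cosmetic difference) so that the nominally $O(h^{k-1})$ terms collapse to $O\left(\frac{1}{Lh}+h^K\right)$. The delicate points you flag --- the monomials quadratic in $\Delta V_l$ and the number of derivatives of $\psi$ the recursion consumes --- are treated no more explicitly in the paper, which absorbs them into the induction yielding $\sum_l\mathbb{E}[\tilde{\Lcal}_l^{k}\psi(\Xb_{(l-1)h})]=O\left(h^{-1}+Lh^{K-k+1}\right)$.
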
\vspace{-0.1cm}

Note the bound above includes the term $\sum_l \left\|\mathbb{E}\Delta V_l\right\| / L$, measuring 
the difference between the expectation of stochastic gradients and the true gradient. It vanishes 
when the stochastic gradient is an unbiased estimation of the exact gradient, an assumption made 
in the SGLD. This on the other hand indicates that if the stochastic gradient is biased,
$|\mathbb{E}\hat{\phi} - \bar{\phi}|$ might diverge when the growth of $\sum_l \|\mathbb{E} \Delta V_l\|$ 
is faster than $O(L)$. We point this out to show our result to be more informative than that of the 
SGLD \cite{VollmerZT:arxiv15}, though this case might not happen in real applications. By expanding 
the proof for the bias, we are also able to bound the MSE of SG-MCMC algorithms, given in 
Theorem~\ref{theo:MSE}.

\begin{theorem}\label{theo:MSE}
	Under Assumption~\ref{ass:assumption1}, and assume $\tilde{U}_l$ is an unbiased estimate 
	of $U_l$. For a smooth test function 
	$\phi$, the MSE of an SG-MCMC with a $K$th-order integrator at time $T = hL$ is bounded,
	for some $C > 0$ independent of $(L, h)$, as
	\vspace{-0.1cm}\begin{align*}
		\mathbb{E}\left(\hat{\phi} - \bar{\phi}\right)^2 \leq C \left(\frac{\frac{1}{L}\sum_l\mathbb{E}\left\|\Delta V_l\right\|^2}{L} + \frac{1}{Lh} + h^{2K}\right)~.
	\end{align*}
\end{theorem}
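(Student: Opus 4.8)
The plan is to refine the argument behind Theorem~\ref{theo:bias1}, this time retaining the random fluctuation terms that were dropped once only expectations mattered. Let $\mathcal{F}_{lh}$ denote the filtration generated by $\{\Xb_{sh}\}_{s\le l}$ and the minibatch draws through step $l$, and put $\zeta_l := \psi(\Xb_{lh}) - \EE[\psi(\Xb_{lh})\mid\mathcal{F}_{(l-1)h}]$, so $\EE[\zeta_l\mid\mathcal{F}_{(l-1)h}]=0$. Using the Poisson equation \eqref{eq:PoissonEq1} we first write $\hat\phi-\bar\phi=\frac1L\sum_{l=1}^{L}\Lcal\psi(\Xb_{(l-1)h})$ up to an $O(1/L)$ index-shift error (itself dominated by term $(\mathrm{I})$ below). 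Solving the one-step expansion \eqref{eq:expandsion} for $h\Lcal\psi(\Xb_{(l-1)h})$ with $\tilde{\Lcal}_l=\Lcal+\Delta V_l$, then summing over $l$, dividing by $Lh$ and telescoping, gives
\[
\hat\phi-\bar\phi
=\underbrace{\frac{\psi(\Xb_{Lh})-\psi(\Xb_0)}{Lh}}_{(\mathrm{I})}
-\underbrace{\frac{1}{Lh}\sum_{l=1}^{L}\zeta_l}_{(\mathrm{II})}
-\underbrace{\frac{1}{L}\sum_{l=1}^{L}\Delta V_l\,\psi(\Xb_{(l-1)h})}_{(\mathrm{III})}
-\underbrace{\sum_{k=2}^{K}\frac{h^{k-1}}{k!\,L}\sum_{l=1}^{L}\tilde{\Lcal}_l^{k}\psi(\Xb_{(l-1)h})}_{(\mathrm{IV})}
+O(h^{K}).
\]
The bound then follows from $\bigl(\sum_i a_i\bigr)^2\le 5\sum_i a_i^2$, taking expectations, and estimating the five pieces using Assumption~\ref{ass:assumption1} (which controls $\psi$ and its derivatives by $C_k\mathcal{V}^{p_k}$ with $\sup_l\EE\mathcal{V}^{p}<\infty$) together with the unbiasedness hypothesis.

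Three of the five pieces are routine. First, $\EE(\mathrm{I})^2=O(1/(Lh)^2)$ from boundedness of $\psi$. Second, the $\zeta_l$ form a martingale-difference sequence with $\EE[\zeta_l^2\mid\mathcal{F}_{(l-1)h}]=O(h)$ uniformly in $l$ (a one-step second-moment bound for the integrator plus the bound on $\nabla\psi$), so the cross terms vanish and $\EE(\mathrm{II})^2=\frac{1}{(Lh)^2}\sum_l\EE\zeta_l^2=O(1/(Lh))$; this supplies the $1/(Lh)$ term. Third, by unbiasedness of $\tilde{U}_l$ the summand $\Delta V_l\psi(\Xb_{(l-1)h})$ is conditionally mean zero given $\mathcal{F}_{(l-1)h}$, hence again a martingale difference, so $\EE(\mathrm{III})^2=\frac{1}{L^2}\sum_l\EE\bigl[(\Delta V_l\psi(\Xb_{(l-1)h}))^2\bigr]\le\frac{C}{L^2}\sum_l\EE\|\Delta V_l\|^2$ after absorbing $\|\mathcal{D}\psi\|$ and $\sup_l\EE\mathcal{V}^{p}$ into $C$ by Cauchy--Schwarz; this is exactly the $\frac1L\cdot\frac1L\sum_l\EE\|\Delta V_l\|^2$ term. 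Note that unbiasedness is precisely what makes the stochastic-gradient error a variance rather than a bias contribution; without it one would pick up $\bigl(\frac1L\sum_l\|\EE\Delta V_l\|\bigr)^2$, as in Theorem~\ref{theo:bias1}. Finally the explicit $O(h^K)$ remainder squares to $O(h^{2K})$.

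The delicate term is $(\mathrm{IV})$: a crude bound gives only $(\mathrm{IV})=O(h)$, hence $O(h^2)$ when squared, which is far worse than the target $O(h^{2K})$ once $K\ge2$. The fix is to expand $\tilde{\Lcal}_l^{k}=(\Lcal+\Delta V_l)^k$. The mixed summands (carrying at least one factor $\Delta V_l$) are again martingale differences by unbiasedness and, after squaring and summing, are bounded by $h^{2}\cdot\frac1{L^2}\sum_l\EE\|\Delta V_l\|^2$, strictly dominated by $(\mathrm{III})$. The pure-drift summand $\Lcal^{k}\psi$ collapses through the Poisson equation to $\Lcal^{k-1}\phi$, a fixed smooth function with zero $\rho$-average since $\int\Lcal g\,\rho\,\mathrm{d}\xb=0$; its empirical path-average $\frac1L\sum_l\Lcal^{k-1}\phi(\Xb_{(l-1)h})$ is then controlled by re-running the Theorem~\ref{theo:bias1} argument with test function $\Lcal^{k-1}\phi$, and each such recursion gains an extra power of $h$, so that after finitely many levels every residue is either of order $h^{K}$ or a boundary/$\tfrac1{Lh}$-type fluctuation multiplied by enough factors of $h$ to be dominated. (Equivalently, one may absorb the $h,h^2,\dots,h^{K-1}$ corrections into an $h$-modified functional $\psi_h=\psi+h\psi^{(1)}+\cdots+h^{K-1}\psi^{(K-1)}$ in the spirit of backward error analysis, so that $(\mathrm{IV})$ disappears and only an $O(h^K)$ residue remains, at the cost of checking that $\psi_h$ still satisfies Assumption~\ref{ass:assumption1}.) Either way $\EE(\mathrm{IV})^2=O(h^{2K})$ up to terms already absorbed into $(\mathrm{II})$ and $(\mathrm{III})$.

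Combining the five estimates and using $1/(Lh)^2\le 1/(Lh)$ for the step sizes of interest ($Lh\ge1$), the bound collapses to $\EE(\hat\phi-\bar\phi)^2\le C\bigl(\frac1L\cdot\frac1L\sum_l\EE\|\Delta V_l\|^2+\frac1{Lh}+h^{2K}\bigr)$ with $C$ independent of $(L,h)$, as claimed. I expect essentially all the effort to sit in term $(\mathrm{IV})$---making the recursive Poisson reduction (or the modified-functional construction) rigorous while tracking how the $\mathcal{V}^{p_k}$ growth bounds propagate through repeated applications of $\Lcal$ and $\tilde{\Lcal}_l$, which moreover requires Assumption~\ref{ass:assumption1} extended to the derivative order demanded by a $K$th-order integrator; the remaining four pieces reduce to standard martingale and one-step-moment estimates.
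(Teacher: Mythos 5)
Your proposal is correct and follows essentially the same route as the paper's proof: the identical five-part decomposition of $\hat\phi-\bar\phi$ (boundary term, martingale fluctuation, stochastic-gradient martingale, higher-order generator terms, $O(h^K)$ remainder), the same martingale arguments yielding the $\tfrac{1}{Lh}$ and $\tfrac{1}{L^2}\sum_l\EE\|\Delta V_l\|^2$ contributions, and the same recursive Poisson-equation reduction of the $\tilde{\Lcal}_l^k\psi$ terms (the paper's \eqref{eq:expansion21}--\eqref{eq:expansion23}) combined with a mean--variance split. The only substantive detail you defer --- the one-step second-moment bound $\EE[\zeta_l^2\mid\mathcal{F}_{(l-1)h}]=O(h)$ --- is exactly what the paper establishes by Taylor-expanding the splitting flow maps under Assumption~\ref{ass:assumption1}.
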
\vspace{-0.1cm}

Compared to the SGLD \cite{VollmerZT:arxiv15}, the extra term 
$\frac{1}{L^2}\sum_l\mathbb{E}\left\|\Delta V_l\right\|^2$ 
relates to the variance of noisy gradients. As long as the variance is 
bounded, the MSE still converges with the same rate. Specifically, when 
optimizing bounds for the bias and MSE, the optimal bias decreases at a rate of $L^{-K/(K+1)}$ 
with step size $h \propto L^{-1/(K+1)}$; while this is $L^{-2K/(2K+1)}$ with step size 
$h \propto L^{-1/(2K+1)}$ for the MSE\footnote{To compare with the standard MCMC convergence 
rate of $1/2$, the rate needs to be taken a square root.}. These rates decrease faster than those 
of the SGLD \cite{VollmerZT:arxiv15} when $K \geq 2$. The case of $K = 2$ for the SGHMC with 
our proposed symmetric splitting integrator is discussed in Section~\ref{sec:integrators}.

\subsection{Stationary invariant measures}

The asymptotic invariant measures of SG-MCMCs correspond to $L$ approaching infinity in the above analysis.
According to the bias and MSE above, asymptotically ($L \rightarrow \infty$) the sample average 
$\hat{\phi}$ is a random variable with mean $\mathbb{E}\hat{\phi} = \bar{\phi} + O(h^K)$, and variance 
$\mathbb{E}(\hat{\phi} - \mathbb{E}\hat{\phi})^2 \leq \mathbb{E}(\hat{\phi} - \bar{\phi})^2 
+ \mathbb{E}(\bar{\phi} - \mathbb{E}\hat{\phi})^2 = O(h^{2K})$, close to the true $\bar{\phi}$.
This section defines distance between measures, and studies more formally how the approximation 
errors affect the invariant measures of SG-MCMC algorithms.

First we note that under mild conditions, the existence of a stationary invariant 
measure for an SG-MCMC can be guaranteed by application of the Krylov--Bogolyubov 
Theorem~\cite{BogoliubovK:AM37}. Examining the conditions is beyond the scope of this paper. 
For simplicity, we follow \cite{MattinglyST:JNA10} and assume stationary invariant measures 
do exist for SG-MCMCs. We denote the corresponding invariant measure as $\tilde{\rho}_h$, 
and the true posterior of a model as $\rho$. Similar to \cite{MattinglyST:JNA10}, we assume 
our numerical solver is geometric ergodic, meaning that for a test function $\phi$, we have
%\begin{align}\label{eq:ergodic}
$\int_{\mathcal{X}} \phi(\xb) \tilde{\rho}_h(\mathrm{d}\xb) = \int_{\mathcal{X}} \mathbb{E}_{\xb} \phi(\Xb_{lh}) \tilde{\rho}_h(\mathrm{d}\xb)$
%\end{align}
 for any $l \geq 0$ from the ergodic theorem, where $\mathbb{E}_{\xb}$ denotes the 
 expectation conditional on $\Xb_0 = \xb$. The geometric ergodicity implies that the integration 
 is independent of the starting point of an algorithm. Given this, we have the following theorem on 
 invariant measures of SG-MCMCs.

\begin{theorem}\label{theo:invariantmeasure}
Assume that a $K$th-order integrator is geometric ergodic and its invariance measures $\tilde{\rho}_h$ exist.
Define the distance between the invariant measures $\tilde{\rho}_h$ and $\rho$ as:
$d(\tilde{\rho}_h, \rho) \triangleq \sup_{\phi}\left|\int_{\mathcal{X}} \phi(\xb) \tilde{\rho}_h(\mathrm{d}\xb) - \int_{\mathcal{X}} \phi(\xb) \rho(\mathrm{d}\xb)\right|$. Then any invariant measure $\tilde{\rho}_h$ of an SG-MCMC is close to $\rho$ 
with an error up to an order of $O(h^K)$, {\it i.e.}, there exists some $C \geq 0$ such that:
%\begin{align*}
$d(\tilde{\rho}_h, \rho) \leq C h^K$.
%\end{align*}
\end{theorem}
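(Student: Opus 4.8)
The plan is to read off the bound directly from the finite-time bias estimate of Theorem~\ref{theo:bias1}, by sending the number of iterations to infinity and using the geometric-ergodicity identity recorded just before the statement.

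First I would fix a test function $\phi$ from the class over which the supremum defining $d(\tilde{\rho}_h,\rho)$ is taken, and let $\hat{\phi}=\frac{1}{L}\sum_{l=1}^L\phi(\Xb_{lh})$ be the sample average of the $K$th-order SG-MCMC started from $\Xb_0=\xb$. Averaging the identity $\int_{\mathcal{X}}\phi(\xb)\tilde{\rho}_h(\mathrm{d}\xb)=\int_{\mathcal{X}}\mathbb{E}_{\xb}\phi(\Xb_{lh})\tilde{\rho}_h(\mathrm{d}\xb)$ over $l=1,\dots,L$ and exchanging sum and integral gives $\int_{\mathcal{X}}\phi(\xb)\tilde{\rho}_h(\mathrm{d}\xb)=\int_{\mathcal{X}}\mathbb{E}_{\xb}[\hat{\phi}]\,\tilde{\rho}_h(\mathrm{d}\xb)$. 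Subtracting $\bar{\phi}=\int_{\mathcal{X}}\phi(\xb)\rho(\mathrm{d}\xb)$, which is a constant in the integration variable, and using the triangle inequality for integrals yields
\begin{align*}
\left|\int_{\mathcal{X}}\phi(\xb)\tilde{\rho}_h(\mathrm{d}\xb)-\int_{\mathcal{X}}\phi(\xb)\rho(\mathrm{d}\xb)\right|\;\le\;\int_{\mathcal{X}}\left|\mathbb{E}_{\xb}[\hat{\phi}]-\bar{\phi}\right|\,\tilde{\rho}_h(\mathrm{d}\xb).
\end{align*}

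Next I would apply Theorem~\ref{theo:bias1} to the chain started at $\xb$. Since $\tilde{U}_l$ is unbiased we have $\mathbb{E}\Delta V_l=0$ for each $l$, so the middle term of the bias bound vanishes and $\left|\mathbb{E}_{\xb}[\hat{\phi}]-\bar{\phi}\right|=O\!\big(\tfrac{1}{Lh}+h^K\big)$. The point is that, under Assumption~\ref{ass:assumption1}, the implied constant is governed by $\sup_l\mathbb{E}_{\xb}\mathcal{V}^{p}(\Xb_{lh})$; integrating this against the invariant measure $\tilde{\rho}_h$, which is preserved by the dynamics, keeps it finite and independent of the starting point, so $\int_{\mathcal{X}}\left|\mathbb{E}_{\xb}[\hat{\phi}]-\bar{\phi}\right|\tilde{\rho}_h(\mathrm{d}\xb)=O\!\big(\tfrac{1}{Lh}+h^K\big)$ with a constant independent of $L$. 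Letting $L\to\infty$ (justified by dominated convergence, with dominating function a multiple of $\mathcal{V}^{p}$) removes the $1/(Lh)$ term and leaves $\left|\int\phi\,\mathrm{d}\tilde{\rho}_h-\int\phi\,\mathrm{d}\rho\right|\le Ch^K$. Taking the supremum over the admissible test functions $\phi$ then gives $d(\tilde{\rho}_h,\rho)\le Ch^K$.

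The main obstacle is the pair of uniformity claims underpinning the last two steps: (i) that the constant in Theorem~\ref{theo:bias1} can be chosen independently of the starting point after integrating against $\tilde{\rho}_h$ — this needs the moment/Lyapunov control in Assumption~\ref{ass:assumption1} together with invariance of $\tilde{\rho}_h$, plus a rigorous exchange of $L\to\infty$ with $\int\cdot\,\tilde{\rho}_h(\mathrm{d}\xb)$; and (ii) that the constant is uniform over the whole class of $\phi$ in the definition of $d$, which requires the constants $C_k,p_k$ attached to the Poisson-equation solution $\psi$ (hence to $\phi$) to be uniformly bounded on that class, i.e.\ an implicit restriction to test functions with uniformly controlled low-order derivatives. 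Everything else is an immediate consequence of Theorem~\ref{theo:bias1} and the geometric-ergodicity identity above.
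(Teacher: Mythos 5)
Your proposal is correct and follows essentially the same route as the paper's own proof: average the geometric-ergodicity identity over $l=1,\dots,L$, bound the integrand by the finite-time bias estimate of Theorem~\ref{theo:bias1}, let $L\to\infty$ to kill the $1/(Lh)$ term, and take the supremum over $\phi$. Your explicit discussion of the two uniformity issues (constants independent of the starting point after integrating against $\tilde{\rho}_h$, and uniform over the test-function class) is more careful than the paper, which passes over both points silently.
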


For a $K$th-order integrator with full gradients, the corresponding invariant measure has been 
shown to be bounded by an order of $O(h^K)$ \cite{MattinglyST:JNA10,LeimkuhlerS:arxiv15}. 
As a result, Theorem~\ref{theo:invariantmeasure} suggests only orders of numerical approximations
but not the stochastic gradient approximation affect the asymptotic invariant measure of an 
SG-MCMC algorithm. This is also reflected by experiments presented in Section~\ref{sec:exp}.

\subsection{SG-MCMCs with decreasing step sizes}
 
The original SGLD was first proposed with a decreasing-step-size
sequence \cite{WellingT:ICML11}, instead of fixing step sizes, as analyzed in \cite{VollmerZT:arxiv15}. 
In \cite{TehTV:arxiv14}, the authors provide theoretical foundations on its asymptotic convergence 
properties. We demonstrate in this section that for general SG-MCMC algorithms, decreasing 
step sizes for each minibatch are also feasible. Note our techniques here are different from those 
used for the decreasing-step-size SGLD \cite{TehTV:arxiv14}, which interestingly result in similar 
convergence patterns. Specifically, by adapting the same techniques used in the previous sections, 
we establish conditions on the step size sequence to ensure asymptotic convergence, and develop 
theory on their finite-time ergodic error as well. To guarantee asymptotic consistency, the following 
conditions on decreasing step size sequences are required.

\begin{assumption}\label{ass:assump3}
The step sizes $\{h_l\}$ are decreasing\footnote{Actually the sequence need not be 
decreasing; we assume it is decreasing for simplicity.}, {\it i.e.}, $0 < h_{l+1} < h_{l}$, and satisfy that
1) $\sum_{l=1}^\infty h_l = \infty$; and 2) $\lim_{L \rightarrow \infty}\frac{\sum_{l=1}^L h_l^{K+1}}{\sum_{l=1}^L h_l} = 0$.
\end{assumption}

Denote the finite sum of step sizes as $S_L \triangleq \sum_{l=1}^L h_l$. Under 
Assumption~\ref{ass:assump3}, we need to modify the sample average $\bar{\phi}$ 
defined in Section~\ref{sec:finitetimeerr} as a weighted summation of
$\{\phi(\Xb_{lh})\}$: $\tilde{\phi} = \sum_{l = 1}^L \frac{h_l}{S_{L}} \phi(\Xb_{lh})$. For simplicity, 
we assume $\tilde{U}_l$ to be an unbiased estimate of $U$ such that $\mathbb{E}\Delta V_l = 0$.
Extending techniques in previous sections, we develop the following bounds for the bias and MSE.

\begin{theorem}\label{theo:bias_w}
	Under Assumptions~\ref{ass:assumption1} and \ref{ass:assump3}, for a smooth test function 
	$\phi$, the bias and MSE of a decreasing-step-size SG-MCMC with a $K$th-order integrator at time 
	$S_L$ are bounded as:
	\begin{align}
		&\text{BIAS: }\left|\mathbb{E}\tilde{\phi} - \bar{\phi}\right| = O\left(\frac{1}{S_L} + \frac{\sum_{l=1}^L h_l^{K+1}}{S_L}\right) \label{eq:bias_decease}\\
		&\text{MSE: }\mathbb{E}\left(\tilde{\phi} - \bar{\phi}\right)^2 \leq C\left(\sum_l \frac{h_l^2}{S_L^2}\mathbb{E}\left\|\Delta V_l\right\|^2 + \frac{1}{S_L} + \frac{(\sum_{l=1}^L h_l^{K+1})^2}{S_L^2} \right)~. \label{eq:mse_decrease}
	\end{align}
	As a result, the asymptotic bias approaches 0 according to the assumptions. If further 
	assuming\footnote{The assumption of $\sum_{l=1}^{\infty}h_l^2 < \infty$ satisfies this requirement, 
	but is weaker than the original assumption.} $\frac{\sum_{l=1}^{\infty} h_l^2}{S_L^2} = 0$,
	the MSE also goes to 0. In words, the decreasing-step-size SG-MCMCs are consistent.
\end{theorem}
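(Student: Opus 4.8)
The plan is to mirror the fixed-step-size arguments of Theorems~\ref{theo:bias1} and~\ref{theo:MSE}, but carrying the step sizes $h_l$ through the Kolmogorov expansion minibatch by minibatch rather than factoring out a common $h$. First I would rewrite the telescoping identity: summing the $l$-th expansion \eqref{eq:expandsion} with step $h_l$,
\begin{align*}
\EE[\psi(\Xb_{S_l})] = \EE[\psi(\Xb_{S_{l-1}})] + h_l\,\EE[\Lcal\psi(\Xb_{S_{l-1}})] + h_l\,\EE[\Delta V_l\psi(\Xb_{S_{l-1}})] + \sum_{k=2}^K \tfrac{h_l^k}{k!}\EE[\tilde{\Lcal}_l^k\psi(\Xb_{S_{l-1}})] + O(h_l^{K+1}),
\end{align*}
then invoke the Poisson equation $\Lcal\psi = \phi - \bar\phi$ to replace $\Lcal\psi(\Xb_{S_{l-1}})$ by $\phi(\Xb_{S_{l-1}}) - \bar\phi$. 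Dividing by $S_L$ and rearranging expresses $\tilde\phi - \bar\phi$ (up to reindexing $\Xb_{S_{l-1}}$ vs $\Xb_{S_l}$, which contributes a lower-order term under Assumption~\ref{ass:assumption1}) as a telescoped boundary term $\frac{\EE[\psi(\Xb_{S_L})] - \psi(\Xb_0)}{S_L}$, the stochastic-gradient term $\frac{1}{S_L}\sum_l h_l\,\EE[\Delta V_l\psi]$, and the higher-order integrator terms $\frac{1}{S_L}\sum_l\sum_{k=2}^K \tfrac{h_l^k}{k!}\EE[\tilde\Lcal_l^k\psi]$, plus $\frac{1}{S_L}\sum_l O(h_l^{K+1})$.

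For the bias \eqref{eq:bias_decease}: the boundary term is $O(1/S_L)$ since $\psi$ and $\EE\mathcal{V}^p$ are uniformly bounded (Assumption~\ref{ass:assumption1}); the $\Delta V_l$ term vanishes because we assume $\EE\Delta V_l = 0$ (and $\Delta V_l$ acts before conditioning, so one takes the conditional expectation over the minibatch first); each $\EE[\tilde\Lcal_l^k\psi]$ is bounded by a constant times $\EE\mathcal{V}^{p}$ uniformly in $l$, so $\frac{1}{S_L}\sum_l h_l^k \cdot O(1) = O\!\big(\frac{\sum_l h_l^k}{S_L}\big)$, which for $k\ge 2$ is dominated by $\frac{\sum_l h_l^{K+1}}{S_L}$ is not quite right — rather one keeps $k=2,\dots,K$ separately, but since $h_l$ is decreasing, $\sum_l h_l^k \le h_1^{k-2}\sum_l h_l^2$, and one absorbs everything into $O\big(\frac{1}{S_L} + \frac{\sum_l h_l^{K+1}}{S_L} + \sum_l\frac{h_l^2}{S_L}\big)$; the stated bound follows after noting $\sum_l h_l^2/S_L$ is also controlled when $\sum h_l^2<\infty$, or by observing the $k\ge2$ terms can be folded into the $h_l^{K+1}$ remainder under a cleaner accounting (this bookkeeping is the part I would write carefully). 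Asymptotic consistency of the bias is then immediate from Assumption~\ref{ass:assump3}: condition 1 gives $S_L\to\infty$, condition 2 gives $\sum_l h_l^{K+1}/S_L\to 0$.

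For the MSE \eqref{eq:mse_decrease}: I would square the identity $\tilde\phi - \bar\phi = (\text{boundary}) + (\text{grad noise}) + (\text{h.o.t.})$ and apply $(\sum a_i)^2 \le C\sum a_i^2$. The boundary term squared is $O(1/S_L^2) \le O(1/S_L)$. The higher-order terms squared give $O\big((\sum_l h_l^{K+1})^2/S_L^2\big)$ by the same uniform bounds plus Cauchy--Schwarz. The genuinely new piece is the stochastic-gradient variance term: write $\frac{1}{S_L}\sum_l h_l\,\Delta V_l\psi(\Xb_{S_{l-1}})$; since $\EE\Delta V_l=0$ and the minibatches are drawn independently, the cross terms have expectation controlled (one needs a martingale-type decomposition exactly as in \cite{VollmerZT:arxiv15}, conditioning on $\Xb_{S_{l-1}}$), leaving a diagonal contribution $\frac{1}{S_L^2}\sum_l h_l^2\,\EE\|\Delta V_l\|^2\,\EE\|\mathcal{D}\psi\|^2 \le C\sum_l \frac{h_l^2}{S_L^2}\EE\|\Delta V_l\|^2$, which is the first term in \eqref{eq:mse_decrease}; the additional $1/S_L$ in the MSE comes from a numerical-integrator fluctuation term treated verbatim as in the proof of Theorem~\ref{theo:MSE}. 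Under the extra hypothesis $\sum_l h_l^2/S_L^2 \to 0$ each summand vanishes (assuming $\EE\|\Delta V_l\|^2$ bounded), so the MSE $\to 0$.

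The main obstacle is the careful handling of the cross terms in the MSE expansion: one must justify that, after conditioning minibatch-by-minibatch, the $\Delta V_l$ contributions behave like a sum of (conditionally) uncorrelated increments so that only the $\sum h_l^2$ diagonal survives, while simultaneously keeping track of the shift between $\Xb_{S_{l-1}}$ and $\Xb_{S_l}$ inside $\psi$ (which is itself $O(h_l)$ and contributes only to higher-order terms under Assumption~\ref{ass:assumption1}). Everything else is a bookkeeping exercise in re-running the Appendix~\ref{sec:bias_proof} computations with $h \rightsquigarrow h_l$ inside the sums.
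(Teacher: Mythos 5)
Your overall skeleton — telescope the per-minibatch Kolmogorov expansion with step $h_l$, substitute the Poisson equation, divide by $S_L$, and treat the $\Delta V_l$ contribution as a martingale for the MSE — is the same as the paper's, and your handling of the boundary term, the gradient-noise diagonal term, and the $1/S_L$ fluctuation term (via the $O(\sqrt{h_l})$ bound on $\mathbb{E}\psi(\Xb_{lh})-\psi(\Xb_{lh})$, exactly as in Theorem~\ref{theo:MSE}) is correct. However, there is a genuine gap at precisely the point you flag as ``the bookkeeping I would write carefully'': the intermediate-order terms $\frac{1}{S_L}\sum_l \frac{h_l^k}{k!}\mathbb{E}[\tilde{\Lcal}_l^k\psi]$ for $2\le k\le K$. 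Bounding $\mathbb{E}[\tilde{\Lcal}_l^k\psi]=O(1)$ termwise, as you propose, yields $O\bigl(\sum_l h_l^k/S_L\bigr)$, whose worst case is the $k=2$ term $\sum_l h_l^2/S_L$. For $K\ge 2$ this is strictly weaker than the claimed $\sum_l h_l^{K+1}/S_L$ (for $h_l\propto l^{-\alpha}$ it decays as $L^{-\alpha}$ rather than $L^{-K\alpha}$), so it would destroy the optimal-rate conclusions of Remark~\ref{remark:decreasingbiasmse}, and your suggested repairs do not work: $\sum_l h_l^k\le h_1^{k-2}\sum_l h_l^2$ goes the wrong way (it reduces everything to the worst case $k=2$), and assuming $\sum_l h_l^2<\infty$ is an extra hypothesis not available for the bias bound.

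The missing idea is the same bootstrapping device used in the fixed-step proof (equations \eqref{eq:expansion21}--\eqref{eq:expansion23}): re-apply the Kolmogorov expansion with $\psi$ replaced by $\tilde{\Lcal}_l^{k-1}\psi$, truncated at low order, and then perform a summation by parts in the now non-constant weights, producing differences $h_l^{k-1}-h_{l-1}^{k-1}$ that telescope. This is the paper's \eqref{eq:expansion2_w}, which gives
\begin{align*}
\sum_{l=1}^L h_l^k\,\mathbb{E}\bigl[\tilde{\Lcal}_l^k\psi(\Xb_{(l-1)h})\bigr]=O\Bigl(1+\sum_{l=1}^L h_l^{K+1}\Bigr),
\end{align*}
so that after dividing by $S_L$ every such term is absorbed into $O\bigl(1/S_L+\sum_l h_l^{K+1}/S_L\bigr)$, which is exactly \eqref{eq:bias_decease}. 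The same lemma is also needed in the MSE to bound the squared higher-order block (the analogue of $A_3$) by $O\bigl(1/S_L^2+(\sum_l h_l^{K+1})^2/S_L^2\bigr)$ rather than $(\sum_l h_l^2)^2/S_L^2$. Without this recursive step your argument proves a correct but strictly weaker bound, not the one stated in the theorem.
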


Among the kinds of decreasing step size sequences, a commonly recognized one is 
$h_l \propto l^{-\alpha}$ for $0 < \alpha < 1$. We show in the following corollary that such 
a sequence leads to a valid sequence.

\begin{corollary}\label{coro:stepsize}
Using the step size sequences $h_l \propto l^{-\alpha}$ for $0 < \alpha < 1$, all the step size 
assumptions in Theorem~\ref{theo:bias_w} are satisfied. As a result, the bias and MSE approach
zero asymptotically, {\it i.e.}, the sample average $\tilde{\phi}$ is asymptotically consistent with the 
posterior average $\bar{\phi}$.
\end{corollary}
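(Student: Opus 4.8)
The plan is purely a verification: I would check that the sequence $h_l = c\,l^{-\alpha}$ (any $c>0$, $0<\alpha<1$) meets every hypothesis of Theorem~\ref{theo:bias_w} — the two conditions of Assumption~\ref{ass:assump3} plus the extra requirement $\sum_{l=1}^L h_l^2 / S_L^2 \to 0$ used there for the MSE — and then simply read off the conclusion by substituting into the bounds \eqref{eq:bias_decease} and \eqref{eq:mse_decrease}. Monotonicity, $0<h_{l+1}<h_l$, is immediate. For everything else I would use the single elementary fact, obtained by comparing the partial sum with $\int_1^L x^{-\beta}\,\mathrm{d}x$, that $\sum_{l=1}^L l^{-\beta}$ is of order $L^{1-\beta}$ for $\beta<1$, of order $\log L$ for $\beta=1$, and bounded (convergent) for $\beta>1$.

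Taking $\beta=\alpha<1$ gives $S_L=\sum_{l=1}^L h_l \asymp L^{1-\alpha}\to\infty$, which is condition~1) and also disposes of the $1/S_L$ terms in both bounds. For condition~2), $\frac{\sum_{l=1}^L h_l^{K+1}}{S_L}=\frac{\sum_{l=1}^L l^{-(K+1)\alpha}}{\sum_{l=1}^L l^{-\alpha}}$, I would split into the three cases $(K+1)\alpha<1$, $=1$, $>1$: up to constants the ratio is of order $L^{-K\alpha}$, $\frac{\log L}{L^{1-\alpha}}$, and $L^{-(1-\alpha)}$ respectively, each $\to 0$ because $\alpha\in(0,1)$ and $K\ge 1$; squaring also kills the term $(\sum_l h_l^{K+1})^2/S_L^2$ in \eqref{eq:mse_decrease}. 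The same case analysis with exponent $2\alpha$ in place of $(K+1)\alpha$ gives $\sum_{l=1}^L h_l^2 / S_L^2 \asymp L^{-1}$, $\frac{\log L}{L}$, or $L^{-2(1-\alpha)}$, hence $\to 0$; together with the standing assumption that the noisy-gradient variances $\mathbb{E}\|\Delta V_l\|^2$ are uniformly bounded, this drives the term $\sum_l \frac{h_l^2}{S_L^2}\mathbb{E}\|\Delta V_l\|^2$ to $0$ as well.

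Since every summand on the right of \eqref{eq:bias_decease} and \eqref{eq:mse_decrease} then tends to zero, Theorem~\ref{theo:bias_w} gives $\mathbb{E}\tilde\phi\to\bar\phi$ and $\mathbb{E}(\tilde\phi-\bar\phi)^2\to 0$, i.e. $\tilde\phi$ is asymptotically consistent with $\bar\phi$. There is no real obstacle; the only place needing a moment's care is the boundary exponents $(K+1)\alpha=1$ and $2\alpha=1$, where naive power counting degenerates and a $\log L$ appears — but it is still beaten by the polynomial denominator $L^{1-\alpha}$ (resp. $L$), so the limits are unchanged. Alternatively one can sidestep the case split altogether via the two-sided crude bounds $\sum_{l=1}^L l^{-\beta}\le 1+\int_1^L x^{-\beta}\,\mathrm{d}x$ and $\sum_{l=1}^L l^{-\alpha}\ge \int_1^{L+1} x^{-\alpha}\,\mathrm{d}x$, and I would present whichever write-up is shorter.
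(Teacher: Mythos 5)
Your proposal is correct and follows essentially the same route as the paper: both arguments rest on the integral comparison $\int_1^L x^{-\beta}\,\mathrm{d}x \le \sum_{l=1}^L l^{-\beta} \le 1+\int_1^{L-1} x^{-\beta}\,\mathrm{d}x$ to estimate $S_L$ and the ratios $\sum_l h_l^{K+1}/S_L$ and $\sum_l h_l^2/S_L^2$, then read off the conclusion from Theorem~\ref{theo:bias_w}. If anything, your explicit treatment of the boundary exponents $(K+1)\alpha=1$ and $2\alpha=1$, where a $\log L$ factor appears, is slightly more careful than the paper's, whose closed-form bound degenerates there.
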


\begin{remark}\label{remark:decreasingbiasmse}
Theorem~\ref{theo:bias_w} indicates the sample average $\tilde{\phi}$ asymptotically 
converges to the true posterior average $\bar{\phi}$.
It is possible to find out the optimal decreasing rates for the specific decreasing sequence 
$h_l \propto l^{-\alpha}$. Specifically, using the bounds for $\sum_{l=1}^L l^{-\alpha}$ 
(see the proof of Corollary~\ref{coro:stepsize}), for the two terms in the bias \eqref{eq:bias_decease} 
in Theorem~\ref{theo:bias_w}, $\frac{1}{S_L}$ decreases at a rate of $O(L^{\alpha-1})$, whereas 
$(\sum_{l=1}^L h_l^{K+1})/S_L$ decreases as $O(L^{-K\alpha})$. The balance between these 
two terms is achieved when $\alpha = 1/(K+1)$, which agrees with Theorem~\ref{theo:bias1} on 
the optimal rate of fixed-step-size SG-MCMCs. Similarly, for the MSE \eqref{eq:mse_decrease}, 
the first term decreases as $L^{-1}$, independent of $\alpha$, while the second and third terms 
decrease as $O(L^{\alpha-1})$ and $O(L^{-2K\alpha})$, respectively, thus the balance is achieved 
when $\alpha = 1/(2K+1)$, which also agrees with the optimal rate for the fixed-step-size MSE in 
Theorem~\ref{theo:MSE}. 
\vspace{-0.1cm}\end{remark}

According to Theorem~\ref{theo:bias_w}, one theoretical advantage of decreasing-step-size 
SG-MCMCs over fixed-step-size variants is the asymptotically unbiased estimation of posterior 
averages, though the benefit  might not be significant in large-scale real applications where the asymptotic 
regime is not reached.

\vspace{-0.1cm}\section{Practical Numerical Integrators}\label{sec:integrators}\vspace{-0.2cm}

Given the theory for SG-MCMCs with high-order integrators, we here propose a 2nd-order 
symmetric splitting integrator for practical use. The Euler integrator is known as a 1st-order 
integrator; the proof and its detailed applications on the SGLD and SGHMC are given in 
Appendix~\ref{sec:numerical_integrator}.

The main idea of the symmetric splitting scheme is to split the local generator $\tilde{\Lcal}_l$ 
into several sub-generators that can be solved analytically\footnote{This is different from the 
traditional splitting in SDE literatures\cite{LeimkuhlerM:AMRE13,LeimkuhlerS:arxiv15}, where 
$\Lcal$ instead of $\tilde{\Lcal}_l$ is split.}. Unfortunately, one cannot easily apply a splitting 
scheme with the SGLD. However, for the SGHMC, it can be readily split into: 
$\tilde{\Lcal}_l = \mathcal{L}_A  + \mathcal{L}_B + \mathcal{L}_{O_l}$, where
\begin{align}\label{eq:split_generator}
	\mathcal{L}_A = \pb \cdot \nabla_{\thetab}, \;\;
	\mathcal{L}_B = -D \pb \cdot \nabla_{\pb}, \;\;
	\mathcal{L}_{O_l} = -\nabla_{\thetab}\tilde{U}(\thetab) \cdot \nabla_{\pb} + 2D \Ib_n:\nabla_{\pb} \nabla_{\pb}^T~.
\end{align}
These sub-generators correspond to the following SDEs, which are all analytically solvable:
\begin{align}\label{eq:split_sghmc}
	\hspace{-0.2cm}A: \left\{\begin{array}{ll}
	\mathrm{d}\thetab &= \pb \mathrm{d}t \\
	\mathrm{d}\pb &= 0
	\end{array}\right.,
	B: \left\{\begin{array}{ll}
	\mathrm{d}\thetab &= 0 \\
	\mathrm{d}\pb &= - D \pb \mathrm{d}t
	\end{array}\right.,
	O: \left\{\begin{array}{ll}
	\mathrm{d}\thetab &= 0 \\
	\mathrm{d}\pb &= -\nabla_\thetab \tilde{U}_l(\thetab) \mathrm{d}t + \sqrt{2D}\mathrm{d}W
	\end{array}\right.
\end{align}
Based on these sub-SDEs, the local Kolmogorov operator $\tilde{P}_{h}^{l}$ is defined as:
\begin{align*}
\EE[f(\Xb_{lh})] = \tilde{P}_{h}^{l}f(\Xb_{(l-1)h}), \;\; \text{where, } \; \tilde{P}_{h}^{l} \triangleq e^{\frac{h}{2}\mathcal{L}_A} \circ e^{\frac{h}{2}\mathcal{L}_B} \circ e^{h\mathcal{L}_{O_l}} \circ e^{\frac{h}{2}\mathcal{L}_B} \circ e^{\frac{h}{2}\mathcal{L}_A}, 
\end{align*}

so that the corresponding updates for $\Xb_{lh} = (\thetab_{lh}, \pb_{lh})$ consist of the 
following 5 steps:
\begin{align*}
	&\thetab_{lh}^{(1)} = \thetab_{(l-1)h} + \pb_{(l-1)h} h / 2 \Rightarrow \pb_{lh}^{(1)} = e^{-Dh/2} \pb_{(l-1)h} \Rightarrow \pb_{lh}^{(2)} = \pb_{lh}^{(1)} - \nabla_{\thetab}\tilde{U}_l(\thetab_{lh}^{(1)}) h + \sqrt{2Dh} \zetab_l \\
	&\Rightarrow \hspace{0.2cm} \pb_{lh} = e^{-Dh/2} \pb_{lh}^{(2)} \hspace{0.2cm} \Rightarrow \hspace{0.2cm} \thetab_{lh} = \thetab_{lh}^{(1)} + \pb_{lh} h / 2~,
\end{align*}
where $(\thetab_{lh}^{(1)}, \pb_{lh}^{(1)}, \pb_{lh}^{(2)})$ are intermediate variables.
We denote such a splitting method as the ABOBA scheme. From the Markovian property 
of a Kolmogorov operator, it is readily seen that all such symmetric splitting schemes 
(with different orders of `A', `B' and `O') are equivalent \cite{LeimkuhlerM:AMRE13}. 
Lemma~\ref{lem:splitting} below shows the symmetric splitting scheme is a 2nd-order 
local integrator.

\begin{lemma}\label{lem:splitting}
The symmetric splitting scheme is a 2nd-order local integrator, {\it i.e.}, the corresponding
Kolmogorov operator $\tilde{P}_{h}^{l}$ satisfies:
$\tilde{P}_{h}^{l} = e^{h\tilde{\mathcal{L}}_l} + O(h^3)$.
\end{lemma}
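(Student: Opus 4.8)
The plan is to verify Definition~\ref{def:k-order-integrator} with $K=2$ directly, by computing the Taylor expansion of the composed operator $\tilde{P}_{h}^{l} = e^{\frac{h}{2}\mathcal{L}_A} \circ e^{\frac{h}{2}\mathcal{L}_B} \circ e^{h\mathcal{L}_{O_l}} \circ e^{\frac{h}{2}\mathcal{L}_B} \circ e^{\frac{h}{2}\mathcal{L}_A}$ in powers of $h$ and matching it against the expansion of $e^{h\tilde{\mathcal{L}}_l}$ with $\tilde{\mathcal{L}}_l = \mathcal{L}_A + \mathcal{L}_B + \mathcal{L}_{O_l}$. The key algebraic tool is the Baker--Campbell--Hausdorff (BCH) formula: for two operators $\mathcal{X},\mathcal{Y}$ one has $e^{\mathcal{X}}e^{\mathcal{Y}} = e^{\mathcal{X}+\mathcal{Y}+\frac{1}{2}[\mathcal{X},\mathcal{Y}]+\cdots}$, and more to the point the symmetric (Strang) composition $e^{\frac{1}{2}\mathcal{X}}e^{\mathcal{Y}}e^{\frac{1}{2}\mathcal{X}} = e^{\mathcal{X}+\mathcal{Y}+O(h^3)}$, where all commutator corrections are of third order in the step size. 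Since each of $\mathcal{L}_A,\mathcal{L}_B,\mathcal{L}_{O_l}$ carries one power of $h$ inside its exponential (via the $h/2$ or $h$ prefactors), the odd-order commutator terms that would normally appear at order $h^2$ cancel by the palindromic structure of the scheme, leaving $\tilde{P}_h^l = e^{h\tilde{\mathcal{L}}_l + O(h^3)}$, hence $\tilde{P}_h^l f = e^{h\tilde{\mathcal{L}}_l} f + O(h^3)$.

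The steps I would carry out, in order, are: (i) recall that each sub-SDE in \eqref{eq:split_sghmc} is solved exactly, so that $e^{\frac{h}{2}\mathcal{L}_A}$, $e^{\frac{h}{2}\mathcal{L}_B}$, $e^{h\mathcal{L}_{O_l}}$ really are the exact Kolmogorov operators of those pieces, with no numerical error of their own; (ii) apply the symmetric-composition BCH identity first to the inner triple $e^{\frac{h}{2}\mathcal{L}_B}\circ e^{h\mathcal{L}_{O_l}}\circ e^{\frac{h}{2}\mathcal{L}_B} = e^{h(\mathcal{L}_B+\mathcal{L}_{O_l}) + O(h^3)}$, using that $[\mathcal{L}_B,\mathcal{L}_{O_l}]$ contributes only at $O(h^3)$ by symmetry; (iii) apply the same identity to the outer sandwich $e^{\frac{h}{2}\mathcal{L}_A}\circ e^{h(\mathcal{L}_B+\mathcal{L}_{O_l})+O(h^3)}\circ e^{\frac{h}{2}\mathcal{L}_A} = e^{h(\mathcal{L}_A+\mathcal{L}_B+\mathcal{L}_{O_l}) + O(h^3)} = e^{h\tilde{\mathcal{L}}_l + O(h^3)}$; (iv) finally expand $e^{h\tilde{\mathcal{L}}_l+O(h^3)} = e^{h\tilde{\mathcal{L}}_l}+O(h^3)$ when applied to a smooth bounded $f$, using the regularity/boundedness assumptions (the same ones invoked in the footnote about Kolmogorov's backward equation in Section~\ref{sec:sgmcmc}) to control the remainder uniformly.

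The main obstacle is making the BCH manipulations rigorous rather than merely formal: the operators $\mathcal{L}_A,\mathcal{L}_B,\mathcal{L}_{O_l}$ are unbounded differential operators, so $e^{h\mathcal{L}}$ is a formal exponential series, and one must argue that, applied to a sufficiently smooth and suitably bounded test function $f$ (with controlled growth of derivatives), the Taylor-with-remainder expansion is valid and the $O(h^3)$ terms are genuinely bounded — this is exactly the kind of technical condition the paper flags in its footnote on $e^{h\mathcal{L}}$ and defers to \cite{AbdulleVZ:SIAMJNA15}. A secondary bookkeeping point is to be careful that $\mathcal{L}_{O_l}$ contains both a first-order drift part and a second-order diffusion part, so its "one power of $h$" scaling inside the exponential must be checked against the $\sqrt{2Dh}$ noise increment in the corresponding update; once one notes that the generator picks up $h$ (not $\sqrt h$) because the diffusion term enters $\mathcal{L}_{O_l}$ with a single $h$ prefactor in $e^{h\mathcal{L}_{O_l}}$, the order counting goes through and the second-order claim follows.
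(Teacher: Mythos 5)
Your proposal is correct and takes essentially the same route as the paper's proof: both establish the result by the Baker--Campbell--Hausdorff formula, using the palindromic structure of the composition to cancel the $O(h^2)$ commutator terms, and both defer the rigor of the formal operator expansions to the boundedness assumptions on the generator expansion. Your organization as two nested symmetric (Strang) sandwiches is a slightly tidier bookkeeping than the paper's sequential left-to-right composition with explicit commutator tracking, but the underlying computation is identical.
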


When this integrator is applied to the SGHMC, the following properties can be obtained.

\begin{remark}\label{remark:bias}
	Applying Theorem~\ref{theo:bias1} to the SGHMC with the symmetric splitting scheme ($K = 2$), 
	the bias is bounded as:
	$|\mathbb{E}\hat{\phi} - \bar{\phi}| = O(\frac{1}{Lh} + \frac{\sum_l \left\|\mathbb{E}\Delta V_l\right\|}{L} + h^2)$. 
	The optimal bias decreasing rate is $L^{-2/3}$, compared to $L^{-1/2}$ for 
	the SGLD \cite{VollmerZT:arxiv15}. Similarly, the MSE is bounded by: 
	$\mathbb{E}(\hat{\phi} - \bar{\phi})^2 \leq C (\frac{\frac{1}{L}\sum_l\mathbb{E}\left\|\Delta V_l\right\|^2}{L} + \frac{1}{Lh} + h^{4})$, 
	decreasing optimally as $L^{-4/5}$ with step size $h \propto L^{-1/5}$, compared to the MSE 
	of $L^{-2/3}$ for the SGLD \cite{VollmerZT:arxiv15}. This indicates that the SGHMC with the 
	splitting integrator converges faster than the SGLD and SGHMC with 1st-order Euler integrators.
\end{remark}

\begin{remark}
For a decreasing-step-size SGHMC, based on Remark~\ref{remark:decreasingbiasmse}, 
the optimal step size decreasing rate for the bias 
is $\alpha = 1/3$, and $\alpha = 1/5$ for the MSE. These agree with their fixed-step-size 
counterparts in Remark~\ref{remark:bias},
thus are faster than the SGLD/SGHMC with 1st-order Euler integrators.
\end{remark}

\vspace{-0.1cm}\section{Experiments}\label{sec:exp}\vspace{-0.2cm}

We here verify our theory and compare with related algorithms on both synthetic data and 
large-scale machine learning applications.

\vspace{-0.4cm}\paragraph{Synthetic data}

\begin{wrapfigure}{R}{5cm}\vspace{-25pt}
  \centering
  \includegraphics[width=\linewidth]{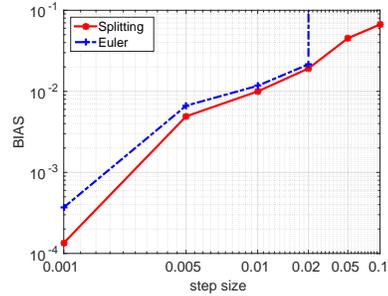}
  \caption{Comparisons of symmetric splitting and Euler integrators.}\label{fig:split_euler}\vspace{-20pt}
\end{wrapfigure}

We consider a standard Gaussian model where $x_i \sim \mathcal{N}(\theta, 1), \theta \sim \mathcal{N}(0, 1)$. 1000 
data samples $\{x_i\}$ are generated,
and every minibatch in the stochastic gradient is of size 10. The test function is defined as 
$\phi(\theta) \triangleq \theta^2$, with explicit expression for the posterior average.
To evaluate the expectations in the bias and MSE, we average over 200 runs with random initializations.

\vspace{-0.1cm}First we compare the invariant measures (with $L = 10^6$) of the proposed splitting integrator and Euler 
integrator for the SGHMC. Results of the SGLD are omitted since they are not as competitive. 
Figure~\ref{fig:split_euler} plots the biases with different step sizes. It is clear that the Euler 
integrator has larger biases in the invariant measure, and quickly explodes when the step size becomes 
large, which does not happen for the splitting integrator. In real applications we also find this happen 
frequently (shown in the next section), making the Euler scheme an 
unstable integrator.

\vspace{-0.1cm}Next we examine the asymptotically optimal step size rates for the SGHMC. From the theory 
we know these are $\alpha = 1/3$ for the bias and $\alpha = 1/5$ for the MSE, in both 
fixed-step-size SGHMC (SGHMC-F) and decreasing-step-size SGHMC (SGHMC-D). For the 
step sizes, we did a grid search to select the best prefactors, which resulted in 
$h\!\!=\!\!0.033\!\times\!L^{-\alpha}$ for the SGHMC-F and $h_l\!\!=\!0.045\!\!\times\!l^{-\alpha}$ 
for the SGHMC-D, with different $\alpha$ values. We plot the traces of the bias for the SGHMC-D 
and the MSE for the SGHMC-F in Figure~\ref{fig:bias_mse_gau_rates}. Similar results for the bias 
of the SGHMC-F and the MSE of the SGHMC-D are plotted in Appendix~\ref{sec:extra_exp}. We find 
that when rates are smaller than the theoretically optimal rates, {\it i.e.}, $\alpha = 1/3$ (bias) 
and $\alpha = 1/5$ (MSE), the bias and MSE tend to decrease faster than the optimal rates 
at the beginning (especially for the SGHMC-F), but eventually they slow down and are surpassed 
by the optimal rates, consistent with the {\em asymptotic} theory. This also suggests that
if only a small number of iterations were feasible, setting a larger step size than the theoretically 
optimal one might be beneficial in practice.

\begin{figure}[t!]
\vskip -0.1in
\centering
\begin{minipage}{0.43\linewidth}
  \includegraphics[width=0.9\linewidth]{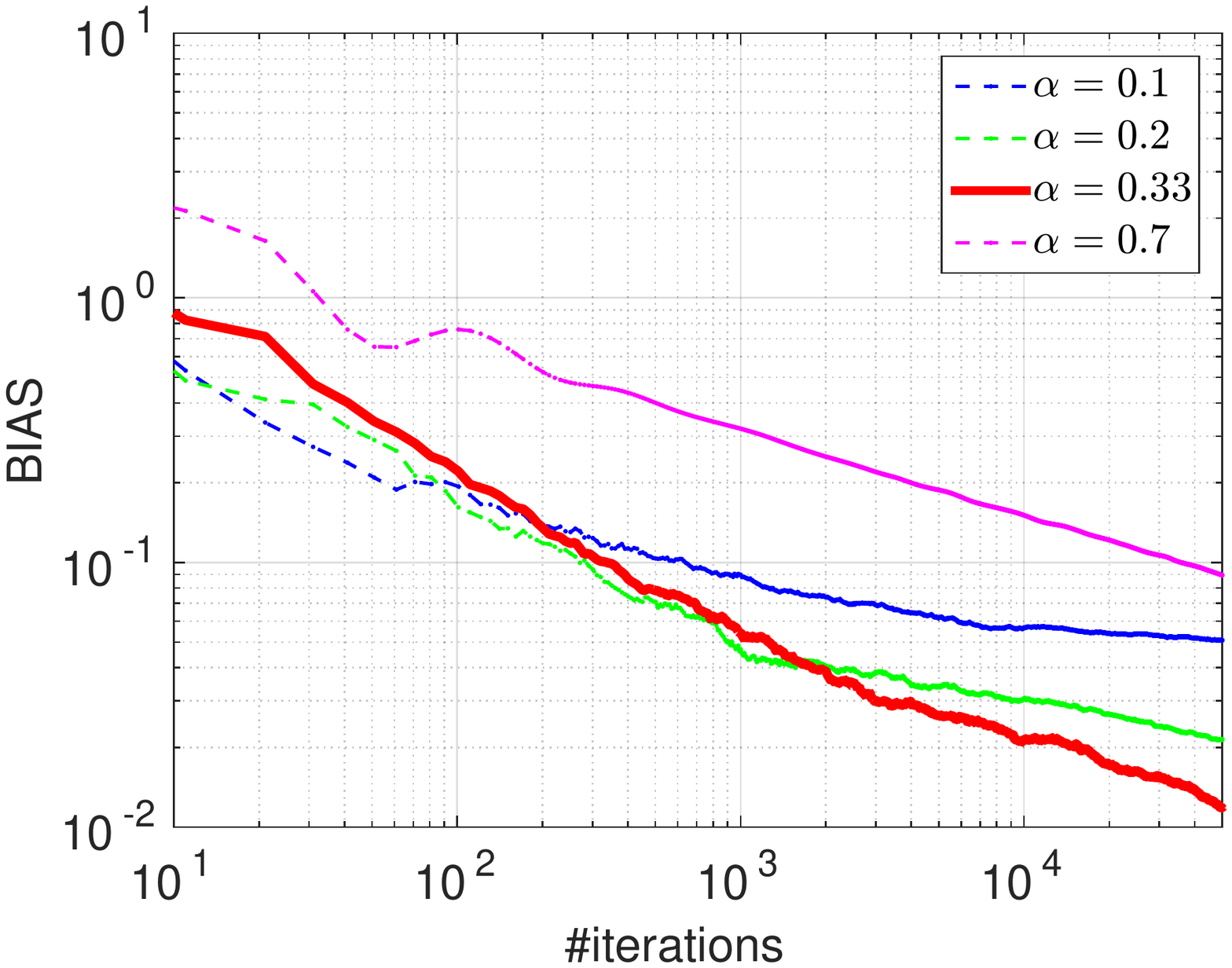}
\end{minipage}
\begin{minipage}{0.43\linewidth}
  \includegraphics[width=0.9\linewidth]{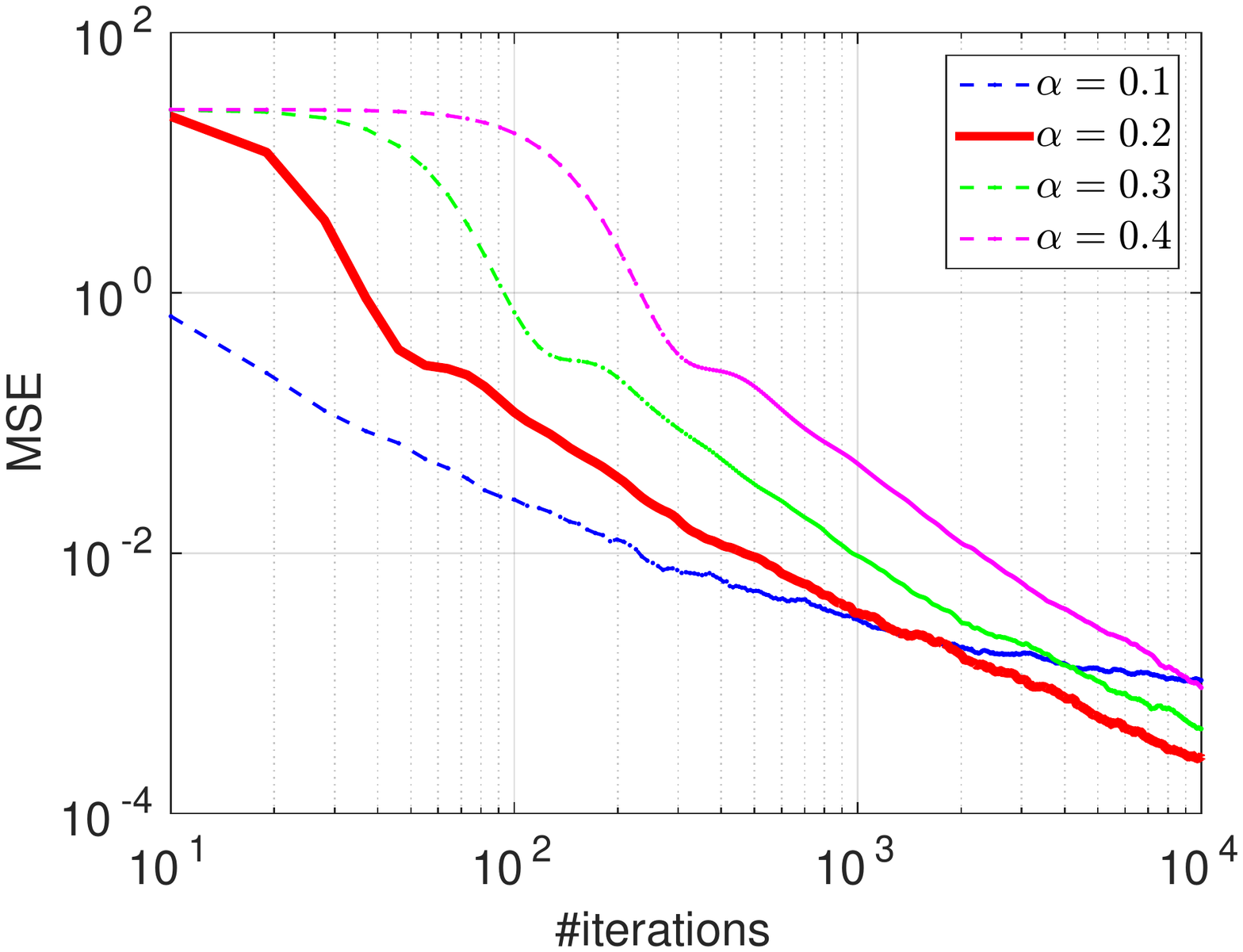}
\end{minipage}
\vskip -0.1in
\caption{{\em Bias} of SGHMC-D (left) and {\em MSE} of SGHMC-F (right) with different step size 
rates $\alpha$. Thick red curves correspond to theoretically optimal rates.}
\label{fig:bias_mse_gau_rates}
\vskip -0.2in
\end{figure}

\vspace{-0.1cm}Finally, we study the relative convergence speed of the SGHMC and SGLD.
We test both fixed-step-size and decreasing-step-size versions.
For fixed-step-size experiments, the step sizes are set to $h = C L^{-\alpha}$, with $\alpha$ chosen
according to the theory for SGLD and SGHMC. To provide a fair comparison, the constants $C$ are 
selected via a grid search from $10^{-3}$ to 0.5 with an interval of $0.002$ for $L = 500$, it is then 
fixed in the other runs with different $L$ values. The parameter $D$ in the SGHMC is selected within 
$(10, 20, 30)$ as well. For decreasing-step-size experiments, an initial step size is chosen within 
$[0.003, 0.05]$ with an interval of $0.002$ for different algorithms\footnote{Using the same initial 
step size is not fair because the SGLD requires much smaller step sizes.}, and then it decreases 
according to their theoretical optimal rates. Figure~\ref{fig:bias_mse_gau} shows a comparison of 
the biases for the SGHMC and SGLD. As indicated by both theory and experiments, the SGHMC with the splitting integrator 
yields a faster convergence speed than the SGLD with an Euler integrator.

\begin{figure}[t!]
%\vskip -0.2in
\centering
\begin{minipage}{0.43\linewidth}
  \includegraphics[width=\linewidth]{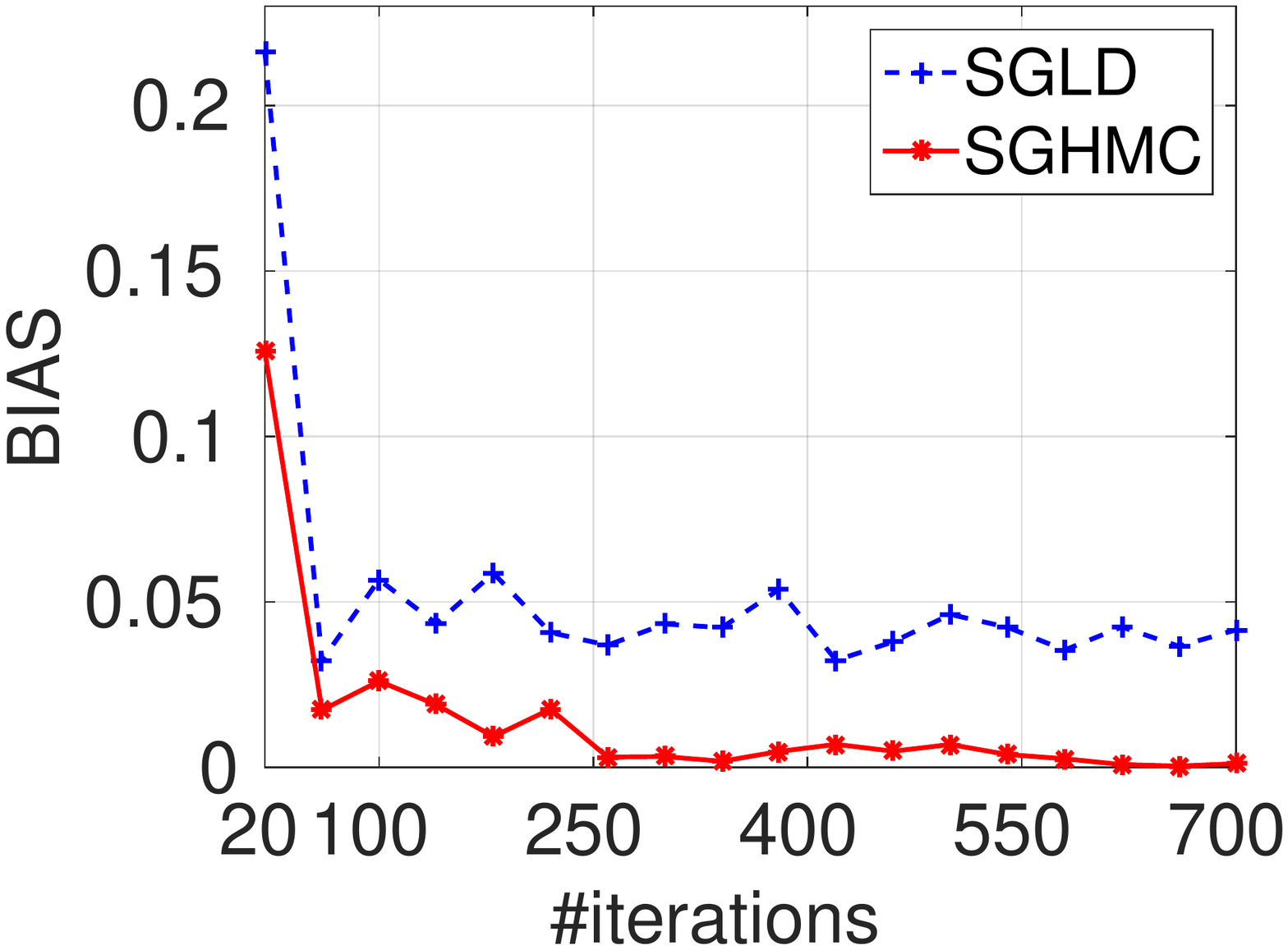}
\end{minipage}
\begin{minipage}{0.43\linewidth}
  \includegraphics[width=\linewidth]{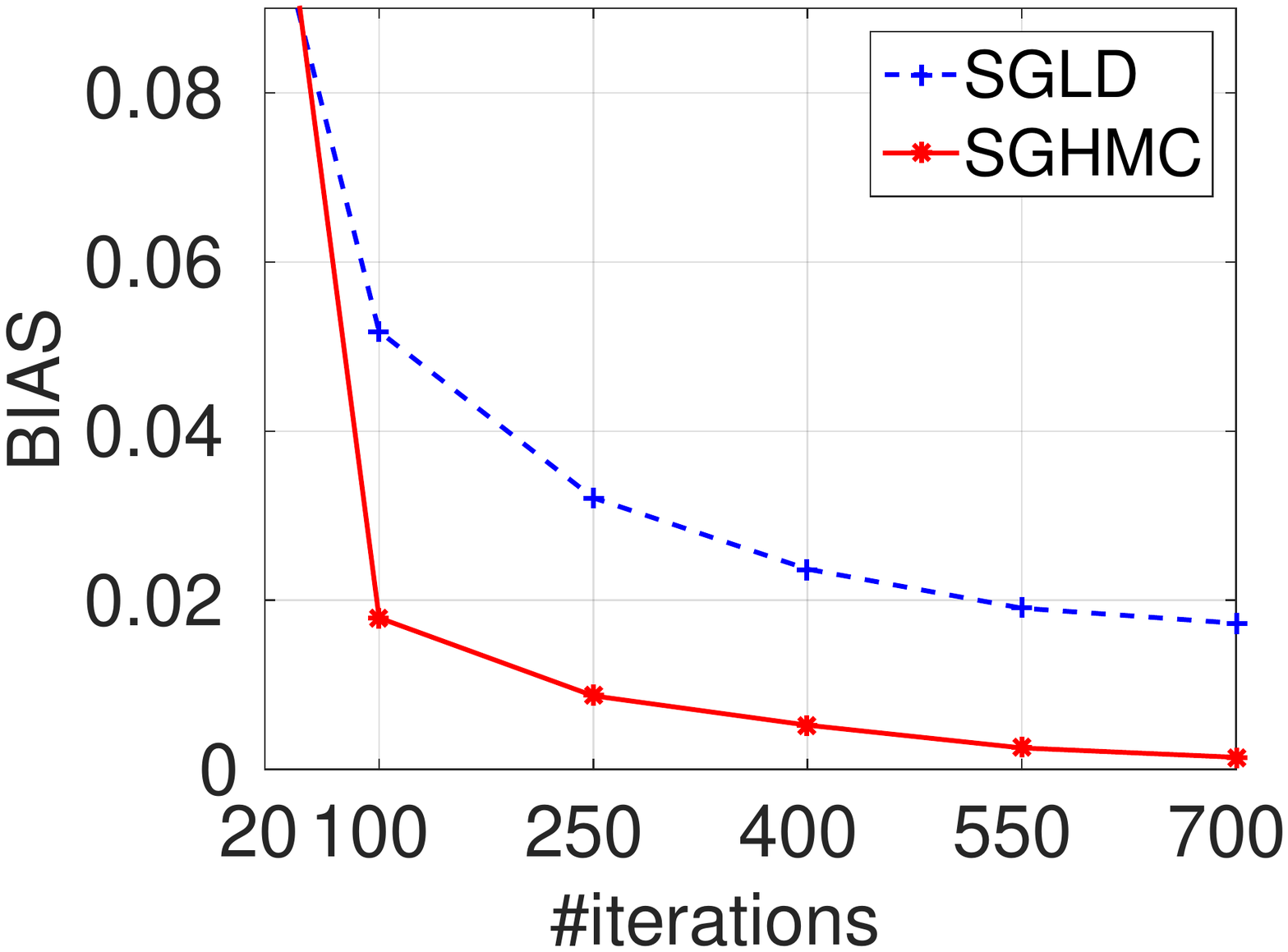}
\end{minipage}
\vskip -0.15in
\caption{{\em Biases} for the fixed-step-size (left) and decreasing-step-size (right)
SGHMC and SGLD.}
\label{fig:bias_mse_gau}
\vskip -0.25in
\end{figure}

\vspace{-0.3cm}\paragraph{Large-scale machine learning applications}

For real applications, we test the SGLD with an Euler integrator, the SGHMC with the splitting integrator (SGHMC-S),
and the SGHMC with an Euler integrator (SGHMC-E). First we test them on the latent 
Dirichlet allocation model (LDA) \cite{blei2003latent}. The data used consists of 10M randomly 
downloaded documents from \emph{Wikipedia}, using scripts provided in \cite{HoffmanBB:NIPS10}. 
We randomly select 1K documents for testing and validation, respectively. As 
in \cite{HoffmanBB:NIPS10,GanCHCC:icml15}, the vocabulary size is 7,702. We use 
the {\em Expanded-Natural} reparametrization trick to sample from the probabilistic 
simplex \cite{PattersonT:NIPS13}. 
%For a fair comparison, the Riemannian SGLD\cite{PattersonT:NIPS13} was not considered. 
The step sizes are chosen from $\{2, 5, 8, 20, 50, 80\}\!\!\times\!\!10^{-5}$, 
and parameter $D$ from $\{20, 40, 80\}$. The minibatch size is set to 100, with one pass of the whole data
in the experiments (and therefore $L=100K$). We collect 300 posterior samples 
to calculate test perplexities, with a standard holdout technique as described in \cite{GanCHCC:icml15}.

\vspace{-0.1cm}Next a recently studied sigmoid belief network model (SBN) \cite{GanHCC:aistats15} is tested,
which is a directed counterpart of the popular RBM model. We use a one layer model where the 
bottom layer corresponds to binary observed data, which is generated from the hidden layer 
(also binary) via a sigmoid function. As shown in \cite{GanCHCC:icml15}, the SBN is readily 
learned by SG-MCMCs. We test the model on the MNIST dataset, which consists of 60K hand 
written digits of size $28\times 28$ for training, and 10K for testing. Again the step sizes are chosen 
from $\{3, 4, 5, 6\}\!\!\times\!\!10^{-4}$, $D$ from $\{0.9, 1, 5\}/\sqrt{h}$. The minibatch is set to 
200, with 5000 iterations for training. Like applied for the  RBM \cite{SalakhutdinovM:icml08}, an advance 
technique called anneal importance sampler (AIS) is adopted for calculating test likelihoods.

\begin{wrapfigure}{R}{6.3cm}\vspace{-20pt}
  \centering
  \includegraphics[width=\linewidth]{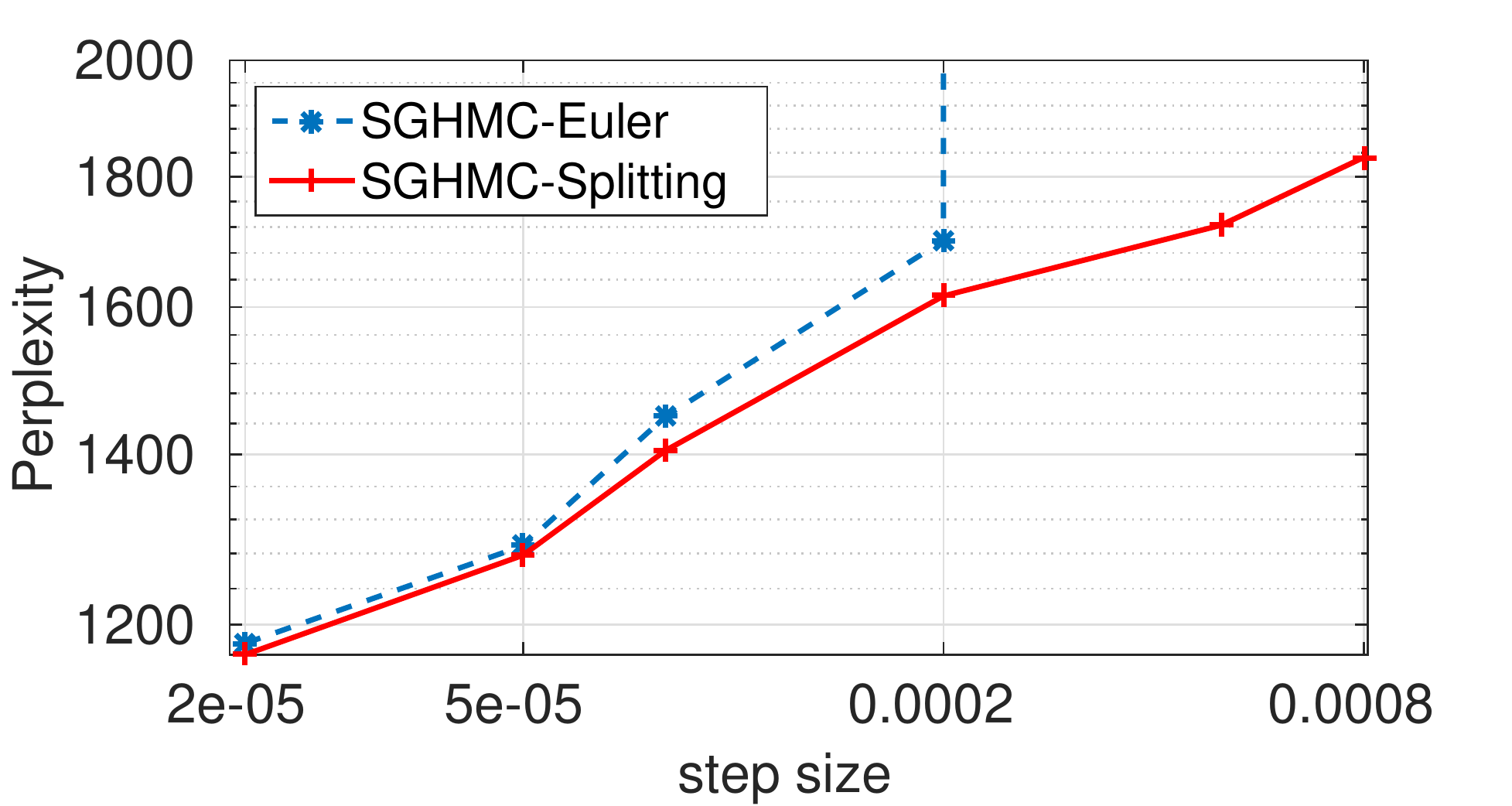}
  \vspace{-16pt}\caption{SGHMC with 200 topics. 
  The Euler explodes with large step sizes.}\label{fig:lda_split_euler}\vspace{-16.5pt}
\end{wrapfigure}

We briefly describe the results here, more details are provided in Appendix~\ref{sec:extra_exp}. 
For LDA with 200 topics, the best test perplexities for the SGHMC-S, SGHMC-E
and SGLD are 1168, 1180 and 2496, respectively; while these are 1157, 1187 and 2511, 
respectively,  for 500 topics.
Similar to the synthetic experiments, we also observed SGHMC-E crashed 
when using large step sizes. This is illustrated more clearly in Figure~\ref{fig:lda_split_euler}.
For the SBN with 100 hidden units, we obtain negative test log-likelihoods of 103, 105 and
126 for the SGHMC-S, SGHMC-E and SGLD, respectively; and these are 98, 100, and 110 for 200 hidden
units. Note the SGHMC-S on SBN yields state-of-the-art results on test likelihoods compared 
to \cite{MnihG:icml14}, which was 113 for 200 hidden units. 
A decrease of 2 units in the neg-log-likelihood with AIS is considered to be a 
reasonable gain \cite{GanHCC:aistats15}, which is approximately equal to the gain from a 
shallow to a deep model \cite{MnihG:icml14}. %To sum up, SGHMC works better than SGLD, while
SGHMC-S is more accuracy and robust than SGHMC-E due to its 2nd-order splitting integrator.

%The results are shown in Table~\ref{tab:LDASBN} (more detailed results are provided in 
%Appendix~\ref{sec:extra_exp}). It is clear that the SGHMC is much better than the SGLD, and the 
%splitting integrator also works better than the Euler integrator in both models. 
%We also observed algorithms with the Euler integrator often crashed when using large step sizes. 
%The SGLD works badly in LDA. For a fair comparison, a better version of the SGLD with 
%Riemannian information geometry of the posterior \cite{PattersonT:NIPS13} was not considered. 
%For the SBN, a decrease of 2 units in the neg-log-likelihood with AIS is considered to be a 
%reasonable gain \cite{GanHCC:aistats15}, which is approximately equal to the gain from a 
%shallow to a deep model \cite{MnihG:icml14}.

%\begin{table}[hbpt]
%\centering
%\caption{Comparisons for different algorithms. $K$ in LDA means \#topics, $J$ in SBN 
%	means \#hidden units; suffix `S' means the {\em symmetric splitting integrator}, 
%	`E' means the {\em Euler integrator}.}\vspace{-0.3cm}
%  \begin{tabular}{c || c | c | c || c || c | c | c}
%    \hline
%     \multicolumn{4}{c||}{LDA (Test perplexity)} & \multicolumn{4}{|c}{SBN (Test neg-log-likelihood)} \\ \hline \hline
%    K  & SGHMC-S & SGHMC-E & SGLD-E & J & SGHMC-S & SGHMC-E & SGLD-E \\ \hline
%    200 & {\bf 1168} & 1180 & 2496 & 100 & {\bf 103} & 105 & 126 \\ \hline
%    500 & {\bf 1157} & 1187 & 2511 & 200 & {\bf 98} & 100 & 110 \\ \hline
%  \end{tabular}
%\label{tab:LDASBN}
%\end{table}

\vspace{-0.5cm}\section{Conclusion}\vspace{-0.4cm}

For the first time, we develop theory to analyze finite-time ergodic errors, as well as 
asymptotic invariant measures, of general SG-MCMCs with high-order integrators.
Our theory applies for both fixed and decreasing step size SG-MCMCs, which are 
shown to be equivalent in terms of convergence rates, and are faster with our proposed 
2nd-order integrator than previous SG-MCMCs with 1st-order Euler integrators. 
Experiments on both synthetic and large real datasets validate our theory. The theory also 
indicates that with increasing order of numerical integrators, the convergence rate of an SG-MCMC is 
able to theoretically approach the standard MCMC convergence rate. Given the theoretical 
convergence results, SG-MCMCs can be used effectively in real applications.

\vspace{-0.5cm}\paragraph{Acknowledgments}%\vspace{-0.3cm}
Supported in part by ARO, DARPA, DOE, NGA and ONR.
We acknowledge Jonathan C. Mattingly and Chunyuan Li for inspiring discussions; David Carlson
for the AIS codes.

%{\setlength{\bibsep}{2.5pt}
\bibliographystyle{unsrtnat}
\bibliography{references}

\begin{thebibliography}{24}
\providecommand{\natexlab}[1]{#1}
\providecommand{\url}[1]{\texttt{#1}}
\expandafter\ifx\csname urlstyle\endcsname\relax
  \providecommand{\doi}[1]{doi: #1}\else
  \providecommand{\doi}{doi: \begingroup \urlstyle{rm}\Url}\fi

\bibitem[Chen et~al.(2014)Chen, Fox, and Guestrin]{ChenFG:ICML14}
T.~Chen, E.~B. Fox, and C.~Guestrin.
\newblock Stochastic gradient {H}amiltonian {M}onte {C}arlo.
\newblock In \emph{ICML}, 2014.

\bibitem[Ding et~al.(2014)Ding, Fang, Babbush, Chen, Skeel, and
  Neven]{DingFBCSN:NIPS14}
N.~Ding, Y.~Fang, R.~Babbush, C.~Chen, R.~D. Skeel, and H.~Neven.
\newblock Bayesian sampling using stochastic gradient thermostats.
\newblock In \emph{NIPS}, 2014.

\bibitem[Risken(1989)]{Risken:FPE89}
H.~Risken.
\newblock \emph{The {F}okker-{P}lanck equation}.
\newblock Springer-Verlag, New York, 1989.

\bibitem[Welling and Teh(2011)]{WellingT:ICML11}
M.~Welling and Y.~W. Teh.
\newblock Bayesian learning via stochastic gradient {L}angevin dynamics.
\newblock In \emph{ICML}, 2011.

\bibitem[Teh et~al.(2014)Teh, Thiery, and Vollmer]{TehTV:arxiv14}
Y.~W. Teh, A.~H. Thiery, and S.~J. Vollmer.
\newblock Consistency and fluctuations for stochastic gradient {L}angevin
  dynamics.
\newblock Technical Report arXiv:1409.0578, University of Oxford, UK, Sep.
  2014.
\newblock URL \url{http://arxiv.org/abs/1409.0578}.

\bibitem[Vollmer et~al.(2015)Vollmer, Zygalakis, and Teh]{VollmerZT:arxiv15}
S.~J. Vollmer, K.~C. Zygalakis, and Y.~W. Teh.
\newblock ({N}on-)asymptotic properties of stochastic gradient {L}angevin
  dynamics.
\newblock Technical Report arXiv:1501.00438, University of Oxford, UK, January
  2015.
\newblock URL \url{http://arxiv.org/abs/1501.00438}.

\bibitem[Betancourt(2015)]{Betancourt:arxiv15}
M.~Betancourt.
\newblock The fundamental incompatibility of {H}amiltonian {M}onte {C}arlo and
  data subsampling.
\newblock In \emph{ICML}, 2015.

\bibitem[Sato and Nakagawa(2014)]{SatoN:ICML14}
I.~Sato and H.~Nakagawa.
\newblock Approximation analysis of stochastic gradient {L}angevin dynamics by
  using {F}okker-{P}lanck equation and {I}t\^{o} process.
\newblock In \emph{ICML}, 2014.

\bibitem[Leimkuhler and Shang(2015)]{LeimkuhlerS:arxiv15}
B.~Leimkuhler and X.~Shang.
\newblock Adaptive thermostats for noisy gradient systems.
\newblock Technical Report arXiv:1505.06889v1, University of Edinburgh, UK, May
  2015.
\newblock URL \url{http://arxiv.org/abs/1505.06889}.

\bibitem[Abdulle et~al.(2015)Abdulle, Vilmart, and
  Zygalakis]{AbdulleVZ:SIAMJNA15}
A.~Abdulle, G.~Vilmart, and K.~C. Zygalakis.
\newblock Long time accuracy of {L}ie--{T}rotter splitting methods for
  {L}angevin dynamics.
\newblock \emph{SIAM J. NUMER. ANAL.}, 53\penalty0 (1):\penalty0 1--16, 2015.

\bibitem[Hasminskii(2012)]{Hasminskii:book12}
R.~Hasminskii.
\newblock \emph{Stochastic stability of differential equations}.
\newblock Springer-Verlag Berlin Heidelberg, 2012.

\bibitem[Mattingly et~al.(2010)Mattingly, Stuart, and
  Tretyakov]{MattinglyST:JNA10}
J.~C. Mattingly, A.~M. Stuart, and M.~V. Tretyakov.
\newblock Construction of numerical time-average and stationary measures via
  {P}oisson equations.
\newblock \emph{SIAM J. NUMER. ANAL.}, 48\penalty0 (2):\penalty0 552--577,
  2010.

\bibitem[Giesl(2007)]{Giesl:book07}
P.~Giesl.
\newblock \emph{Construction of global Lyapunov functions using radial basis
  functions}.
\newblock Springer Berlin Heidelberg, 2007.

\bibitem[Bogoliubov and Krylov(1937)]{BogoliubovK:AM37}
N.~N. Bogoliubov and N.~M. Krylov.
\newblock La theorie generalie de la mesure dans son application a l'etude de
  systemes dynamiques de la mecanique non-lineaire.
\newblock \emph{Ann. Math. II (in French)}, 38\penalty0 (1):\penalty0 65--113,
  1937.

\bibitem[Leimkuhler and Matthews(2013)]{LeimkuhlerM:AMRE13}
B.~Leimkuhler and C.~Matthews.
\newblock Rational construction of stochastic numerical methods for molecular
  sampling.
\newblock \emph{AMRX}, 2013\penalty0 (1):\penalty0 34--56, 2013.

\bibitem[Blei et~al.(2003)Blei, Ng, and Jordan]{blei2003latent}
D.~M. Blei, A.~Y. Ng, and M.~I. Jordan.
\newblock Latent {D}irichlet allocation.
\newblock \emph{JMLR}, 2003.

\bibitem[Hoffman et~al.(2010)Hoffman, Blei, and Bach]{HoffmanBB:NIPS10}
M.~D. Hoffman, D.~M. Blei, and F.~Bach.
\newblock Online learning for latent {D}irichlet allocation.
\newblock In \emph{NIPS}, 2010.

\bibitem[Gan et~al.(2015{\natexlab{a}})Gan, Chen, Henao, Carlson, and
  Carin]{GanCHCC:icml15}
Z.~Gan, C.~Chen, R.~Henao, D.~Carlson, and L.~Carin.
\newblock Scalable deep {P}oisson factor analysis for topic modeling.
\newblock In \emph{ICML}, 2015{\natexlab{a}}.

\bibitem[Patterson and Teh(2013)]{PattersonT:NIPS13}
S.~Patterson and Y.~W. Teh.
\newblock Stochastic gradient {R}iemannian {L}angevin dynamics on the
  probability simplex.
\newblock In \emph{NIPS}, 2013.

\bibitem[Gan et~al.(2015{\natexlab{b}})Gan, Henao, Carlson, and
  Carin]{GanHCC:aistats15}
Z.~Gan, R.~Henao, D.~Carlson, and L.~Carin.
\newblock Learning deep sigmoid belief networks with data augmentation.
\newblock In \emph{AISTATS}, 2015{\natexlab{b}}.

\bibitem[Salakhutdinov and Murray(2008)]{SalakhutdinovM:icml08}
R.~Salakhutdinov and I.~Murray.
\newblock On the quantitative analysis of deep belief networks.
\newblock In \emph{ICML}, 2008.

\bibitem[Mnih and Gregor(2014)]{MnihG:icml14}
A.~Mnih and K.~Gregor.
\newblock Neural variational inference and learning in belief networks.
\newblock In \emph{ICML}, 2014.

\bibitem[Debussche and Faou(2012)]{DebusscheF:SIAMJNA12}
A.~Debussche and E.~Faou.
\newblock Weak backward error analysis for {SDE}s.
\newblock \emph{SIAM J. NUMER. ANAL.}, 50\penalty0 (3):\penalty0 1734--1752,
  2012.

\bibitem[Kopec(2014)]{Kopec:JNA14}
M.~Kopec.
\newblock Weak backward error analysis for overdamped {L}angevin processes.
\newblock \emph{IMA J. NUMER. ANAL.}, 2014.

\end{thebibliography}
%}

\newpage
\noindent\makebox[\linewidth]{\rule{\linewidth}{3.5pt}}
\begin{center}
\bf{\Large Supplementary Material for:\\ On the Convergence of Stochastic Gradient MCMC Algorithms with High-Order Integrators}
\end{center}
\noindent\makebox[\linewidth]{\rule{\linewidth}{1pt}}

\begin{center}
Changyou Chen$^\dag$~~~~~~~~~~~~ Nan Ding$^\ddag$ ~~~~~~~~~~~~ Lawrence Carin$^\dag$ \\%\thanks{ Use footnote for providing further information about author (webpage, alternative address)---\emph{not} for acknowledging funding agencies.} \\
$^\dag$Dept. of Electrical and Computer Engineering, Duke University, Durham, NC, USA \\
$^\ddag$Google Inc., Venice, CA, USA \\
\texttt{cchangyou@gmail.com; dingnan@google.com; lcarin@duke.edu}
\end{center}

\appendix

\section{Representative Stochastic Gradient MCMC Algorithms}\label{sec:sgmcmc_review}

This section briefly introduces three recently proposed stochastic gradient MCMC algorithms,
including the stochastic gradient Langevin dynamic (SGLD) \cite{WellingT:ICML11}, the
stochastic gradient Hamiltonian MCMC (SGHMC) \cite{ChenFG:ICML14}, and the stochastic
gradient Nos\'{e}-Hoover thermostat \cite{DingFBCSN:NIPS14} (SGNHT).

Given data $\Xb = \{\xb_1, \cdots, \xb_N\}$, a generative model $p(\Xb | \thetab) = \prod_{i=1}^N p(\xb_i | \thetab)$
with model parameter $\thetab$, and prior $p(\thetab)$, we want to compute the posterior: 
$$\pi(\thetab) \triangleq p(\thetab | \Xb) \propto p(\Xb | \thetab) p(\thetab) \triangleq e^{-U(\thetab)}~.$$

\subsection{Stochastic gradient Langevin dynamics}
The SGLD \cite{WellingT:ICML11} is based on the following 1st-order Langevin dynamic defined as:
\begin{equation}\label{eq:BD}
	\mathrm{d}\thetab = -\frac{1}{2}\nabla_{\thetab} U(\thetab) \mathrm{d}t + \mathrm{d}W~,
\end{equation}
where $W$ is the standard Brownian motion.
We can show via the Fokker--Planck equation that the equilibrium distribution of \eqref{eq:BD} is:
$$p(\thetab) = \pi(\thetab)~.$$

As described in the main text, when sampling from this continuous-time diffusion, two approximations
are adopted, {\it e.g.}, a numerical integrator and a stochastic gradient version $\tilde{U}_l(\thetab_{(l-1)h})$ of 
the log-likelihood $U(\thetab)$ from the $l$-th minibatch. This results in the following SGLD algorithm.

\begin{algorithm}
\SetKwInOut{Input}{Input}
\caption{Stochastic Gradient Langevin Dynamics}
\Input{Parameters $h$.}
Initialize $\thetab_{0} \in \RR^n$ \;
\For {$l = 1, 2, \ldots $} {
Evaluate $\nabla \tilde{U}_l(\thetab_{(l-1)h})$ from the $l$-th minibatch \;
$\thetab_{lh} = \thetab_{(l-1)h}  - \nabla \tilde{U}_l(\thetab_{(l-1)h}) h + \sqrt{2h}\Ncal(0, I)$\;
}
\label{alg:sgld}
\end{algorithm}

\subsection{Stochastic gradient Hamiltonian MCMCs}

The SGHMC \cite{ChenFG:ICML14} is based on the 2nd-order Langevin dynamic defined as:
\begin{align}\label{eq:LD}
	\left\{\begin{array}{ll}
		\mathrm{d}\thetab &= \pb \mathrm{d}t \\
		\mathrm{d}\pb &= -\nabla_{\thetab} U(\thetab) \mathrm{d}t - D\pb \mathrm{d}t + \sqrt{2D} \mathrm{d}W~,
	\end{array} \right.
\end{align}
where $D$ is a constant independent of $\thetab$ and $\pb$. Again we can show that the equilibrium distribution
of \eqref{eq:LD} is:
$$P(\thetab, \pb) \propto e^{-U(\thetab) + \frac{\pb^T\pb}{2}}~.$$

Similar to the SGLD, we use the Euler scheme to simulate the dynamic \eqref{eq:LD}, shown in Algorithm~\ref{alg:sghmc}.

\begin{algorithm}
\SetKwInOut{Input}{Input}
\caption{Stochastic Gradient Hamiltonian MCMC}
\Input{Parameters $h, D$.}
Initialize $\thetab_{0} \in \RR^n$, $\pb_{0} \sim \Ncal(0,\Ib)$ \;
\For {$l = 1, 2, \ldots $} {
Evaluate $\nabla \tilde{U}_l(\thetab_{(l-1)h})$ from the $l$-th minibatch \;
$\pb_{lh} = \pb_{(l-1)h} - D \pb_{(l-1)h} h - \nabla \tilde{U}_l(\thetab_{(l-1)h}) h + \sqrt{2Dh}\Ncal(0, 1)$\;
$\thetab_{lh} = \thetab_{(l-1)h} + \pb_{lh} h$\;
}
\label{alg:sghmc}
\end{algorithm}

\subsection{Stochastic gradient No\'{s}e-Hoover thermostats}

The SGNHT \cite{DingFBCSN:NIPS14} is based on the No\'{s}e-Hoover thermostat defined as:
\begin{align}\label{eq:NHT}
	\left\{\begin{array}{ll}
		\mathrm{d}\thetab &= \pb \mathrm{d}t \\
		\mathrm{d}\pb &= -\nabla_{\thetab} U(\thetab) \mathrm{d}t - \xi \pb \mathrm{d}t + \sqrt{2D} \mathrm{d}W \\
		\mathrm{d}\xi &= \left(\pb^T\pb / n - 1\right)~,
	\end{array} \right.
\end{align}
If $D$ is independent of $\thetab$ and $\pb$, it can also be shown that the equilibrium distribution
of \eqref{eq:NHT} is \cite{DingFBCSN:NIPS14}:
\begin{align*}
	P(\thetab, \pb, \xi) \propto e^{-U(\thetab) - \frac{1}{2}\pb^T\pb + \frac{1}{2}(\xi - D)^2}~.
\end{align*}

The SGNHT is much more interesting than the SGHMC when considering subsampling data in each iteration,
as the covariance $D$ in SGHMC is hard to estimate, a thermostat is used to adaptively control the system 
temperature, thus automatically estimate the unknown $D$. The whole algorithm is shown in Algorithm~\ref{alg:sgnht}.

\begin{algorithm}
\SetKwInOut{Input}{Input}
\caption{Stochastic Gradient Nos\'{e}-Hoover Thermostats}
\Input{Parameters $h, D$.}
Initialize $\thetab_{0} \in \RR^n$, $\pb_{0} \sim \Ncal(0,\Ib)$, and $\xi_{0}=D$ \;
\For {$l = 1, 2, \ldots $} {
Evaluate $\nabla \tilde{U}_l(\thetab_{(l-1)h})$ from the $l$-th minibatch \;
$\pb_{lh} = \pb_{(l-1)h} - \xi_{(l-1)h} \pb_{(l-1)h} h - \nabla \tilde{U}_l(\thetab_{(l-1)h}) h + \sqrt{2Dh}\Ncal(0, I)$\;
$\thetab_{lh} = \thetab_{(l-1)h} + \pb_{lh} h$\;
$\xi_{lh} = \xi_{(l-1)h} + (\frac{1}{n} \pb_{lh}^{\top} \pb_{lh} - 1) h $\;
}
\label{alg:sgnht}
\end{algorithm}

\section{More Details on Kolmogorov's Backward Equation}\label{sec:KBE}

The generator $\mathcal{L}$ is used in the formulation of Kolmogorov's backward equation, 
which intuitively tells us how the expected
value of any suitably smooth statistic of $\Xb$ evolves in time. More precisely:
\begin{definition}[Kolmogorov's Backward Equation]
Let $u(t, \xb) = \mathbb{E}\left[\phi(\Xb_t)\right]$,
then $u(t, \xb)$ satisfies the following partial differential equation, known as {\em Kolmogorov's backward equation}:
\begin{align}\label{eq:koleq1}
	\left\{\begin{array}{ll}
		\frac{\partial u}{\partial t}(t, \xb) = \mathcal{L}u(t, \xb)~, & t > 0, \xb \in \mathbb{R}^n \\
		u(0, \xb) = \phi(\xb), & \xb \in \RR^n
		\end{array} \right.
\end{align}
\end{definition}

Based on the definition, we can write $u(t, \cdot) = P_t \phi(\cdot)$ so that $(P_t)_{t \geq 0}$ is the transition semigroup associated
with the Markov process $(\Xb(t, \xb))_{t\geq 0, \xb \in \mathbb{R}^n}$ \cite{DebusscheF:SIAMJNA12} (also
called the Kolmogorov operator).
Note that the {\em Kolmogorov's backward equation} can be written in another form as:
\begin{align}\label{eq:koleq}
	u(t, \xb) = \mathbb{E}\left[\phi(\Xb_t)\right] = e^{t\mathcal{L}}\phi(\xb)~,
\end{align}
where $e^{t\mathcal{L}}$ is the exponential map operator associated with the generator defined as:
\begin{align*}
	e^{t\mathcal{L}} \triangleq \mathbb{I} + \sum_{i=1}^\infty \frac{\left(t\mathcal{L}\right)^i}{i!}~,
\end{align*}
with $\mathbb{I}$ being the identity map. This is obtained by expanding $u(t, \xb)$ in time by using 
Taylor expansion \cite{DebusscheF:SIAMJNA12}:
\begin{align}\label{eq:kb_expansion}
	u(t, \xb) &= u(0, \xb) + \sum_{i=1}^\infty \frac{t^i}{i!}\frac{\mathrm{d}^i}{\mathrm{d}t^i} u(t, \xb) \left|_{t=0}\right. \nonumber\\
	&= u(0, \xb) + \sum_{i=1}^\infty \frac{t^i}{i!}\frac{\mathrm{d}^{i-1}}{\mathrm{d}t^{i-1}} \frac{\mathrm{d}}{\mathrm{d}t} u(t, \xb) \left|_{t=0}\right. \nonumber\\
	&= u(0, \xb) + \sum_{i=1}^\infty \frac{t^i}{i!} \mathcal{L}\frac{\mathrm{d}^{i-1}}{\mathrm{d}t^{i-1}} u(t, \xb) \left|_{t=0}\right. \nonumber\\
	&= \phi(\xb) + \sum_{i=1}^\infty \frac{t^i}{i!} \mathcal{L}^i \phi(\xb) = e^{t\mathcal{L}}\phi(\xb)~.
\end{align}

The form \eqref{eq:koleq} instead of the original form \eqref{eq:koleq1} of the Kolmogorov's backward equation
is used in our analysis. To be able to expand the form \eqref{eq:koleq} to some particular order such that remainder
terms are bounded, the following assumption is required \cite{Kopec:JNA14}.

\begin{assumption}
Assume 1) $F(\Xb)$ is $C^{\infty}$ with bounded derivatives of any order, furthermore, and 2) $|F(\xb)| \leq C (1 + |\Xb|^s)$
for some positive integer $s$. Under these assumptions, series of the generator expansion can be bounded, thus 
\eqref{eq:kb_expansion} can be written in the following form \cite{Kopec:JNA14,VollmerZT:arxiv15}:
\begin{align}
	u(t, \xb) = \phi(\xb) + \sum_{i=1}^\ell \frac{t^i}{i!} \mathcal{L}^i \phi(\xb) + t^{\ell + 1} r_{\ell}(F, \phi)(\xb)~,
\end{align}
with $\left|r_{\ell}(F, \phi)(\xb)\right| \leq C_{\ell}(1 + |\xb|^{k_{\ell}})$ for some constant $C_{\ell}, k_{\ell}$.
\end{assumption}

\section{More Comments on Assumption~\ref{ass:assumption1}}\label{sec:ass}

Assumption~\ref{ass:assumption1} assumes that the solution functional $\psi$ of the Poisson equation 
\eqref{eq:PoissonEq1} satisfies: $\psi$ and its up to 3-rd order derivatives, $\mathcal{D}^k \psi$, 
are bounded by a function $\mathcal{V}$, {\it i.e.}, $\|\mathcal{D}^k \psi\| \leq C_k\mathcal{V}^{p_k}$ for $k=(0, 1, 2, 3)$, 
$C_k, p_k > 0$. 
Furthermore, $\mathcal{V}$ is smooth such that $\sup_{s \in (0, 1)} \mathcal{V}^p(s\Xb + (1-s)\Yb) \leq C(\mathcal{V}^p(\Xb) + \mathcal{V}^p(\Yb))$, 
$\forall \Xb, \Yb, p \leq p^{*} \triangleq \max\{2p_k\}$ for some $C > 0$. Finally, $\sup_l \mathbb{E}\mathcal{V}^p(\X_{lh}) < \infty$ 
for $p \leq p^{*}$. This is summarized as:
\begin{align}
	\sup_l \mathbb{E}\mathcal{V}^p(\X_{lh}) &< \infty \label{eq:ass_eq1}\\
	\sup_{s \in (0, 1)} \mathcal{V}^p(s\Xb + (1-s)\Yb) &\leq C(\mathcal{V}^p(\Xb) + \mathcal{V}^p(\Yb)) \label{eq:ass_eq2}\\
	\|\mathcal{D}^k \psi\| &\leq C_k\mathcal{V}^{p_k} \label{eq:ass_eq3}
\end{align}

Compared to the SGLD case \cite{VollmerZT:arxiv15}, in our proofs, we only need $k$ be up to 3 
in \eqref{eq:ass_eq3} instead of 4. More specifically, the proof for the bias only needs $k$ be up 
to 0 given other assumptions in this paper, and the proof for the MSE needs $k$ be up to 3. 

As long as the corresponding SDE is hypoelliptic, meaning that the Brownian motion $W$ is able
to propagate to the other variables of the dynamics \cite{MattinglyST:JNA10}, {\it e.g.}, the model
parameter $\thetab$ in SGHMC, we can
extend Assumption~4.1 of \cite{VollmerZT:arxiv15} to our setting. Thus we have that \eqref{eq:ass_eq1} 
is equivalent to finding a function $\mathcal{V}: \RR^{n} \rightarrow [1, \infty]$ ($n$ is the dimension of
$\xb$, {\it e.g.}, including the momentum in SGHMC), which tends to infinity as 
$\xb \rightarrow \infty$, and is twice differentiable with bounded second derivatives and satisfies the 
following conditions:
\begin{itemize}
	\item [1.] $\mathcal{V}$ is a Lyapunov function of the SDE, {\it i.e.}, there exists constants $\alpha, \beta > 0$,
	such that for $\xb \in \RR^{n}$, we have
	$\langle \nabla_{\xb}\mathcal{V}(\xb), F(\xb) \rangle \leq -\alpha \mathcal{V}(\xb) + \beta$.
	\item [2.] There exists an exponent $p_H \geq 2$ such that 
	$\mathbb{E}\left\|\tilde{F}(\xb) - \mathbb{E}_{s}\tilde{F}(\xb)\right\| \lesssim \mathcal{V}^{p_H}(\xb)$,
	where $\mathbb{E}_{s}$ means expectation with respect to the random permutation of the data,
	$\mathbb{E}$ means expectation with respect to the randomness of the dynamic with Brownian motion.
	Furthermore, for $\xb \in \RR^{n}$, we have: $\left\|\nabla_{\xb}\mathcal{V}(\xb)\right\|^2 + \left\|F(\xb)\right\|^2 \lesssim \mathcal{V}(\xb)$.
\end{itemize}

Similar to \cite{VollmerZT:arxiv15}, \eqref{eq:ass_eq2} is an extra condition that needs to be satisfied,
and \eqref{eq:ass_eq3} is more subtle and needs more assumptions to verify in this case. We will not
address these issues because it is out of the scope of the paper.

\section{The Proof of Theorem~\ref{theo:bias1}}\label{sec:bias_proof}

\begin{proof}

For an SG-MCMC with a $K$th-order integrator, according to Definition~\ref{def:k-order-integrator}
and \eqref{eq:sgmcmc_integrator}, we have: 
	\begin{align} \label{eq:split_flow1}
		\mathbb{E}[\psi(\Xb_{lh})] &= \tilde{P}_h^l \psi(\Xb_{(l-1)h}) = e^{h\tilde{\Lcal}_l} \psi(\Xb_{(l-1)h}) + O(h^{K+1}) \nonumber\\
		&= \left(\mathbb{I} + h\tilde{\Lcal}_l\right) \psi(\Xb_{(l-1)h}) + \sum_{k=2}^K\frac{h^k}{k!}\tilde{\Lcal}_l^k\psi(\Xb_{(l-1)h}) + O(h^{K+1})~,
	\end{align}
	where $\mathbb{I}$ is the identity map. Sum over $l = 1, \cdots, L$ in \eqref{eq:split_flow1}, take expectation on both sides, and use the relation
	$\tilde{\Lcal}_l = \Lcal + \Delta V_l$ to expand the first order term. We obtain
	\begin{align*}
		\sum_{l=1}^{L}\mathbb{E}[\psi(\Xb_{lh})] =& \psi(\Xb_0) + \sum_{l=1}^{L-1} \mathbb{E}[\psi(\Xb_{lh})] + h\sum_{l=1}^{L} \mathbb{E}[\mathcal{L}\psi(\Xb_{(l-1)h})] \\
		&+ h\sum_{l=1}^L \mathbb{E}[\Delta V_l \psi(\Xb_{(l-1)h})] + \sum_{k=2}^K\frac{h^k}{k!}\sum_{l=1}^L \EE[\tilde{\Lcal}_l^k\psi(\Xb_{(l-1)h})] + O(L h^{K+1}).
	\end{align*}
	Divide both sides by $Lh$, use the Poisson equation \eqref{eq:PoissonEq1}, and reorganize terms. We have:
	\begin{align}\label{eq:expansion11}
		&\mathbb{E}[\frac{1}{L}\sum_l\phi(\Xb_{lh}) - \bar{\phi}] = \frac{1}{L}\sum_{l=1}^{L} \mathbb{E}[\mathcal{L}\psi(\Xb_{(l-1)h})] \\
		=&\frac{1}{Lh}\left(\mathbb{E}[\psi(\Xb_{lh})] - \psi(\Xb_0)\right)
		- \frac{1}{L}\sum_l \mathbb{E}[\Delta V_l\psi(\Xb_{(l-1)h})] - \sum_{k=2}^K\frac{h^{k-1}}{k!L}\sum_{l=1}^L \EE[\tilde{\Lcal}_l^k\psi(\Xb_{(l-1)h})] + O(h^K) \nonumber
	\end{align}
	To transform terms containing $\tilde{\Lcal}_l^k (k \geq 2)$ to high-order terms, based on ideas from \cite{MattinglyST:JNA10},
	we apply the following procedure.
	First replace $\psi$ with $\tilde{\Lcal}_l^{K-1}\psi$ from 
	\eqref{eq:split_flow1} to \eqref{eq:expansion11}, and apply the same logic for $\tilde{\Lcal}_l^{K-1}\psi$ 
	as for $\psi$ in the above derivations, but this time expand in \eqref{eq:split_flow1}
	up to the order of $O(h^2)$, instead of the previous order $O(h^{K+1})$. After simplification, we obtain:
	\begin{align}\label{eq:expansion21}
		&\sum_l \mathbb{E}[\tilde{\Lcal}_l^K \psi(\Xb_{(l-1)h})]
		= O\left(\frac{1}{h} + Lh\right)
	\end{align}
	Similarly, replace $\psi$ with $\tilde{\Lcal}_l^{K-2}\psi$ from 
	\eqref{eq:split_flow1} to \eqref{eq:expansion11}, follow the same derivations as for $\tilde{\Lcal}_l^{K-1}\psi$, 
	but expand in \eqref{eq:split_flow1} up to the order of $O(h^3)$ instead of $O(h^2)$. We have:
	\begin{align}\label{eq:expansion22}
		&\sum_l \mathbb{E}[\tilde{\Lcal}_l^{K-1} \psi(\Xb_{(l-1)h})]
		= O\left(\frac{1}{h} + Lh^2\right) + \frac{h}{2}\sum_{l=1}^L \mathbb{E}[\tilde{\Lcal}_l^{K} \psi(\Xb_{(l-1)h})]
		= O\left(\frac{1}{h} + Lh^2\right)~,
	\end{align}
	where the last equation in \eqref{eq:expansion22} is obtained by substituting \eqref{eq:expansion21} into it 
	and collecting low order terms.
	By induction on $k$, it is easy to show that for $2 \leq k \leq K$, we have:
	\begin{align}\label{eq:expansion23}
		\sum_l \mathbb{E}[\tilde{\Lcal}_l^{k} \psi(\Xb_{(l-1)h})]
		= O\left(\frac{1}{h} + Lh^{K-k+1}\right)~,
	\end{align}
	Substituting \eqref{eq:expansion23} into \eqref{eq:expansion11}, after simplification, we have:
	$\mathbb{E}\left(\frac{1}{L}\sum_l\phi(\Xb_{lh}) - \bar{\phi}\right)$
	\begin{align*}
		%&\mathbb{E}\left(\frac{1}{L}\sum_l\phi(X_{lh}) - \bar{\phi}\right) 
		=\frac{1}{Lh}\underbrace{\left(\mathbb{E}[\psi(\Xb_{lh})] - \psi(\Xb_0)\right)}_{C_1}
		- \frac{1}{L}\sum_l \mathbb{E}[\Delta V_l\psi(\Xb_{(l-1)h})] - \sum_{k=2}^KO\left(\frac{h^{k-1}}{Lh} + h^{K}\right) + C_3 h^K~,
	\end{align*}	
	for some $C_3 \geq 0$.
	According to the assumption, the term $C_1$ is bounded. As a result, collecting low order terms, the bias can be expressed as:
	\begin{align*}
		\left|\mathbb{E}\hat{\phi} - \bar{\phi}\right| &= \left|\mathbb{E}\left(\frac{1}{L}\sum_l\phi(\Xb_{lh}) - \bar{\phi}\right)\right|
		= \left|\frac{C_1}{Lh} - \frac{\sum_l \mathbb{E}\Delta V_l\psi(\Xb_{(l-1)h})}{L} + C_3 h^K\right| \\
		&\leq \left|\frac{C_1}{Lh}\right| + \left| \frac{\sum_l \mathbb{E}\Delta V_l\psi(\Xb_{(l-1)h})}{L}\right| + \left|C_3 h^K\right|
		= O\left(\frac{1}{Lh} + \frac{\sum_l \left\|\mathbb{E}\Delta V_l\right\|}{L} + h^K\right)~,
	\end{align*}
	where the last equation follows from the finiteness assumption of $\psi$, $\|\cdot\|$ denotes the operator norm
	and is bounded in the space of $\psi$ due to the assumptions. 
	This completes the proof.
\end{proof}

\section{The Proof of Theorem~\ref{theo:MSE}}

\begin{proof}
For a $K$-order integrator, from Theorem~\ref{theo:bias1}, we can expand $\mathbb{E}\left(\psi(\Xb_{lh})\right)$ as:
\begin{align*}
	\mathbb{E}\left(\psi(\Xb_{lh})\right) = \left(\mathbb{I} + h(\mathcal{L} + \Delta V_l)\right) \psi(\Xb_{(l-1)h}) + \sum_{k=2}^K\frac{h^k}{k!}\tilde{\mathcal{L}}_l^k \psi(\Xb_{(l-1)h}) + O(h^{K+1})~.
\end{align*}
Sum over $l$ from 1 to $L+1$ and simplify, we have:
\begin{align*}
	\sum_{l=1}^L\mathbb{E}\left(\psi(\Xb_{lh})\right) &= \sum_{l=1}^L \psi(\Xb_{(l-1)h}) + h\sum_{l=1}^L \mathcal{L}\psi(\Xb_{(l-1)h}) + h\sum_{l=1}^L \Delta V_l\psi(\Xb_{(l-1)h}) \\
	&+ \sum_{k=2}^K\frac{h^k}{k!}\sum_{l=1}^L\tilde{\mathcal{L}}_l^k \psi(\Xb_{(l-1)h}) + O(L h^{K+1})~.
\end{align*}
Substitute the Poisson equation \eqref{eq:PoissonEq1} into the above equation, divide both sides by $Lh$ 
and rearrange related terms, we have
\begin{align*}
	\hat{\phi} - \bar{\phi} &= \frac{1}{Lh}\left(\mathbb{E}\psi(\Xb_{Lh}) - \psi(\Xb_0)\right) - \frac{1}{Lh}\sum_{l=1}^{L}\left(\mathbb{E}\psi(\Xb_{(l-1)h}) - \psi(\Xb_{(l-1)h})\right) \\
	&- \frac{1}{L}\sum_{l=1}^L \Delta V_l\psi(\Xb_{(l-1)h})
	- \sum_{k=2}^K\frac{h^{k-1}}{2L}\sum_{l=1}^L\tilde{\mathcal{L}}_l^k \psi(\Xb_{(l-1)h}) + O(h^K)
\end{align*}
Taking square and expectation on both sides, since the terms $\left(\mathbb{E}\psi(\Xb_{(l-1)h}) - \psi(\Xb_{(l-1)h})\right) $ and
$\Delta V_l\psi(\Xb_{(l-1)h})$ are martingale, it is then easy to see there exists some positive constant $C$, such that
\begin{align}\label{eq:mse1}
	\mathbb{E}\left(\hat{\phi} - \bar{\phi}\right)^2 &\leq C\mathbb{E}\left(\underbrace{\frac{\left(\mathbb{E}\psi(\Xb_{Lh}) - \psi(\Xb_0)\right)^2}{L^2h^2}}_{A_1} + \underbrace{\frac{1}{L^2h^2}\sum_{l=1}^L\left(\mathbb{E}\psi(\Xb_{(l-1)h}) - \psi(\Xb_{(l-1)h})\right)^2}_{A_2} \right.\nonumber\\
	&+ \left.\frac{1}{L^2}\sum_{l=1}^L \Delta V_l^2\psi(\Xb_{(l-1)h}) + \underbrace{\sum_{k=2}^K\frac{h^{2(k-1)}}{k!L^2}\left(\sum_{l=1}^L\tilde{\mathcal{L}}_l^k \psi(\Xb_{(l-1)h})\right)^2}_{A_3} + h^{2K}\right) 
\end{align}
$A_1$ is easily bounded by the assumption that $\|\psi\| \leq V^{p_0} < \infty$, the expectation of $A_3$ can also be shown
to be bounded later in \eqref{eq:mse_bound1}. Now we show that 
$A_2$ is bounded as well by deriving the following bound:
$\mathbb{E}\left(\psi(\Xb_{lh})\right) - \psi(\Xb_{lh}) \leq C_1 \sqrt{h} + O(h)$ for $C_1 \geq 0$.
To do this, it is enough to consider the 2nd order symmetric splitting scheme, as higher order integrators generally
introduce higher order errors. Furthermore, we see that different splitting schemes, {\it e.g.}, ABOBA and OABAO, are essentially 
equivalent as long as they are symmetric \cite{LeimkuhlerM:AMRE13}, 
thus we focus on the ABOAB scheme in the proof. Let the flow propagators (mappings) of
`A' , `B' and `O' be denoted as $\tilde{\varphi}_h^A$, $\tilde{\varphi}_h^B$ and $\tilde{\varphi}_h^{O_l}$ respectively. 
Since $\tilde{\varphi}_h^A$ and $\tilde{\varphi}_h^B$ are deterministic, we combine
them and use $\tilde{\varphi}_h^{AB}$ to represent the composition flow $\tilde{\varphi}_h^A \circ \tilde{\varphi}_h^B$. We further
decompose $\tilde{\varphi}_h^{O_l}$ into the deterministic part $\tilde{\varphi}_h^O$ and the stochastic part
$\tilde{\varphi}_h^{\zetab}$ from the brownian motion, then in the iteration for the current minibatch, we can 
express the flow evolution as:
\begin{align}\label{eq:X_l}
	\Xb_{lh} &= \tilde{\varphi}_h^{AB} \circ \left(\tilde{\varphi}_h^O \circ \tilde{\varphi}_h^{\zeta}\right) \circ \tilde{\varphi}_h^{AB} (\Xb_{(l-1)h}) \nonumber\\
	&= \tilde{\varphi}_h^{AB} \left(\tilde{\varphi}_h^O \left(\tilde{\varphi}_h^{AB}(\Xb_{(l-1)h})\right) + \sqrt{2Dh}\zetab_l\right)~,
\end{align}
where $\zetab_{l}$ is a $n$-dimensional independent Gaussian random variables.

From Assumption~\ref{ass:assumption1} we know that both $\tilde{\varphi}_h^{O}$ and $\tilde{\varphi}_h^{AB}$
have bounded derivatives. To simplify the representation, we denote 
$\tilde{\Xb}_{l} \triangleq \tilde{\varphi}_h^{AB}\left(\tilde{\varphi}_h^{O}\left(\tilde{\varphi}_h^{AB}(\Xb_{(l-1)h})\right)\right)$.
Now we can expanded $\Xb_{lh}$ from~\eqref{eq:X_l} using Taylor expansion as:
\begin{align}\label{eq:flowdiff}
	\Xb_{lh} &= \tilde{\varphi}_h^{AB}\left(\tilde{\varphi}_h^{O}\left(\tilde{\varphi}_h^{AB}(\Xb_{(l-1)h})\right) + \sqrt{2Dh} \zeta_l\right) \nonumber\\
	&= \tilde{\Xb}_l + \mathcal{D}\tilde{\Xb}_{l}\left[\sqrt{2Dh} \zetab_l\right] + \frac{1}{2}\mathcal{D}^2\tilde{\Xb}_{l}\left[\sqrt{2Dh} \zetab_l, \sqrt{2Dh} \zetab_l\right] +  O(h\zetab_l^2)
\end{align}
Using the relation~\eqref{eq:flowdiff}, for the solution $\psi$ of the Poisson equation \eqref{eq:PoissonEq1} applied on $\Xb_{lh}$,
we can bow expand it up to 3 orders from the Taylor theory:
\begin{align}\label{eq:expx_t_x}
	&\psi(\Xb_{lh}) = \psi\left(\tilde{\varphi}_h^{AB}\left(\tilde{\varphi}_h^{O}\left(\tilde{\varphi}_h^{AB}(\Xb_{(l-1)h})\right)\right) + \mathcal{D}\tilde{\Xb}_{l}\left[\sqrt{2Dh} \zetab_l\right] + \frac{1}{2}\mathcal{D}^2\tilde{\Xb}_{l}\left[\sqrt{2Dh} \zetab_l, \sqrt{2Dh} \zetab_l\right] +  O(h\zetab_l^2)\right) \nonumber\\
	=& \psi\left(\tilde{\Xb}_l\right) + \underbrace{\mathcal{D}\psi(\tilde{\Xb}_l)\left[\mathcal{D}\tilde{\Xb}_{l}\left[\sqrt{2Dh} \zetab_l\right]\right]}_{M_1} + \underbrace{\frac{1}{2}\mathcal{D}\psi(\tilde{\Xb}_l)\left[\mathcal{D}^2\tilde{\Xb}_{l}\left[\sqrt{2Dh} \zetab_l, \sqrt{2Dh} \zetab_l\right]\right]}_{S_1} \nonumber\\
	&+ \underbrace{\frac{1}{2}\mathcal{D}^2\psi(\tilde{\Xb}_l)\left[\left(\mathcal{D}\tilde{\Xb}_{l}\left[\sqrt{2Dh} \zeta_l\right] + \frac{1}{2}\mathcal{D}^2\tilde{\Xb}_{l}\left[\sqrt{2Dh} \zetab_l, \sqrt{2Dh} \zetab_i\right]\right)^{2\bigotimes}\right]}_{S_2} \\
	&+ \underbrace{\frac{1}{2}\int_0^1 s^2 \mathcal{D}^3\psi(s\Xb_{(l-1)h} + (1 - s)\tilde{\Xb}_{l})\left[\left(\mathcal{D}\tilde{\Xb}_{l}\left[\sqrt{2Dh} \zetab_l\right] + \frac{1}{2}\mathcal{D}^2\tilde{\Xb}_{l}\left[\sqrt{2Dh} \zetab_l, \sqrt{2Dh} \zetab_l\right]\right)^{3\bigotimes}\right]}_{R} \nonumber%O(h\zeta_i^2)
\end{align}
where $[(\Xb)^{N\bigotimes}] \triangleq [\underbrace{\Xb, \cdots, \Xb}_{N}]$.

Note that the vector fields inside the brackets in the above expression are all bounded due to 
Assumption~\ref{ass:assumption1}. As a result, we can show that $M_1, S_1, S_2$ and $R$ 
are bounded by the boundedness assumption on $\psi$ and its
derivatives. Specifically, in the following we will use $a \lesssim b$ to represent there is a $C \geq 0$ such that
$a \leq Cb$. 
Let $\tilde{\varphi}_{h_{l}}(\xb) \triangleq \tilde{\varphi}_h^{OA}\left(\tilde{\varphi}_h^{B}\left(\tilde{\varphi}_h^{OA}(\Xb_{lh} + \xb)\right)\right)$, according to the definition of directional derivative, we have
\begin{align*}
	\mathcal{D}\tilde{\Xb}_{l}\left[\sqrt{2Dh} \zetab_l\right] &\triangleq \lim_{\alpha \rightarrow 0} \frac{\tilde{\varphi}_{h_{l-1}}(\alpha \sqrt{2Dh} \zetab_l) - \tilde{\varphi}_{h_{l-1}}(0)}{\alpha} \\
	&= \lim_{\alpha \rightarrow 0} \frac{\alpha \sqrt{2Dh} J(0) \zetab_l + O(\alpha)}{\alpha} = \sqrt{2Dh} J(0) \zetab_i~,
\end{align*}
where $J(\xb)$ is the Jacobian of $\tilde{\varphi}_{h_{l-1}}(\xb)$ and is bounded. Thus
\begin{align}\label{eq:largeorder}
	\mathbb{E}M_1^2 \lesssim h \sup_l \mathbb{E}\mathcal{V}_l^{2p_1} \lesssim h~.
\end{align}
Similarly, for $S_1$ and $S_2$, using the assumptions in the theory, we have
\begin{align*}
	\mathbb{E}S_1^2 &\lesssim h^2 \sup_l \mathbb{E}\mathcal{V}_l^{2p_1} \lesssim h^2 \\
	\mathbb{E}S_2^2 &\lesssim (\sqrt{h} + h)^2 \sup_l \mathbb{E}\mathcal{V}_l^{2p_2} \lesssim (\sqrt{h} + h)^2~.
\end{align*}
For $R$, using Assumption~\ref{ass:assumption1}, we have
\begin{align*}
	\mathbb{E}R^2 &\lesssim (\mathbb{E}\mathcal{V}(\Xb_{(l-1)h})^{2p_{3}} + \mathbb{E}\mathcal{V}(\tilde{\Xb}_l)^{2p_3}) \left\|\mathcal{D}\tilde{\varphi}_{h_{l-1}}^{OA}\left[\sqrt{2Dh} \zetab_i\right] + \frac{1}{2}\mathcal{D}^2\tilde{\varphi}_{h_{l-1}}^{OA}\left[\sqrt{2Dh} \zetab_l, \sqrt{2Dh} \zetab_l\right]\right\|^3 \\
	&\lesssim h^{3}~.
\end{align*}

The expectation of $\psi(\Xb_{lh})$ can be similarly bounded. Collecting low order terms, we have
\begin{align*}
	\mathbb{E}\left(\mathbb{E}\left(\psi(\Xb_{lh})\right) - \psi(\Xb_{lh})\right)^2 = Ch + O(h^{3/2})~,
\end{align*}
for some $C > 0$. As a result, the expectation of the $A_2$ term in \eqref{eq:mse1} can be bounded
using the above derived bound on $\mathbb{E}\left(\psi(\Xb_{lh})\right) - \psi(\Xb_{lh})$.
\begin{align}\label{eq:sum_bound}
	\frac{1}{L^2h^2}\sum_{l}\mathbb{E}\left(\mathbb{E}\psi(\Xb_{lh}) - \psi(\Xb_{lh})\right)^2 = \frac{C}{L h} + O(\frac{1}{L\sqrt{h}})~.
\end{align}
Substitute \eqref{eq:sum_bound} into \eqref{eq:mse1} we can bound the MSE as:
\begin{align}
	&\mathbb{E}\left(\hat{\phi} - \bar{\phi}\right)^2 \nonumber\\
	\lesssim& \frac{\frac{1}{L}\sum_l\mathbb{E}\Delta V_l^2\psi(\Xb_{(l-1)h})}{L} + \sum_{k=2}^K\frac{h^{2(k-1)}}{2L^2}\mathbb{E}\left(\sum_{l=1}^L\tilde{\mathcal{L}}_l^k \psi(\Xb_{(l-1)h})\right)^2 + \frac{1}{Lh} + \frac{1}{L^2h^2} + O(h^{2K}) \nonumber\\
	=& \frac{\frac{1}{L}\sum_l\mathbb{E}\Delta V_l^2\psi(\Xb_{(l-1)h})}{L} + \underbrace{\sum_{k=2}^K\frac{h^{2(k-1)}}{2L^2}\left(\sum_{l=1}^L\mathbb{E}\left[\tilde{\mathcal{L}}_l^k \psi(\Xb_{(l-1)h})\right]\right)^2}_{A_1} + \frac{1}{Lh} + \frac{1}{L^2h^2} \nonumber\\
	&+ \sum_{k=2}^K\underbrace{\frac{h^{2(k-1)}}{2L^2}\mathbb{E}\left(\sum_{l=1}^L\left(\tilde{\mathcal{L}}_l^k \psi(\Xb_{(l-1)h}) - \mathbb{E}\tilde{\mathcal{L}}_l^k \psi(\Xb_{(l-1)h})\right)\right)^2}_{A_2} + O(h^{2K}) \label{eq:mse_bound1}\\
	\leq& C\left(\frac{\frac{1}{L}\sum_l\mathbb{E}\left\|\Delta V_l\right\|^2}{L} + \frac{1}{Lh} + h^{2K}\right)\label{eq:mse_bound2}
\end{align}
for some $C > 0$, where \eqref{eq:mse_bound1} follows by using the fact that $\mathbb{E}[\Xb^2] = \mathbb{E}[(\Xb - \mathbb{E}\Xb)^2] + (\mathbb{E}\Xb)^2$ for a random variable $\Xb$. \eqref{eq:mse_bound2} follows by using the bounds in \eqref{eq:expansion23}
on $A_1$, which is bounded by $O(\frac{1}{L^2h^2} + h^{2K})$. For $A_2$, because the terms 
$\left(\tilde{\mathcal{L}}_l^k \psi(\Xb_{(l-1)h}) - \mathbb{E}\tilde{\mathcal{L}}_l^k \psi(\Xb_{(l-1)h})\right)$ are martingale, we have:
\begin{align*}
	A_2 &\lesssim \frac{h^{2(k-1)}}{2L^2} \sum_{l=1}^L \mathbb{E} \left(\tilde{\mathcal{L}}_l^k \psi(\Xb_{(l-1)h}) - \mathbb{E}\tilde{\mathcal{L}}_l^k \psi(\Xb_{(l-1)h})\right)^2 \\
	%&\lesssim \left(\frac{h^k}{Lh}\sum_{l=1}^L\tilde{\mathcal{L}}_l^k \psi(\Xb_{(l-1)h})\right)^2 + \left(\frac{h^k}{Lh}\sum_{l=1}^L\left(\mathbb{E}\tilde{\mathcal{L}}_l^k \psi(\Xb_{(l-1)h})\right)\right)^2 \\
	&\lesssim \frac{1}{Lh}\left(\frac{h^{2k-1}}{L}\sum_{l=1}^L\mathbb{E}(\tilde{\mathcal{L}}_l^k \psi(\Xb_{(l-1)h}))^2\right) + O\left(\frac{1}{L^2h^2} + h^{2K}\right) = O\left(\frac{1}{Lh} + h^{2K}\right)
\end{align*}
where we have used \eqref{eq:expansion23} and the fact that $\mathbb{E}\tilde{\mathcal{L}}_l^k \psi(\Xb_{(l-1)h})$ is bounded. 
Collecting low order terms we get \eqref{eq:mse_bound2}.
This completes the proof.
\end{proof}

\section{The Proof of Theorem~\ref{theo:invariantmeasure}}

\begin{proof}
Because the splitting scheme is geometric ergodic, for a test function $\phi$, from the ergodic theorem we have
\begin{align}
	\int_{\mathcal{X}} \phi(\xb) \tilde{\rho}_h(\mathrm{d}\xb) = \int_{\mathcal{X}} \mathbb{E}_{\xb} \phi(\Xb_{lh}) \tilde{\rho}_h(\mathrm{d}\xb)
\end{align}
 for $\forall l \geq 0, \forall \xb \in \mathcal{X}$. 
Average over all the samples $\{\Xb_{lh}\}$ and let $l$ approach to $\infty$, we have
\begin{align*}
	\int_{\mathcal{X}} \phi(\xb) \tilde{\rho}_h(\mathrm{d}\xb) = \lim_{L \rightarrow \infty}\int_{\mathcal{X}} \frac{1}{L}\sum_{l=1}^L\mathbb{E}_{\xb}\phi(\Xb_{lh}) \tilde{\rho}_h(\mathrm{d}\xb)~.
\end{align*}
Thus the distance between any invariant measure $\tilde{\rho}_h$ of a high-order integrator and $\rho$ can be bounded as:
\begin{align}
	d(\tilde{\rho}_h, \rho) &= \sup_{\phi}\left|\int_{\mathcal{X}} \phi(\xb) \tilde{\rho}_h(\mathrm{d}\xb) - \int_{\mathcal{X}} \phi(\xb) \rho(\mathrm{d}\xb)\right| \nonumber\\
	&= \sup_{\phi} \lim_{L \rightarrow \infty}\left| \int_{\mathcal{X}}\left[\frac{1}{L}\sum_{l=1}^L\mathbb{E}_{\xb}\phi(\Xb_{lh}) - \bar{\phi}\right] \tilde{\rho}_h(\mathrm{d}\xb) \right| \nonumber\\
	&\leq \sup_{\phi} \lim_{L \rightarrow \infty} \int_{\mathcal{X}}\left|\frac{1}{L}\sum_{l=1}^L\mathbb{E}_{\xb}\phi(\Xb_{lh}) - \bar{\phi}\right| \tilde{\rho}_h(\mathrm{d}\xb) \nonumber\\
	&\leq \sup_{\phi} \lim_{L \rightarrow \infty} \left(\frac{C_1}{Lh} + C_2 h^K \label{eq:bounded_h}\right) \\
	&= Ch^K~, \nonumber
\end{align}
where \eqref{eq:bounded_h} follows by using the result from Theorem~\ref{theo:bias1}.
This completes the proof.
\end{proof}

\section{The Proof of Theorem~\ref{theo:bias_w}}

We separate the proof into proofs for the bias and MSE respectively in the following.

\paragraph{The proof for the bias:}

\begin{proof}
	Following Theorem~\ref{theo:bias1}, in the decreasing step size setting, \eqref{eq:split_flow1} can be written as: 
	\begin{align*} %\label{eq:split_flow_w}
		\mathbb{E}&\left(\psi(\Xb_{lh})\right) = \left(\mathbb{I} + h_l\tilde{\mathcal{L}}_l\right) \psi(\Xb_{(l-1)h}) + \sum_{k=2}^K\frac{h_l^k}{k!}\tilde{\mathcal{L}}_l^2\psi(\Xb_{(l-1)h}) + O(h_l^{K+1})~.
	\end{align*}
	Similarly, \eqref{eq:expansion11} can be simplified using the step size sequence $(h_l)$ as:
	\begin{align}\label{eq:expansion1_w}
		&\mathbb{E}\left(\tilde{\phi} - \bar{\phi}\right) \nonumber\\
		=&\frac{1}{S_L}\left(\mathbb{E}\left(\psi(\Xb_{Lh})\right) - \psi(\Xb_0)\right)
		 - \sum_{k=2}^K\sum_{l=1}^L\frac{h_l^k}{k!S_L}\tilde{\mathcal{L}}_l^k\psi(\Xb_{(l-1)h}) + O(\frac{\sum_{l=1}^L h_l^{K+1}}{S_L})
	\end{align}	
	
	Similar to the derivation of \eqref{eq:expansion23}, we can derive the following bounds $k = (2, \cdots, K)$:
	\begin{align}\label{eq:expansion2_w}
		\sum_{l=1}^Lh_l^k\mathbb{E}\tilde{\mathcal{L}}_l^k \psi(\Xb_{(l-1)h}) 
		&= O\left(\sum_{l=1}^L\left((h_l^{k-1} - h_{l-1}^{k-1})\tilde{\mathcal{L}}_l^{k-1}\psi(\Xb_{(l-1)h}) + h_l^{K+1}\right)\right) \nonumber\\
		&= O\left(1 + \sum_{l=1}^Lh_l^{K+1}\right)~.
	\end{align}
	Substitute \eqref{eq:expansion2_w} into \eqref{eq:expansion1_w} and collect low order terms, we have:
	\begin{align}
		\mathbb{E}\left(\tilde{\phi} - \bar{\phi}\right) 
		=\frac{1}{S_L}\left(\mathbb{E}\left(\psi(\Xb_{Lh})\right) - \psi(\Xb_0)\right) + O(\frac{\sum_{l=1}^L h_l^{K+1}}{S_L})~.
	\end{align}
	
	As a result, the bias can be expressed as:
	\begin{align*}
		\left|\mathbb{E}\tilde{\phi} - \bar{\phi}\right|
		\leq& \left|\frac{1}{S_L}\left(\mathbb{E}\left[\psi(\Xb_{Lh})\right] - \psi(\Xb_0)\right)
		  + O(\frac{\sum_{l=1}^L h_l^{K+1}}{S_L})\right| \\
		 \lesssim& \left|\frac{1}{S_L}\right| + \left|\frac{\sum_{l=1}^L h_l^{K+1}}{S_L})\right| \\
		=& O\left(\frac{1}{S_L} + \frac{\sum_{l=1}^L h_l^{K+1}}{S_L}\right)~.
	\end{align*}
	Taking $L \rightarrow \infty$, both terms go to zero by assumption.
	This completes the proof.
\end{proof}

\paragraph{The proof for the MSE:}

\begin{proof}
Following similar derivations as in Theorem~\ref{theo:MSE}, we have that
\begin{align*}
	\sum_{l=1}^L\mathbb{E}\left(\psi(\Xb_{lh})\right) &= \sum_{l=1}^L \psi(\Xb_{(l-1)h}) + \sum_{l=1}^L h_l\mathcal{L}\psi(\Xb_{(l-1)h}) + \sum_{l=1}^L h_l\Delta V_l\psi(\Xb_{(l-1)h}) \\
	&+ \sum_{k=2}^K\sum_{l=1}^L\frac{h_l^k}{k!}\tilde{\mathcal{L}}_l^k \psi(\Xb_{(l-1)h}) + C\sum_{l=1}^L h_l^{K+1}~.
\end{align*}
Substitute the Poisson equation \eqref{eq:PoissonEq1} into the above equation and divided both sides by $S_L$, we have
\begin{align*}
	\hat{\phi} - \bar{\phi} &= \frac{\mathbb{E}\psi(\Xb_{Lh}) - \psi(x_0)}{S_L} + \frac{1}{S_L}\sum_{l=1}^{L-1}\left(\mathbb{E}\psi(\Xb_{(l-1)h}) + \psi(\Xb_{(l-1)h})\right) + \sum_{l=1}^L \frac{h_l}{S_L} \Delta V_l\psi(\Xb_{(l-1)h}) \\
	&+ \sum_{k=2}^K\sum_{l=1}^L\frac{h_l^k}{k!S_L}\tilde{\mathcal{L}}_l^k \psi(\Xb_{(l-1)h}) + C\frac{\sum_{l=1}^L h_l^3}{S_L}~.
\end{align*}
As a result, there exists some positive constant $C$, such that:
\begin{align}\label{eq:mse1_w}
	%&\left(\hat{\phi} - \bar{\phi}\right)^2 \leq \nonumber\\
	\mathbb{E}\left(\hat{\phi} - \bar{\phi}\right)^2 &\leq C\mathbb{E}\left(\frac{1}{S_L^2}\underbrace{\left(\psi(\Xb_0) - \mathbb{E}\psi(\Xb_{Lh})\right)^2}_{A_1} + \underbrace{\frac{1}{S_L^2}\sum_{l=1}^L\left(\mathbb{E}\psi(\Xb_{(l-1)h}) - \psi(\Xb_{(l-1)h})\right)^2}_{A_2}\right. \nonumber\\
	&+ \left.\sum_{l=1}^L \frac{h_l^2}{S_L^2}\left\|\Delta V_l\right\|^2 + \underbrace{\sum_{k=2}^K\left(\sum_{l=1}^L\frac{h_l^k}{k!S_L}\tilde{\mathcal{L}}_l^k \psi(\Xb_{(l-1)h})\right)^2}_{A_3} + \left(\frac{\sum_{l=1}^L h_l^3}{S_L}\right)^2\right)
\end{align}
$A_1$ can be bounded by assumptions, and $A_2$ is shown to be bounded by using the fact that 
$\mathbb{E}\psi(\Xb_{(l-1)h}) - \psi(\Xb_{(l-1)h}) = O(\sqrt{h_l})$ from Theorem~\ref{theo:MSE}. Furthermore, 
similar to the proof of Theorem~\ref{theo:MSE}, the expectation
of $A_3$ can also be bounded by using the formula $\mathbb{E}[\Xb^2] = (\mathbb{E}\Xb)^2 + \mathbb{E}[(\Xb - \mathbb{E}\Xb)^2]$ 
and \eqref{eq:expansion2_w}. It turns out that the resulting terms have order higher than those from the other terms, thus can be ignored
in the expression below.
After some simplifications, \eqref{eq:mse1_w} is bounded by:
\begin{align}\label{eq:mse_decreasing}
	\mathbb{E}\left(\hat{\phi} - \bar{\phi}\right)^2 &\lesssim \sum_l \frac{h_l^2}{S_L^2}\mathbb{E}\left\|\Delta V_l\right\|^2 + \frac{1}{S_L} + \frac{1}{S_L^2} + \left(\frac{\sum_{l=1}^L h_l^{K+1}}{S_L}\right)^2 \nonumber\\
	&= C\left(\sum_l \frac{h_l^2}{S_L^2}\mathbb{E}\left\|\Delta V_l\right\|^2 + \frac{1}{S_L} + \frac{(\sum_{l=1}^L h_l^{K+1})^2}{S_L^2} \right)
\end{align}
for some $C > 0$, this completes the first part of the theorem. 
We can see that according to the assumption, the last two terms in \eqref{eq:mse_decreasing} approach to 0 
when $L \rightarrow \infty$.
If we further assume $\frac{\sum_{l=1}^{\infty} h_l^2}{S_L^2} = 0$, then the first term in \eqref{eq:mse_decreasing}
approaches to 0 because:
\begin{align*}
	\sum_l \frac{h_l^2}{S_L^2}\mathbb{E}\left\|\Delta V_l\right\|^2 \leq \left(\sup_l \mathbb{E}\left\|\Delta V_l\right\|^2\right) \frac{\sum_l h_l^2}{S_L^2} \rightarrow 0~.
\end{align*}
As a result, we have $\lim_{L \rightarrow \infty}\mathbb{E}\left(\hat{\phi} - \bar{\phi}\right)^2 = 0$.
\end{proof}

\section{The Proof of Corollary~\ref{coro:stepsize}}

\begin{proof}
We use the following inequalities to bound the term $\sum_{l=1}^L l^{-\alpha}$:
\begin{align*}
	\int_1^L x^{-\alpha} \mathrm{d}x < \sum_{l=1}^L l^{-\alpha} < 1 + \int_1^{L-1} x^{-\alpha} \mathrm{d}x~.
\end{align*}
This is easily seen to be true by noting that $\int_{l}^{l+1} x^{-\alpha} \mathrm{d}x < l^{-\alpha} \times 1 = l^{-\alpha} < \int_{l-1}^{l} x^{-\alpha} \mathrm{d}x$. 
After simplification, we have
\begin{align}\label{eq:L_bound}
	\frac{1 - L^{1-\alpha}}{\alpha - 1} < \sum_{l=1}^L l^{-\alpha} < \frac{\alpha - (L - 1)^{1 - \alpha}}{\alpha - 1}~.
\end{align}
It is then easy to see that the condition for $\sum_{l=1}^{\infty} l^{-\alpha} = \infty$ is $\alpha \leq 1$.
Moreover, we notice that other step size assumptions reduce to compare $\sum_{l=1}^{\infty} l^{-\alpha}$
and $\sum_{l=1}^{\infty} l^{-\alpha_1}$ for $\alpha < \alpha_1$, which using \eqref{eq:L_bound} has the
following bound:
\begin{align*}
	\frac{\alpha - 1}{\alpha_1 - 1} \frac{1 - L^{1-\alpha_1}}{\alpha - (L - 1)^{1 - \alpha}} < \frac{\sum_{l=1}^L l^{-\alpha_1}}{\sum_{l=1}^L l^{-\alpha}} < \frac{\alpha - 1}{\alpha_1 - 1} \frac{\alpha_1 - (L - 1)^{1 - \alpha_1}}{1 - L^{1-\alpha}}~.
\end{align*}
As long as $0 < \alpha < 1$ and $\alpha_1 > \alpha$, the above lower and upper bound would approach to 0, thus
all the assumptions for the step size sequences are satisfied.
\end{proof}

\section{On the Euler Integrator and Symmetric Splitting Integrator}\label{sec:numerical_integrator}

\subsection{Euler integrator}

We first review the Euler scheme used in SGLD and SGHMC. In SGLD 
the update for $\Xb_{lh}$ ($= \thetab_{lh}$) follows:
\begin{align*}
	\thetab_{lh} = \thetab_{(l-1)h}  - \nabla_{\thetab}\tilde{U}_l(\thetab_{(l-1)h} ) h + \sqrt{2h}\zetab_l~,
\end{align*}
where $h$ is the step size, $\zetab_l$ is a vector of {\it i.i.d.} standard normal random variables. 
In SGHMC ($\Xb_{lh} = (\thetab_{lh}, p_{lh})$), it becomes:
\begin{align*}
	\thetab_{lh} = \thetab_{(l-1)h} + \pb_{(l-1)h} h, \hspace{0.5cm}
	\pb_{lh} = (1 - Dh) \pb_{(l-1)h} - \nabla_{\thetab}\tilde{U}_l (\thetab_{(l-1)h} )h + \sqrt{2Dh}\zetab_l~,
\end{align*}

Based on the update equations, it is easily seen that the corresponding Kolmogorov operators $\tilde{P}_{h}^{l}$ are 
\begin{align}\label{eq:sgld_euler}
	\tilde{P}_{h}^{l} = e^{h\mathcal{L}_1}, \mbox{ where }\mathcal{L}_1 \triangleq -\nabla_{\thetab}\tilde{U}_l(\thetab_{(l-1)h}) \cdot \nabla_{\thetab} + 2I : \nabla_{\thetab} \nabla_{\thetab}^T
\end{align}
for SGLD, and
\begin{align}\label{eq:sghmc_euler}
	\tilde{P}_{h}^{l} = e^{h\mathcal{L}_2} \circ e^{h\mathcal{L}_3}~,
\end{align}
for SGHMC, where 
$\mathcal{L}_2 \triangleq \pb \cdot \nabla_{\thetab}$ and $\mathcal{L}_3 \triangleq -D \pb_{(l-1)h} \cdot \nabla_{\pb} -\nabla_{\thetab}\tilde{U}_l(\thetab) \cdot \nabla_{\pb} + 2DI : \nabla_{\pb} \nabla_{\pb}^T$.

We show in the following Lemma that the Euler integrator is a 1st-order local integrator.

\begin{lemma}
The Euler integrator is a 1st-order local integrator, {\it i.e.},
\begin{align}
	\tilde{P}_{h}^{l} = e^{h\tilde{\mathcal{L}}_l} + O(h^2)~.
\end{align}
\end{lemma}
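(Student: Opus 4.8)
The plan is to verify Definition~\ref{def:k-order-integrator} with $K=1$, i.e.\ to show that for every smooth bounded $f$ and base point $\xb$,
$\tilde{P}_{h}^{l} f(\xb) = e^{h\tilde{\mathcal{L}}_l} f(\xb) + O(h^2)$, treating the SGLD and SGHMC Euler schemes separately using the Kolmogorov operators identified in \eqref{eq:sgld_euler} and \eqref{eq:sghmc_euler}. The conceptual point is that each exponential $e^{h\mathcal{L}_i}$ appearing there is the Kolmogorov operator of a \emph{frozen-coefficient} SDE, in which the drift (and, for SGHMC, the friction term $-D\pb$) is evaluated at the point $\Xb_{(l-1)h}$ from which the step starts; therefore, when one Taylor-expands in $h$ about that base point, the first-order term produced by the frozen generator coincides exactly with the one produced by $\tilde{\mathcal{L}}_l$, and the discrepancy is pushed to order $h^2$.

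For SGLD I would first observe that the frozen drift SDE underlying \eqref{eq:sgld_euler} is linear with constant coefficients, so its time-$h$ solution started at $\thetab_{(l-1)h}$ is exactly the Gaussian prescribed by the Euler update, and hence $\tilde{P}_{h}^{l} = e^{h\mathcal{L}_1}$ holds with equality. Expanding the exponential operator (legitimate, with an $O(h^2)$ remainder in the relevant weighted norm, under the growth/regularity hypotheses on $F$ recalled in Appendix~\ref{sec:KBE}) gives $e^{h\mathcal{L}_1} f(\xb) = f(\xb) + h\,\mathcal{L}_1 f(\xb) + O(h^2)$, and likewise $e^{h\tilde{\mathcal{L}}_l} f(\xb) = f(\xb) + h\,\tilde{\mathcal{L}}_l f(\xb) + O(h^2)$. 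Evaluating both at $\xb = \thetab_{(l-1)h}$, the second-order (diffusion) parts are literally the same operator, and the first-order (drift) parts agree because $\nabla_{\thetab}\tilde{U}_l$ is evaluated at the very point $\thetab_{(l-1)h}$ in both expressions; thus $\mathcal{L}_1 f(\thetab_{(l-1)h}) = \tilde{\mathcal{L}}_l f(\thetab_{(l-1)h})$ and the two expansions differ only at order $h^2$. (Equivalently, one can bypass the operator formalism and Taylor-expand $f$ directly along the Euler map $\thetab_{(l-1)h}\mapsto\thetab_{(l-1)h}-\nabla_{\thetab}\tilde{U}_l h+\sqrt{2h}\zetab_l$, take expectation over $\zetab_l$, note that the $O(h^{1/2})$ and odd-moment contributions vanish, and identify the surviving $O(h)$ term with $\tilde{\mathcal{L}}_l f$.)

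For SGHMC the argument is identical except that $\tilde{P}_{h}^{l}=e^{h\mathcal{L}_2}\circ e^{h\mathcal{L}_3}$ is a composition: expanding each factor as $\mathbb{I}+h\mathcal{L}_i+O(h^2)$, the composition equals $\mathbb{I}+h(\mathcal{L}_2+\mathcal{L}_3)+O(h^2)$, the cross term $h^2\mathcal{L}_2\mathcal{L}_3$ being absorbed into the remainder. Since $\tilde{\mathcal{L}}_l=\mathcal{L}_2+\mathcal{L}_3'$, where $\mathcal{L}_3'$ is $\mathcal{L}_3$ with the frozen $\pb_{(l-1)h}$ and $\thetab_{(l-1)h}$ replaced by the running variables $\pb,\thetab$, evaluating at the base point $\Xb_{(l-1)h}=(\thetab_{(l-1)h},\pb_{(l-1)h})$ gives $\mathcal{L}_3 f=\mathcal{L}_3' f$ there, so again the $O(h)$ terms of $\tilde{P}_{h}^{l}$ and $e^{h\tilde{\mathcal{L}}_l}$ match, yielding $\tilde{P}_{h}^{l}=e^{h\tilde{\mathcal{L}}_l}+O(h^2)$. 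The main obstacle I expect is not the algebra but the bookkeeping of the remainders: one must check that all the $O(h^2)$ contributions — the tails of the exponential series, the fourth-order Gaussian moments in the direct expansion, and the cross term in the SGHMC composition — are genuinely of order $h^2$ uniformly, which requires invoking the boundedness of the derivatives of $f$ together with the growth assumptions on $F$ (hence on $\nabla\tilde{U}_l$) from Appendix~\ref{sec:KBE} and Assumption~\ref{ass:assumption1}, exactly as in the expansion \eqref{eq:expandsion}. Granted those estimates, the lemma reduces to the elementary identity that a coefficient frozen at the base point equals the running coefficient at that same point.
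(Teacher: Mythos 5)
Your proposal is correct and follows essentially the same route as the paper's proof: identify the frozen-coefficient Kolmogorov operators $e^{h\mathcal{L}_1}$ (SGLD) and $e^{h\mathcal{L}_2}\circ e^{h\mathcal{L}_3}$ (SGHMC), observe that the frozen generator agrees with $\tilde{\mathcal{L}}_l$ at the base point so the two differ by $O(h)$ along the step, and conclude that the exponentials differ by $O(h^2)$. The only cosmetic difference is that for the SGHMC composition you expand each factor directly and absorb the $h^2\mathcal{L}_2\mathcal{L}_3$ cross term into the remainder, whereas the paper invokes the Baker--Campbell--Hausdorff formula; at first order these are interchangeable.
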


\begin{proof}
For the SGLD, according to the Kolmogorov's backward equation \eqref{eq:koleq}, for the SGLD, we have
\begin{align}\label{eq:euler_kov}
	\mathbb{E}[f(\thetab_{(l-1)h + t})] = e^{t\tilde{\mathcal{L}}_l} f(\thetab_{(l-1)h}), \hspace{0.5cm} 0 \leq t \leq h~,
\end{align}
where $\tilde{\mathcal{L}}_1 \triangleq -\nabla_{\thetab}\tilde{U}_l(\thetab) \cdot \nabla_{\thetab} + 2I : \nabla_{\thetab} \nabla_{\thetab}^T$.
Note $\tilde{U}_l(\thetab)$ can be expanded by Taylor's expansion to the 1st-order such that (based on $\thetab_{lh} = \thetab_{(l-1)h} + O(h)$):
\begin{align*}
	\tilde{\mathcal{L}}_1 &= -\nabla_{\thetab}\tilde{U}_l(\thetab_{(l-1)h}) \cdot \nabla_{\thetab} + 2I : \nabla_{\thetab} \nabla_{\thetab}^T + O(h) \\
	&= \mathcal{L}_1 + O(h)~.
\end{align*}
Substituting the above into \eqref{eq:euler_kov} and use the definition \eqref{eq:sgld_euler}, we have
\begin{align*}
	\tilde{P}_{h}^{l} = e^{h\tilde{\mathcal{L}}_l} + O(h^2)~.
\end{align*}

For the SGHMC, following similar derivations, we have:
\begin{align*}
	\mathcal{L}_2 = \tilde{\mathcal{L}}_2 &\rightarrow e^{h\mathcal{L}_2} = e^{h\tilde{\mathcal{L}}_2} + O(h^2)~, \\
	e^{h\mathcal{L}_3} &= e^{h\tilde{\mathcal{L}}_3} + O(h^2)~,
\end{align*}
where $\tilde{\mathcal{L}}_2 \triangleq \pb \cdot \nabla_{\thetab}$ and $\tilde{\mathcal{L}}_3 \triangleq -D \pb h \cdot \nabla_{\pb} -\nabla_{\thetab}\tilde{U}_l(\thetab) \cdot \nabla_{\pb} + 2DI : \nabla_{\pb} \nabla_{\pb}^T$ are the splitting for the 
true generator $\tilde{\mathcal{L}}_l$.

Now using the Baker--Campbell--Hausdorff (BCH) formula, we have
\begin{align*}
	e^{h\mathcal{L}_2} \circ e^{h\mathcal{L}_3} &= e^{h\tilde{\mathcal{L}}_2} \circ \left(e^{h\tilde{\mathcal{L}}_3} + O(h^2)\right) \\
	&= e^{h(\tilde{\mathcal{L}}_2 + \tilde{\mathcal{L}}_3) + O(h^2)} + O(h^2)
	= e^{h\tilde{\mathcal{L}}_l} + O(h^2)
\end{align*}
As a result, $\tilde{P}_{h}^{l} = e^{h\tilde{\mathcal{L}}_l} + O(h^2)$ for SGHMC.
\end{proof}

\subsection{Symmetric splitting integrator}\label{sec:SS_integrator}

In symmetric splitting scheme, the generator $\tilde{\Lcal}_l$ is split  into a couple of 
sub-generators which can be solved analytically. For example, in SGHMC, it is split into: 
$\tilde{\Lcal}_l = \mathcal{L}_A  + \mathcal{L}_B + \mathcal{L}_{O_l}$, where
\begin{align*}
	\mathcal{A} \triangleq \mathcal{L}_A = \pb \cdot \nabla_{\thetab}, \;\;
	\mathcal{B} \triangleq \mathcal{L}_B = -D \pb \cdot \nabla_{\pb}, \;\;
	\mathcal{O}_l \triangleq \mathcal{L}_{O_l} = -\nabla_{\thetab}\tilde{U}_l(\thetab) \cdot \nabla_{\pb} + 2D : \nabla_{\pb} \nabla_{{\pb}}^T~.
\end{align*}
These sub-generators correspond to the following analytically solvable SDEs:
\begin{align*}
	A: \left\{\begin{array}{ll}
	\mathrm{d}\thetab &= \pb \mathrm{d}t \\
	\mathrm{d}\pb &= 0
	\end{array}\right.,
	B: \left\{\begin{array}{ll}
	\mathrm{d}\thetab &= 0 \\
	\mathrm{d}\pb &= - D \pb \mathrm{d}t
	\end{array}\right.,
	O: \left\{\begin{array}{ll}
	\mathrm{d}\thetab &= 0 \\
	\mathrm{d}\pb &= -\nabla_\thetab \tilde{U}_l(\thetab) \mathrm{d}t + \sqrt{2D}\mathrm{d}W
	\end{array}\right.
\end{align*}

Based on the splitting, the Kolmogorov operator $\tilde{P}_{h}^{l}$ can be seen to be:
\begin{align*}
\tilde{P}_{h}^{l} \triangleq e^{\frac{h}{2}\mathcal{L}_A} \circ e^{\frac{h}{2}\mathcal{L}_B} \circ e^{h\mathcal{L}_{O_l}} \circ e^{\frac{h}{2}\mathcal{L}_B} \circ e^{\frac{h}{2}\mathcal{L}_A}, 
\end{align*}

We show that the corresponding integrator is a 2nd-order local integrator below.

\begin{lemma}
The symmetric splitting integrator is a 2nd-order local integrator, {\it i.e.},
\begin{align}
	\tilde{P}_{h}^{l} = e^{h\tilde{\mathcal{L}}_l} + O(h^3)~.
\end{align}
\end{lemma}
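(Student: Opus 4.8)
The plan is to expand the composed Kolmogorov operator $\tilde{P}_h^l = e^{\frac{h}{2}\mathcal{L}_A} \circ e^{\frac{h}{2}\mathcal{L}_B} \circ e^{h\mathcal{L}_{O_l}} \circ e^{\frac{h}{2}\mathcal{L}_B} \circ e^{\frac{h}{2}\mathcal{L}_A}$ as a power series in $h$ and to match it with the exact local semigroup $e^{h\tilde{\mathcal{L}}_l}$, using the palindromic (symmetric) structure of the ABOBA composition to cancel the $O(h^2)$ discrepancy. First I would record that each factor $e^{ch\mathcal{L}_X}$, for $X\in\{A,B,O_l\}$ and $c\in\{\tfrac12,1\}$, is the \emph{exact} Kolmogorov operator of the corresponding sub-SDE in \eqref{eq:split_sghmc}: the $A$- and $B$-dynamics are linear ODEs, and the $O$-dynamics is explicitly integrable because $\thetab$ is frozen along the $O$-flow, so the drift $-\nabla_{\thetab}\tilde{U}_l(\thetab)$ is constant. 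Hence there is no internal discretization error in any of the five factors, and on a smooth bounded $f$ each admits the expansion $e^{ch\mathcal{L}_X}f = f + ch\,\mathcal{L}_X f + \tfrac{(ch)^2}{2}\mathcal{L}_X^2 f + O(h^3)$, with the remainder controlled by the regularity of $F$ and $\tilde{U}_l$ assumed in Appendix~\ref{sec:KBE}.

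Next I would compose the five factors and track powers of $h$. The $h^0$ term is $\mathbb{I}$; the $h^1$ term is $\tfrac12\mathcal{L}_A + \tfrac12\mathcal{L}_B + \mathcal{L}_{O_l} + \tfrac12\mathcal{L}_B + \tfrac12\mathcal{L}_A = \mathcal{L}_A + \mathcal{L}_B + \mathcal{L}_{O_l} = \tilde{\mathcal{L}}_l$, so the scheme is at least a $1$st-order local integrator. For the $h^2$ term I would avoid a brute-force expansion and instead apply the symmetric Baker--Campbell--Hausdorff identity twice, exactly in the style of the Euler proof. First, the inner composition $e^{\frac{h}{2}\mathcal{L}_B} \circ e^{h\mathcal{L}_{O_l}} \circ e^{\frac{h}{2}\mathcal{L}_B}$ is symmetric in its two $\tfrac{h}{2}\mathcal{L}_B$ factors, so it equals $e^{h(\mathcal{L}_B+\mathcal{L}_{O_l})} + O(h^3)$: the first BCH correction, an $h^2$ multiple of $[\mathcal{L}_B,\mathcal{L}_{O_l}]$, cancels by symmetry. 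Applying the same identity once more to $e^{\frac{h}{2}\mathcal{L}_A} \circ \bigl(e^{h(\mathcal{L}_B+\mathcal{L}_{O_l})} + O(h^3)\bigr) \circ e^{\frac{h}{2}\mathcal{L}_A}$ yields $e^{h(\mathcal{L}_A+\mathcal{L}_B+\mathcal{L}_{O_l})} + O(h^3) = e^{h\tilde{\mathcal{L}}_l} + O(h^3)$, which is the claim. Equivalently, and perhaps most cleanly, one notes that reversing the order of the five factors and replacing $h$ by $-h$ returns the same composition, so $\tilde{P}_h^l$ is a symmetric one-step method ($\tilde{P}_h^l \circ \tilde{P}_{-h}^l = \mathbb{I}$ at the level of formal expansions), and a first-order-consistent symmetric method is automatically of even, hence at least second, order. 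Since all symmetric splittings (ABOBA, BAOAB, OABAO, $\dots$) are equivalent, the choice of ABOBA is without loss of generality.

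The main obstacle is not the combinatorial identity --- that is the textbook Strang-splitting symmetry argument --- but justifying the $O(h^3)$ remainders rigorously: $\mathcal{L}_{O_l}$ contains a second-order, and thus unbounded, differential operator, so one must ensure that iterated applications of $\mathcal{L}_A,\mathcal{L}_B,\mathcal{L}_{O_l}$ keep the test function in a class on which the Taylor and BCH remainder terms are bounded uniformly in $h$. This is precisely where the standing assumptions enter --- that $F$ and $\tilde{U}_l$ are $C^\infty$ with controlled polynomial growth (Appendix~\ref{sec:KBE}) and that the relevant statistics have finite moments (Assumption~\ref{ass:assumption1}); granting these, each remainder is $O(h^3)$ uniformly over the step, and the bound $\tilde{P}_h^l = e^{h\tilde{\mathcal{L}}_l} + O(h^3)$ follows.
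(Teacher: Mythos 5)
Your proof is correct and rests on the same tool as the paper's: a Baker--Campbell--Hausdorff expansion of the five composed exponentials, with the $O(h^2)$ commutator corrections cancelling because of the palindromic structure. The only difference is organizational --- the paper multiplies the factors out one at a time and tracks the commutators explicitly until they cancel, whereas you package the cancellation as two nested applications of the symmetric (Strang) BCH identity, together with the standard ``symmetric one-step methods have even order'' shortcut; both arguments invoke the same regularity assumptions to control the unbounded remainder terms.
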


\begin{proof}

This follows from direct calculation using the BCH formula. Specifically,
\begin{align}
	e^{\frac{h}{2}\mathcal{A}} e^{\frac{h}{2}\mathcal{B}} &= e^{\frac{h}{2}\mathcal{A} + \frac{h}{2}\mathcal{B} + \frac{h^2}{8}[\mathcal{A}, \mathcal{B}] + \frac{1}{96}\left([\mathcal{A}, [\mathcal{A}, \mathcal{B}]] + [\mathcal{B}, [\mathcal{B}, \mathcal{A}]]\right) + \cdots} \label{eq:bch1}\\
	&= e^{\frac{h}{2}\mathcal{A} + \frac{h}{2}\mathcal{B} + \frac{h^2}{8}[\mathcal{A}, \mathcal{B}]} + O(h^3)~, \label{eq:bch2}
\end{align}
where $[X, Y] \triangleq XY - YX$ is the commutator of $X$ and $Y$, \eqref{eq:bch1} follows from the BCH formula, and \eqref{eq:bch2}
follows from Assumption~\ref{ass:assumption1} such that the remainder high order terms are bounded \cite{Kopec:JNA14},
so the error term $O(h^3)$ can be taken out from the exponential map using Taylor expansion.
Similarly, for the other composition, we have
\begin{align*}
	e^{h \mathcal{O}_l} e^{\frac{h}{2}\mathcal{A}} e^{\frac{h}{2}\mathcal{B}} &= e^{h \mathcal{O}_l} \left(e^{\frac{h}{2}\mathcal{A} + \frac{h}{2}\mathcal{B} + \frac{h^2}{8}[\mathcal{A}, \mathcal{B}]} + O(h^3)\right) \\
	&= e^{h \mathcal{O}_l + \frac{h}{2}\mathcal{A} + \frac{h}{2}\mathcal{B} + \frac{h^2}{8}[\mathcal{A}, \mathcal{B}] + \frac{1}{2}[h \mathcal{O}_l, \frac{h}{2}\mathcal{A} + \frac{h}{2}\mathcal{B} + \frac{h^2}{8}[\mathcal{A}, \mathcal{B}]]} + O(h^3) \\
	&= e^{h \mathcal{O}_l + \frac{h}{2}\mathcal{A} + \frac{h}{2}\mathcal{B} + \frac{h^2}{8}[\mathcal{A}, \mathcal{B}] + \frac{h^2}{4}[\mathcal{O}_l, \mathcal{A}] + \frac{h^2}{4}[\mathcal{O}_l, \mathcal{B}]} + O(h^3) \\
	e^{\frac{h}{2}\mathcal{A}} e^{h \mathcal{O}_l} e^{\frac{h}{2}\mathcal{A}} e^{\frac{h}{2}\mathcal{B}} &= e^{\frac{h}{2} \mathcal{A}} \left(e^{h \mathcal{O}_l + \frac{h}{2}\mathcal{A} + \frac{h}{2}\mathcal{B} + \frac{h^2}{8}[\mathcal{A}, \mathcal{B}] + \frac{h^2}{4}[\mathcal{O}_l, \mathcal{A}] + \frac{h^2}{4}[\mathcal{O}_l, \mathcal{B}]} + O(h^3)\right) \\
	&= e^{h \mathcal{O}_l + h\mathcal{A} + \frac{h}{2}\mathcal{B} + \frac{h^2}{4}[\mathcal{A}, \mathcal{B}] + \frac{h^2}{2}[\mathcal{O}_l, \mathcal{B}]} + O(h^3) \\
	\tilde{P}_{h}^{l} \triangleq e^{\frac{h}{2}\mathcal{B}} e^{\frac{h}{2}\mathcal{A}} e^{h \mathcal{Z}} e^{\frac{h}{2}\mathcal{A}} e^{\frac{h}{2}\mathcal{B}}
	&= e^{\frac{h}{2} \mathcal{B}} \left(e^{h \mathcal{O}_l + h\mathcal{A} + \frac{h}{2}\mathcal{B} + \frac{h^2}{4}[\mathcal{A}, \mathcal{B}] + \frac{h^2}{2}[\mathcal{O}_l, \mathcal{B}]} + O(h^3)\right) \\
	&= e^{h \mathcal{O}_l + h\mathcal{A} + h\mathcal{B} + \frac{h^2}{4}[\mathcal{A}, \mathcal{B}] + \frac{h^2}{2}[\mathcal{O}_l, \mathcal{B}] + \frac{h^2}{4}[\mathcal{B}, \mathcal{A}] + \frac{h^2}{4}[\mathcal{B}, \mathcal{O}_l] + \frac{h^2}{8}[\mathcal{B}, \mathcal{B}]} + O(h^3)\\
		&= e^{h(\mathcal{B} + \mathcal{A} + \mathcal{O}_l)} + O(h^3) \\
		&= e^{h(\mathcal{L} + \Delta V_l)} + O(h^3) = e^{h\tilde{\mathcal{L}}_l} + O(h^3)~.
\end{align*}
This completes the proof.
\end{proof}

\section{Mean Flow Error Analysis}\label{sec:meanflow}

In addition to the finite time ergodic error studied previously, we study the mean flow error in this section.
To this end, we first define the exact mean flow to be the solution operator of the Kolmogorov's backward 
equation $\mathbb{E}[f(\Xb_T)] = e^{T\mathcal{L}}f(\Xb_0)$ over time $T = Lh$, {\it i.e.}, $\varphi_T  \triangleq e^{T\mathcal{L}}$.
With our splitting method with stochastic gradients for each minibatch, the mean flow operator consists of a composition
of $L$ local mean flows, {\it i.e.},
$\tilde{\varphi}_T^h \triangleq e^{h\mathcal{L}_{L}} \circ \cdots \circ e^{h\mathcal{L}_{1}}
\triangleq \circ_{l=1}^L e^{h\mathcal{L}_{l}}$, each coming from a minibatch. Our goal in this section is to
compare $\varphi_T$ with $\tilde{\varphi}_T^h$. When the underlying equations of motion are PDEs, {\it i.e.}, no
Brownian motion like the Hamiltonian PDE, $\varphi_T (\Xb_0)$ corresponds to the exact solution trajectory of the 
PDE, whereas $\tilde{\varphi}_T^h$ is the trajectory of splitting methods with stochastic gradients. 
\cite{Betancourt:arxiv15} shows that in this case $\varphi_T (\Xb_0)$ is not close to $\tilde{\varphi}_T^h$ in general.
In the section we extend this result by showing that the conclusion also
holds in the SDE case. We comment that this result is not as surprising as pointed out in \cite{Betancourt:arxiv15} 
because as pointed out in the introduction, such sample wise convergence is not interesting in most real applications.

\begin{theorem}\label{theo:flowerror}
	In SGHMC with the symmetric splitting integrator, the difference between the stochastic mean flow operator $\tilde{\varphi}_T^h$
	and the exact flow operator $\varphi_T $ depends on the running time $T$ and stochastic gradients in each minibatch,
	and is given by the following formula,
	\begin{align*}
		\left\|\tilde{\varphi}_T^h - \varphi_T \right\| = C\left\|\frac{1}{L}\sum_{l=1}^L\Delta V_l + h \left([\mathcal{L}, \Delta V_1] + [\mathcal{L}, \Delta V_L]\right)\right\|T + O(h^2)~,
	\end{align*}
	for some positive constant $C$.
\end{theorem}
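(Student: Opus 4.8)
The plan is to carry out the sample-path argument of \cite{Betancourt:arxiv15} at the level of \emph{operators} (mean flows) rather than trajectories, expanding to first order in the per-minibatch perturbations $\Delta V_l = \tilde{\mathcal{L}}_l - \mathcal{L}$ and to leading order in the step size $h$. Write $\varphi_T = e^{T\mathcal{L}} = \bigl(e^{h\mathcal{L}}\bigr)^L$ and $\tilde{\varphi}_T^h = \prod_{l=L}^{1} e^{h\tilde{\mathcal{L}}_l}$ with $\tilde{\mathcal{L}}_l = \mathcal{L}+\Delta V_l$; since Lemma~\ref{lem:splitting} gives $\tilde{P}_h^l = e^{h\tilde{\mathcal{L}}_l} + O(h^3)$ uniformly, the symmetric-splitting mean flow differs from $\tilde{\varphi}_T^h$ by only $O(Lh^3)=O(h^2)$, so it suffices to compare $\prod_l e^{h\tilde{\mathcal{L}}_l}$ with $e^{T\mathcal{L}}$. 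First I would telescope the difference,
\[
  \tilde{\varphi}_T^h - \varphi_T = \sum_{k=1}^{L} e^{(L-k)h\mathcal{L}}\,\bigl(e^{h\tilde{\mathcal{L}}_k}-e^{h\mathcal{L}}\bigr)\,\Bigl(\textstyle\prod_{l=k-1}^{1} e^{h\tilde{\mathcal{L}}_l}\Bigr),
\]
and use $e^{h\tilde{\mathcal{L}}_k}-e^{h\mathcal{L}} = h\Delta V_k + O(h^2)$ together with $\prod_{l<k} e^{h\tilde{\mathcal{L}}_l} = e^{(k-1)h\mathcal{L}} + O(\|\Delta V\|)$ (the latter correction being second order in the gradient noise, hence dropped) to collapse the difference to the single first-order term $h\sum_{k=1}^{L} e^{(L-k)h\mathcal{L}}\,\Delta V_k\, e^{(k-1)h\mathcal{L}}$, modulo $O(h^2)$ and $O(\|\Delta V\|^2)$.

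Next I would \emph{normal-order} each summand by conjugating $\mathcal{L}$ to the left, $e^{(L-k)h\mathcal{L}}\Delta V_k e^{(k-1)h\mathcal{L}} = e^{T\mathcal{L}}\,e^{-(k-1)h\,\mathrm{ad}_{\mathcal{L}}}(\Delta V_k)$, and Taylor-expand the adjoint action, $e^{-(k-1)h\,\mathrm{ad}_{\mathcal{L}}}(\Delta V_k) = \Delta V_k - (k-1)h\,[\mathcal{L},\Delta V_k] + (\text{higher iterated commutators})$. Summing over $k$, the zeroth term gives $h\sum_k \Delta V_k = T\cdot\frac{1}{L}\sum_k\Delta V_k$ — this is how the overall factor $T$ and the \emph{averaged} stochastic-gradient bias enter — while the linear term gives $-h\sum_k (k-1)h\,[\mathcal{L},\Delta V_k]$. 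Here the symmetric (ABOBA) structure of the integrator is what I expect to do the real work: because each interior step applies its half-step of $\mathcal{A}$ symmetrically on both sides and consecutive half-steps fuse, the commutator weights can be re-symmetrized about the midpoint of the trajectory, the bulk cancels in pairs, and only the two endpoint contributions $[\mathcal{L},\Delta V_1]$ and $[\mathcal{L},\Delta V_L]$ (each carrying one surviving factor of $h$ after the rescaling $Lh=T$) remain, everything else being absorbed into $O(h^2)$. Taking operator norms and folding the bounded factor $\|e^{T\mathcal{L}}\|$ (geometric ergodicity of the semigroup on the weighted space of Assumption~\ref{ass:assumption1}) into the constant $C$ then yields the stated identity.

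The main obstacle — the step needing the most care — is the uniform control of everything discarded over the $L=T/h$ minibatches: each telescoped slice contributes an $O(h^2)$ Taylor remainder in $e^{h\tilde{\mathcal{L}}_k}-e^{h\mathcal{L}}-h\Delta V_k$, an $O(h^3)$ BCH/exponential-map remainder, and higher iterated commutators $\mathrm{ad}_{\mathcal{L}}^{\,\ge 2}$ whose prefactors $((k-1)h)^j$ are \emph{not} uniformly small in $k$, so one must show that after summation these collapse to $O(h^2)$ rather than accumulating. This is precisely what the growth/boundedness conditions on $F$ (Appendix~\ref{sec:KBE}) and the Lyapunov-type bounds of Assumption~\ref{ass:assumption1} are for: they let the exponential maps and all the commutator series be truncated with remainders dominated by $\mathcal{V}^p$, whose expectation along the chain is bounded. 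The secondary subtlety is the combinatorial bookkeeping that singles out $[\mathcal{L},\Delta V_1]$ and $[\mathcal{L},\Delta V_L]$ from the weighted commutator sum; without invoking the symmetry of the splitting one would only recover the coarser estimate $\|\tilde{\varphi}_T^h-\varphi_T\| \lesssim T\bigl(\frac1L\sum_l\|\Delta V_l\| + h\max_l\|[\mathcal{L},\Delta V_l]\|\bigr)+O(h^2)$, which already captures the order of magnitude in $T$ and $h$ and hence the qualitative message that these mean flows do not coincide — echoing \cite{Betancourt:arxiv15}.
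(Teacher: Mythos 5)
Your overall strategy --- discarding the $O(Lh^3)=O(h^2)$ splitting error via Lemma~\ref{lem:splitting} so that it suffices to compare $\prod_l e^{h\tilde{\mathcal{L}}_l}$ with $e^{T\mathcal{L}}$, and then extracting the contribution that is first order in the $\Delta V_l$ --- is the same as the paper's, merely organized as a Duhamel/telescoping sum followed by an adjoint expansion rather than as an iterated Baker--Campbell--Hausdorff computation on the exponents. Both routes correctly produce the leading term $T\cdot\frac{1}{L}\sum_{l}\Delta V_l$ and the $O(h^2)$ bookkeeping for the local integrator error.

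The genuine gap is the step that reduces the weighted commutator sum to the two endpoint terms. After your normal-ordering you are left with $-h^2\sum_{k=1}^{L}(k-1)\,[\mathcal{L},\Delta V_k]$ (the symmetrized BCH form is $\tfrac{h^2}{2}\sum_{k}(2k-L-1)\,[\mathcal{L},\Delta V_k]$, coming from $\tfrac12\sum_{i<j}[X_i,X_j]$ with $X_l=h\tilde{\mathcal{L}}_l$). The weights here are of size up to $(L-1)h^2\approx Th$ for \emph{every} $k$, not only for $k=1$ and $k=L$, so the sum is generically of order $T^2\max_k\|[\mathcal{L},\Delta V_k]\|$ unless the interior terms cancel. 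The cancellation you invoke does not follow from the ABOBA symmetry: that symmetry controls the $O(h^3)$ local error of a \emph{single} step $e^{h\tilde{\mathcal{L}}_l}$, whereas the weighted commutators arise from reordering \emph{distinct} generators $\tilde{\mathcal{L}}_k$ across minibatches, and the operators $[\mathcal{L},\Delta V_k]$ for different $k$ admit no pairing under which the $(k-1)$ and $(L-k)$ contributions cancel. As written, your argument therefore establishes only the coarser estimate you concede at the end, not the stated formula. To be fair, the paper's own proof has exactly the same weakness --- it computes the $L=2$ and $L=3$ cases by BCH and then asserts ``similarly\ldots after simplification'' that only $[\mathcal{L},\Delta V_1]$ and $[\mathcal{L},\Delta V_L]$ survive, which the direct computation above does not support --- so you have faithfully reproduced the argument, including its unjustified step; but neither your re-symmetrization claim nor the paper's induction actually closes it.
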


We can see from Theorem~\ref{theo:flowerror} that $\tilde{\varphi}_T^h$ is not close to $ \varphi_T$ because of the
uncontrollable terms $\Delta V_l$ with stochastic gradients, thus SG-MCMCs are not sample-wise convergence.

\begin{proof}

First, applying Kolmogorov's backward equation on the original SDE \eqref{eq:itodiffusion} 
with generator $\mathcal{L}$, the true mean flow $\varphi_T(\Xb_0)$ can be expressed as:
\begin{align}
	\varphi_T(\Xb_0) = e^{T\mathcal{L}}(\Xb_0)~.
\end{align}

Now we want to compute the mean flow of the splitting scheme: $\circ_{l=1}^L \hat{\varphi}_{lh}(\Xb_0)$.
We will split the SDE into several parts, with the Brownian motion term going with the stochastic gradient term.
To shown the proof on a different SG-MCMC algorithm, we use the SGHMC with Riemannian information geometry (SGRHMC) 
defined below. Other stochastic gradient MCMC follows similarly. For the SGRHMC, we have

\begin{align}\label{eq:BAOAB}
	\mathrm{d}\left[ \begin{array}{c}
		\thetab \\
		\pb \end{array} \right]
		= &\underbrace{\left[ \begin{array}{c}
		0 \\
		-\left(\nabla_{\thetab}U(\thetab) + \frac{1}{2}\nabla_{\thetab} \log \det G(\thetab)\right)\mathrm{d}t + \sqrt{2D}\mathrm{d}W \end{array} \right]}_{B} \nonumber\\
		&+\underbrace{\left[ \begin{array}{c}
		0 \\
		-DG(\thetab)^{-1}\pb \end{array} \right] \mathrm{d}t}_{A}
		+ \underbrace{\left[ \begin{array}{c}
		G(\thetab)^{-1} \pb \\
		\nu(\thetab, \pb) \end{array} \right] \mathrm{d}t}_{O}
\end{align}

The splitting scheme we consider is the BAOAB scheme. Denote
\begin{align*}
	\mathcal{B} &= \mathcal{L}_B = -\left(\nabla_{\thetab}U(\thetab) + \frac{1}{2}\nabla_{\thetab} \log \det G(\thetab)\right) \cdot \nabla_{\pb} + 2D \bigtriangleup_{\pb} \\
	\mathcal{A} &= \mathcal{L}_{A} = -DG^{-1}\pb \cdot \nabla_{\pb} \\
	\mathcal{O} &= \mathcal{L}_O = G^{-1}\pb \cdot \nabla_{\thetab} + \nu \cdot \nabla_{\pb}~.
\end{align*}

Note that $\mathcal{L} = \mathcal{A} + \mathcal{B} + \mathcal{O}$. In the stochastic gradient case, 
we are using the stochastic gradient from the $l$-th minibatch in the splitting scheme, thus we need to 
modify the operator $\mathcal{B}$ as:
\begin{align*}
	\mathcal{B}_l \triangleq \mathcal{L}_{\mathcal{B}_l} = -\left(\nabla_{\thetab}\tilde{U}_l(\thetab) + \frac{1}{2}\nabla_{\thetab} \log \det G(\thetab)\right) \cdot \nabla_{\pb} + 2D \bigtriangleup_{\pb}~,
\end{align*}
where $\nabla_{\thetab}\tilde{U}_l$ is evaluated on a subset of data. We emphasis the notation that 
$\Delta V_l \triangleq \mathcal{B}_l -\mathcal{B} = \left(\nabla_{\thetab}\tilde{U}_l - \nabla_{\thetab} U\right) \cdot \nabla_{\pb}$,
it can be shown that $\Delta V_l$ commutes with each other, {\it e.g.}, $\Delta V_i \Delta V_j = \Delta V_j \Delta V_i$.

We know from Section~\ref{sec:SS_integrator} that using the symmetric splitting integrator,
the mean flow $\tilde{\varphi}_{h}^{l}$ is close to $e^{h(\mathcal{L} + \Delta V_l)}$ with a $O(h^3)$ error, {\it i.e.},
\begin{align*}
	\varphi_{h}^{l} = e^{h(\mathcal{L} + \Delta V_l)} + O(h^3)~.
\end{align*}

Similar to the proof of the symmetric splitting error, we can calculate the composition of the mean flows
for two mini-batches $i$ and $j$ using the BCH formula as:
\begin{align*}
	\tilde{\varphi}_{h}^{j} \circ \tilde{\varphi}_{h}^{i} &= e^{h(\mathcal{B} + \mathcal{A} + \mathcal{O}) + h \Delta V_i} \circ e^{h(\mathcal{B} + \mathcal{A} + \mathcal{Z}) + h \Delta V_j} + O(h^3) \\
	&= e^{2h \mathcal{L} + h(\Delta V_i + \Delta V_j) + \frac{h^2}{2}[\mathcal{L} + \Delta V_j, \mathcal{L} + \Delta V_i]} + O(h^3) \\
	&= e^{2h \mathcal{L} + h(\Delta V_i + \Delta V_j) + \frac{h^2}{2}\left([\mathcal{L}, \Delta V_i] + [\Delta V_j, \mathcal{L}]\right)} + O(h^3)~,
\end{align*}
where we have used the fact that $\{\Delta V_i\}$ commutes with each other to cancel out the $[\Delta V_i, \Delta V_j]$ term
in the BCH formula. Similarly, for the first three mini-batches $i, j, k$, we have
\begin{align*}
	\tilde{\varphi}_{h}^{k} \circ \tilde{\varphi}_{h}^{j} \circ \tilde{\varphi}_{h}^{i} &= e^{h(\mathcal{B} + \mathcal{A} + \mathcal{O}) + h \Delta V_i} \circ e^{h(\mathcal{B} + \mathcal{A} + \mathcal{O}) + h \Delta V_j} + O(h^3) \\
	&= e^{3h \mathcal{L} + h(\Delta V_i + \Delta V_j + \Delta V_k) + h^2 \left([\mathcal{L}, \Delta V_i] + [\mathcal{L}, \Delta V_k]\right)} + O(h^3)~.
\end{align*}
Similarly, we can do the composition for the entire trajectory, resulting in after simplification:
\begin{align}\label{eq:meanflow}
	\circ_{l=1}^L \tilde{\varphi}_{h}^{l} &= e^{(Lh) \mathcal{L} + (Lh)\frac{1}{L}\sum_{l=1}^L\Delta V_l + (Lh)h \left([\mathcal{L}, \Delta V_1] + [\mathcal{L}, \Delta V_L]\right)} + (Lh)O(h^2) \nonumber\\
	&= e^{T \mathcal{L} + T\frac{1}{L}\sum_{l=1}^L\Delta V_l + T h \left([\mathcal{L}, \Delta V_1] + [\mathcal{L}, \Delta V_L]\right)} + O(h^2)
%	&= e^{\left(- B_1 + \epsilon B_2\right)T}(x_0) \varphi_{T}(x_0) + O(h^2)
\end{align}

This completes the first part of the theorem. From Assumption~\ref{ass:assumption1},
we can expand and bound \eqref{eq:meanflow} with the step size $h$ for finite time $T$ as:
\begin{align*}
	\tilde{\varphi}_{T}(\Xb_0) = \left(T \mathcal{L} + T\frac{1}{L}\sum_{l=1}^L\Delta V_l + T h \left([\mathcal{L}, \Delta V_1] + [\mathcal{L}, \Delta V_L]\right)\right)(\Xb_0) + O(h^2)~.
\end{align*}
Similarly, for the true mean flow $\varphi_{T}(\Xb_0)$, it is easy to get
\begin{align*}
	\varphi_{T}(\Xb_0) = \underbrace{e^{h\mathcal{L}} \circ e^{h\mathcal{L}} \circ \cdots \circ e^{h\mathcal{L}}}_{L}
	= T \mathcal{L}(\Xb_0) + O(h^2)~.
\end{align*}

As a result:
\begin{align*}
	\|\varphi_{T}(\Xb_0) - \tilde{\varphi}_{T}(\Xb_0)\| &= \left\|\left(T\frac{1}{L}\sum_{l=1}^L\Delta V_l + T h \left([\mathcal{L}, \Delta V_1] + [\mathcal{L}, \Delta V_L]\right)\right)(\Xb_0) + O(h^2)\right\| \\
	&= \left\|\left(\sum_{l=1}^L\Delta V_l + T\left([\mathcal{L}, \Delta V_1] + [\mathcal{L}, \Delta V_L]\right)\right)(\Xb_0)\right\|h + O(h^2) \\
	&= C\left\|\frac{1}{L}\sum_{l=1}^L\Delta V_l + h \left([\mathcal{L}, \Delta V_1] + [\mathcal{L}, \Delta V_L]\right)\right\|T + O(h^2)
\end{align*}
This completes the proof.
\end{proof}

\section{Additional Experiments}\label{sec:extra_exp}

\subsection{Synthetic data}

We plot the traces of bias and MSE with step size $h \propto L^{\alpha}$ for different rates $\alpha$
in Figure~\ref{fig:bias_mse_gau_rates1}. We can see that when the rates are smaller than the theoretically 
optimal bias rates $\alpha = -1/3$ and MSE rate $\alpha = -1/5$, the bias and MSE tend to decrease faster
than the optimal rates at the beginning, but eventually they slow down and are surpassed by the optimal rates. 
This on the other hand suggests if only a small number of iterations were available in the SG-MCMCs, setting a larger
step size than the theoretically optimal one might be beneficial in practice.

\begin{figure}[t!]
%\vskip -0.1in
\centering
\subfloat[Bias]{
  \includegraphics[width=0.495\linewidth]{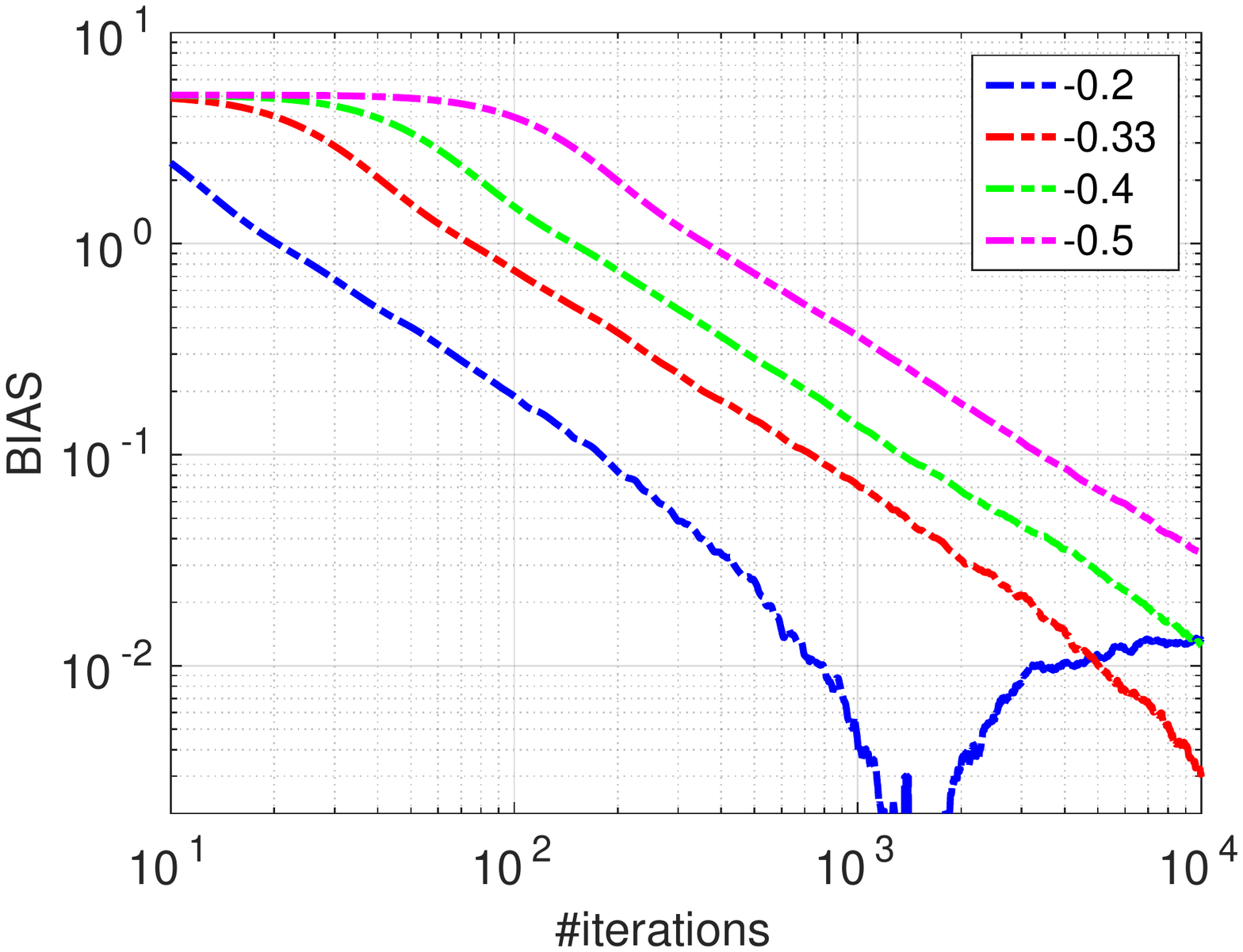}
}
\subfloat[MSE]{
  \includegraphics[width=0.495\linewidth]{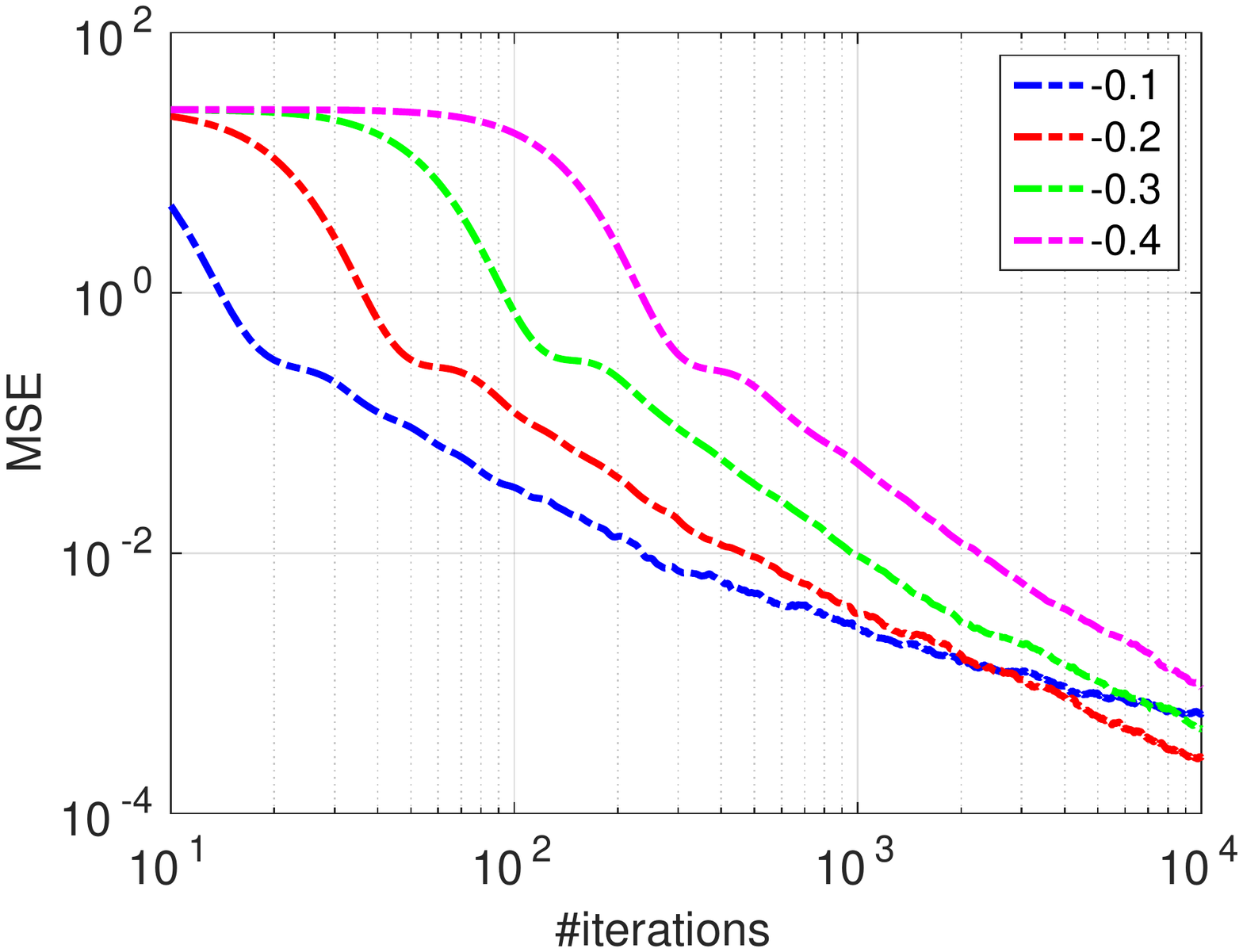}
}
%\vskip -0.1in
\caption{{\em Bias} and {\em MSE} for SGHMC with different step size rates.}
\label{fig:bias_mse_gau_rates1}
%\vskip -0.1in
\end{figure}

In addition, Figure~\ref{fig:bias_mse_gau1} shows a comparison of the bias and MSE for SGHMC and SGLD. 
The step sizes are set to $h = C L^{-\alpha}$, with $\alpha$ choosing according to the theory for SGLD and 
SGHMC respectively. To be fair, the constants $C$ are selected via a grid search from 1e-3 to 0.5 with
an interval of 2e-3 for $L = 200$, it is then fixed during other $L$ values. The parameter $D$ in SGHMC 
is selected within $(10, 20, 30)$ as well. As indicated by both our theorems and experiments, SGHMC 
endows a much faster convergence speed than SGHMC on both the bias and MSE.

\begin{figure}[t!]
%\vskip -0.1in
\centering
\subfloat[Bias]{
  \includegraphics[width=0.495\linewidth]{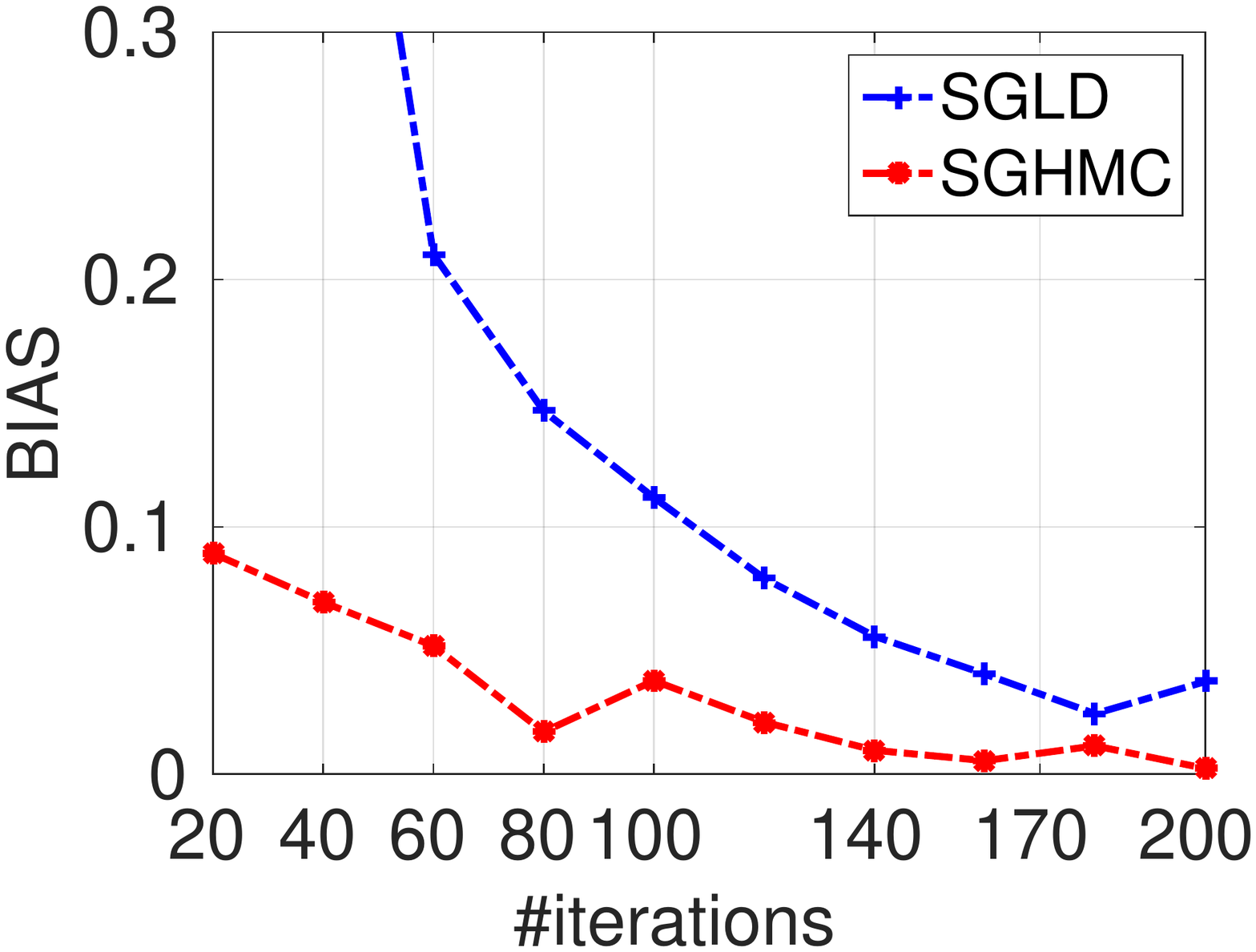}
}
\subfloat[MSE]{
  \includegraphics[width=0.495\linewidth]{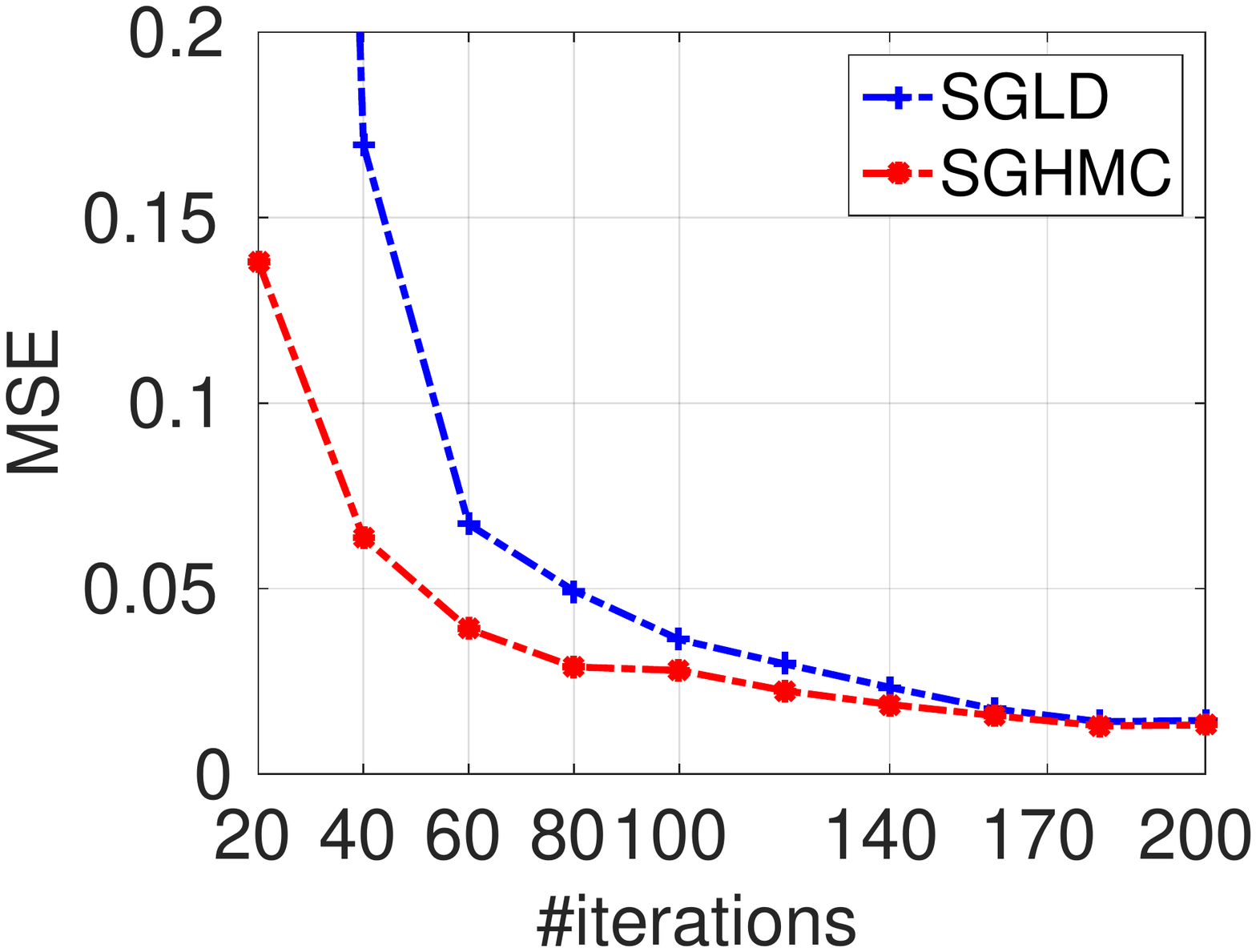}
}
%\vskip -0.1in
\caption{Comparisons of {\em bias} and {\em MSE} for SGHMC and SGLD on a simple Gaussian model.}
\label{fig:bias_mse_gau1}
%\vskip -0.1in
\end{figure}

Figure~\ref{fig:bias_mse_gau_decrease_rates1} plots the traces of bias and MSE with decreasing step sizes 
$h \propto l^{\alpha}$ for different rates $\alpha$ in the same Gaussian model. Again we can see that the
optimal decreasing rates agree with the theory.
Figure~\ref{fig:bias_mse_dec_gau1} shows a comparison of bias and MSE for SGHMC and SGLD with
decreasing step sizes $h \propto l^{-\alpha}$ on the same Gaussian model. We follow the same procedure as in 
Section~\ref{sec:finitetimeerr} to select parameters for SGLD and SGHMC. Specifically, the decreasing rate
parameter $\alpha$ is set to $1/2$ and $1/3$ in SGLD and SGHMC for the bias, $1/3$ and $1/5$ for the MSE. 
We can see that SGHMC still obtain a faster convergence speed, though the benefit is not as large as 
using fix step size.

\begin{figure}[t!]
%\vskip -0.1in
\centering
\subfloat[Bias]{
  \includegraphics[width=0.495\linewidth]{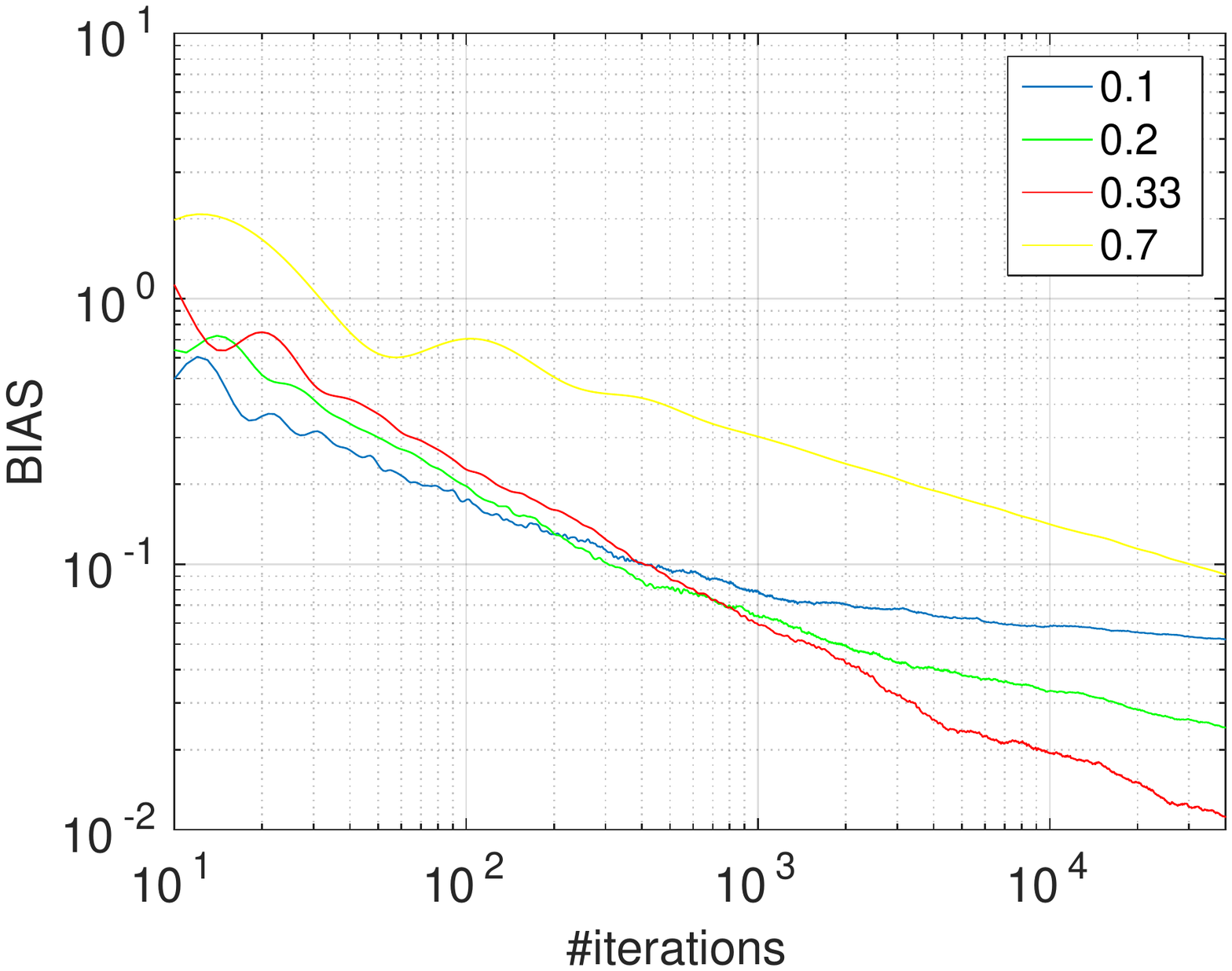}
}
\subfloat[MSE]{
  \includegraphics[width=0.495\linewidth]{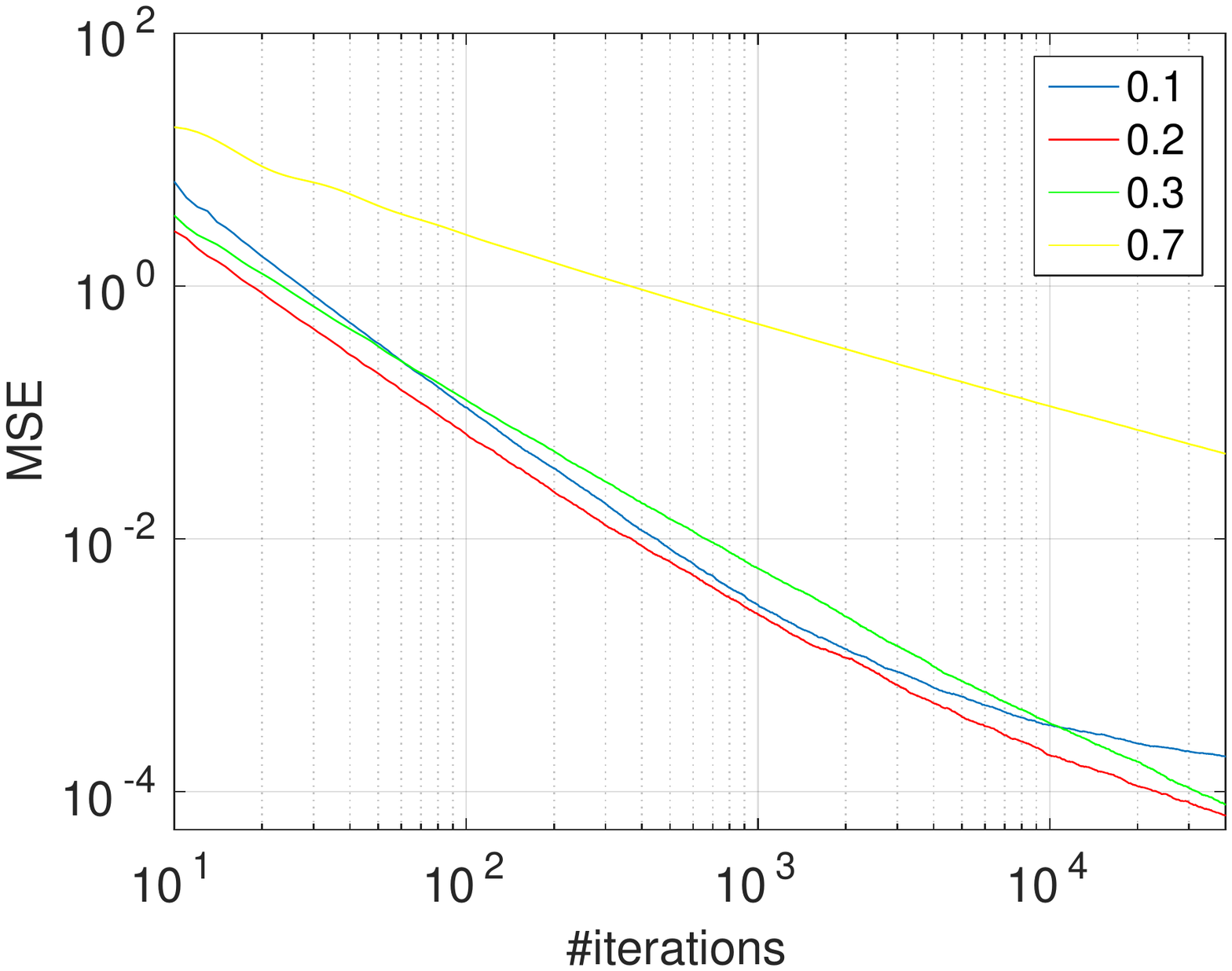}
}
%\vskip -0.1in
\caption{{\em Bias} and {\em MSE} for decreasing step size SGHMC with different step size rates.}
\label{fig:bias_mse_gau_decrease_rates1}
%\vskip -0.1in
\end{figure}

\begin{figure}[t!]
%\vskip -0.1in
\centering
\subfloat[Bias]{
  \includegraphics[width=0.495\linewidth]{BIAS_DECREASE_SGHMC_SGLD.pdf}
}
\subfloat[MSE]{
  \includegraphics[width=0.495\linewidth]{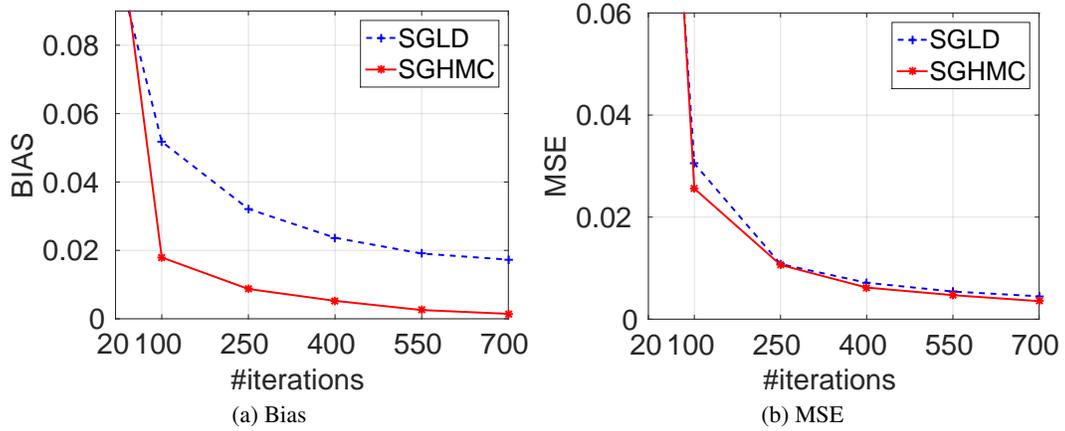}
}
%\vskip -0.1in
\caption{Comparisons of {\em Bias} and {\em MSE} for SGHMC and SGLD with decreasing step sizes on a simple Gaussian model.}
\label{fig:bias_mse_dec_gau1}
%\vskip -0.1in
\end{figure} 

\subsection{LDA \& SBN}

We first the list quantitative results of the LDA and SBN models in Table~\ref{tab:LDASBN}. 
It is clear that in both models the SGHMC is much better than the SGLD due to the introduction 
of momentum variables in the dynamics (similar to the SGD with momemtum \cite{ChenFG:ICML14} 
in the optimization literature); and the splitting integrator also works better than the Euler integrator
due to the higher order errors in splitting integrators. For a fair comparison, we did not consider
a better version of the SGLD with Riemannian information geometry of posterior distributions on
probabilistic simplexes \cite{PattersonT:NIPS13}.

\begin{table}[hbpt]
\centering
\caption{Comparisons for different algorithms. $K$ in LDA means \#topics, $J$ in SBN 
	means \#hidden units; suffix `S' means the {\em symmetric splitting integrator}, 
	`E' means the {\em Euler integrator}.}\vspace{-0.3cm}
  \begin{tabular}{c || c | c | c || c || c | c | c}
    \hline
     \multicolumn{4}{c||}{LDA (Test perplexity)} & \multicolumn{4}{|c}{SBN (Test neg-log-likelihood)} \\ \hline \hline
    K  & SGHMC-S & SGHMC-E & SGLD-E & J & SGHMC-S & SGHMC-E & SGLD-E \\ \hline
    200 & {\bf 1168} & 1180 & 2496 & 100 & {\bf 103} & 105 & 126 \\ \hline
    500 & {\bf 1157} & 1187 & 2511 & 200 & {\bf 98} & 100 & 110 \\ \hline
  \end{tabular}
\label{tab:LDASBN}
\end{table}

Next a plot of the test perplexities decreasing with the number of documents processed for the whole dataset 
is given in Figure~\ref{fig:LDA_euler_split} (top), for a comparison of the Euler integrator and the proposed
symmetric splitting integrator. We can see that the symmetric splitting integrator decreases faster than the
Euler integrator. Furthermore, the dictionary learned by the SGHMC with the symmetric splitting integrator
is also given in Figure~\ref{fig:LDA_euler_split} (bottom).

\begin{figure}[t!]
%\vskip -0.1in
	\begin{center}
	\begin{minipage}{0.99\linewidth}
		\centering
		\includegraphics[width=0.8\linewidth]{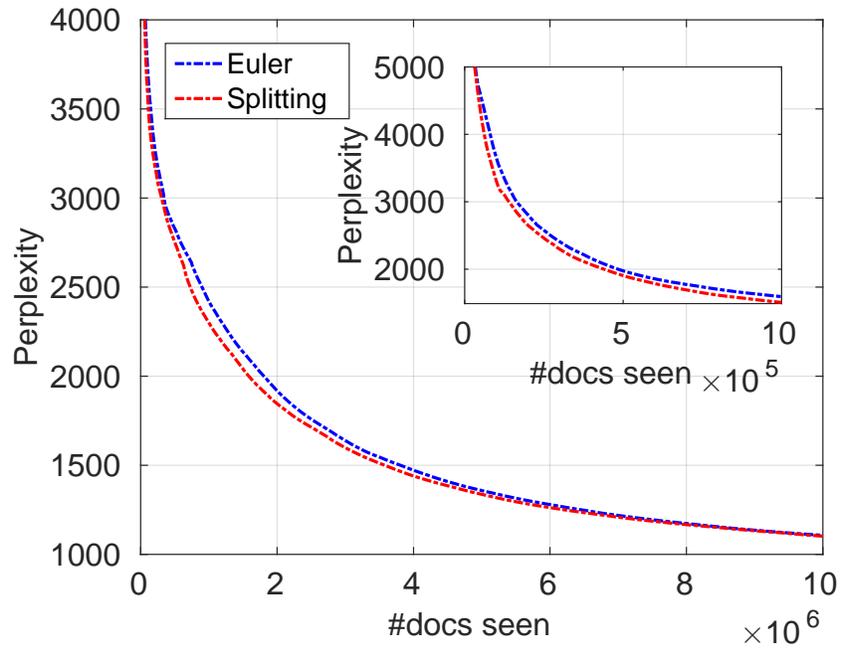}
	\end{minipage}\vspace{2cm}
	\begin{minipage}{0.99\linewidth}
		\centering
		\includegraphics[width=0.7\linewidth]{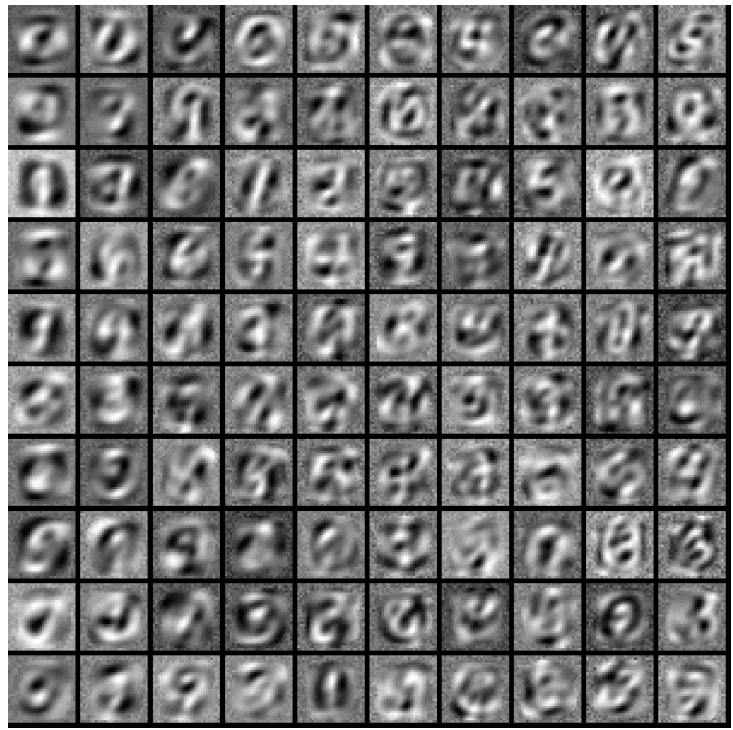}
	\end{minipage}
	\end{center}
%\vskip -0.1in
\caption{Top: comparisons of Splitting and Euler methods in LDA. Bottom: Dictionary learned by SGHMC in SBN.}
\label{fig:LDA_euler_split}
%\vskip -0.1in
\end{figure} 

\end{document}